\documentclass{article}

\usepackage[utf8]{inputenc}
\usepackage[T1]{fontenc}

  \usepackage{mathpazo}
  \usepackage[letterpaper,margin=1in]{geometry}
\usepackage[
  backend=biber,
  style=alphabetic,
  natbib=true,
  maxbibnames=99,
   maxalphanames=3,
  minalphanames=3
]{biblatex}
  \date{}

\usepackage{microtype}
\usepackage{graphicx}
\usepackage{subcaption}
\usepackage{booktabs}
\usepackage{tabularx}
\usepackage{float}
\usepackage{needspace}
\usepackage{amsmath}
\usepackage{amssymb}
\usepackage{amsfonts}
\usepackage{mathtools}
\usepackage{amsthm}
\usepackage{comment}
\usepackage{nicefrac}
\usepackage{xcolor}
\usepackage{url}
\usepackage{hyperref}
  \hypersetup{colorlinks=true,allcolors=blue}
\usepackage[capitalize,noabbrev]{cleveref}

\theoremstyle{plain}
\newtheorem{theorem}{Theorem}[section]
\newtheorem{informaltheorem}[theorem]{Informal Theorem}

\theoremstyle{definition}

\theoremstyle{remark}

\usepackage[disable,textsize=tiny]{todonotes}

\title{High-Value Training Data from Noisy, Diverse Sources}

\author{%
  Hedi Driss$^{1,*}$, 
  Evan Frick$^{2,*}$, 
  Anastasios N. Angelopoulos$^2$,
  Cho-Jui Hsieh$^{2,3}$, \\
  Yejin Huh$^{1}$,
  Tijana Zrnic$^{2,4}$,
  Andrew Ilyas$^{1,5}$ \\[1em]
  $^1$Hiddenweights
  \quad 
  $^2$Arena 
  \quad 
  $^3$UCLA 
  \quad 
  $^4$Stanford 
  \quad 
  $^5$CMU\\
  {\normalsize $^*$Equal contribution
  }
}

\begin{document}

\begingroup
\setlength{\parskip}{3pt plus 1pt minus 0.5pt}
\setlength{\topsep}{3pt plus 1pt minus 1pt}
\setlength{\partopsep}{0pt}
\setlength{\itemsep}{1pt plus 0.5pt minus 0.5pt}
\setlength{\parsep}{0pt}
\captionsetup{font=small,skip=6pt}
\captionsetup[subfigure]{font=small,skip=4pt}
\setlength{\textfloatsep}{8pt plus 2pt minus 2pt}
\setlength{\floatsep}{8pt plus 2pt minus 2pt}
\setlength{\intextsep}{8pt plus 2pt minus 2pt}
\setcounter{topnumber}{3}
\setcounter{bottomnumber}{3}
\setcounter{totalnumber}{5}
\renewcommand{\topfraction}{0.95}
\renewcommand{\bottomfraction}{0.9}
\renewcommand{\textfraction}{0.05}
\renewcommand{\floatpagefraction}{0.75}
\makeatletter
\setlength{\@fptop}{0pt}
\setlength{\@fpsep}{8pt plus 2pt minus 2pt}
\makeatother
\raggedbottom

\maketitle

\begin{abstract}
Training state-of-the-art models increasingly requires learning from large, heterogeneous datasets whose examples vary in quality, domain relevance, and source of supervision. Dataweighting---assigning different weights to training examples---offers a general way to control how such data influences training, but learning effective weights at scale remains challenging. We develop a scalable approach to learning dataweights: given a small, high-quality target dataset, we use metagradients through the training process to efficiently optimize training-example weights directly for performance on the target domain. To scale beyond a fixed dataset and model, we learn parametric weighting rules that generalize to new and streaming examples and transfer from smaller models to substantially larger ones.
We study reward modeling as a prototypical application, where preference data
may be noisy, adversarial, synthetic, or mismatched with the target domain.
Learned dataweights improve held-out performance, automatically adapt reward
models to specialized domains, and allow mixtures of human and synthetic
supervision to outperform either source~alone.
\end{abstract}

\section{Introduction}

In modern machine learning, we increasingly want to train reliable, specialized models using heterogeneous, general-purpose data. For example, large language models are trained on massive corpora scraped from the internet; vision models are trained on heterogeneous collections of web images. Domain-specific models are often adapted from such general-purpose models using much smaller, specialized datasets. This process poses several challenges, including low-quality or missing data, distribution shift, and mismatches between the training objective and the downstream evaluation criterion.

Standard approaches to these challenges are often bespoke. For example, we might design heuristic filters to remove low-quality data \citep[e.g.,][]{penedo2023refinedweb}; synthesize labels where they are missing \citep[e.g.,][]{xie2020self}; or explore data mixing strategies between general and target-specific data \citep[e.g.,][]{xie2023doremi}. While these techniques can be effective, they require substantial problem-specific design and often rely on human judgments about what constitutes ``good'' or ``bad'' data---judgments that may themselves be unreliable or unavailable in many domains. As the need for customizable models and the amount of available general-purpose data grow, it is natural to seek a more scalable approach.

An attractive general paradigm is to assign a weight to each training example \citep{huang2006correcting,ren2018learning,shu2019meta}. We call this practice {\em dataweighting}. In theory, dataweighting can reduce the influence of outliers and low-quality examples and reshape the effective training distribution to better align with the downstream task. In practice, however, choosing the right weights is difficult, particularly when little is known a priori about the relationship between the training data and the target domain. In this paper, we therefore ask:
\begin{center}
\emph{Is there a \underline{scalable} way to \underline{learn} dataweights for training on diverse, heterogeneous data?}
\end{center}

\paragraph{Contributions.} In this work, we revisit {\em dataweighting} as a unified,
scalable way of reducing both the human effort and subjective
decision-making involved in training specialized models. 
Our central premise is that a training sample's utility should be
determined by its impact on the performance
of the resulting model,
rather than by external heuristics. Instead of
predefining quality criteria, we infer data usefulness directly from
the learning process itself, and reweight the data's impact appropriately.

We provide 
algorithms and principles for instantiating \textcolor{black}{dataweighting} \emph{at scale}. We leverage tools from data attribution
(and specifically metadifferentiation \citep{ilyas2025magic,engstrom2025optimizing})
to quantify how individual training examples influence a model's behavior.
In this way, data reweighting becomes an adaptive procedure that
is directly aligned with the target objective.
We show how to go beyond a fixed training dataset and training architecture by implementing a {parametric} {\em dataweighting model} and showing that dataweights learned with a smaller model can transfer to substantially larger ones. 

We identify reward modeling as a prototypical application of the challenges we study. Preference data is heterogeneous and may contain spam, bot-generated responses, inconsistent judgments, or subjective preferences. Large pretrained language models offer an attractive source of auxiliary supervision, but their labels can introduce additional uncertainty and bias. Moreover, the reward models' training objective (Bradley-Terry loss) is often not aligned with
the downstream uses that we care about, such as providing accurate model rankings.

Our experiments with human and synthetic preference data support four main takeaways,
validated \emph{at scale} with models whose size is on the order of billions of parameters:
\begin{enumerate}
\item \textbf{Improved performance with heterogeneous, noisy data.}
Learned dataweights improve model performance by upweighting helpful data points and downweighting harmful ones, such as outliers and adversarial examples.
\item \textbf{Domain customization.}
When the target dataset comes from a specialized domain such as coding or STEM question answering, dataweights automatically identify relevant examples within diverse training data.
\item \textbf{Effective integration of synthetic supervision.}
When data consists of a mix of human and synthetic labels, dataweights can automatically adapt to the quality of the synthetic labels, upweighting the helpful ones and downweighting the harmful ones. This results in improved performance over both (a) using real human data alone, and (b) using \emph{all} human and synthetic data while treating them equally.
\item \textbf{Alignment with downstream objectives.}
Dataweighting allows us to close the gap between the training loss 
and the downstream task we care about; in the reward modeling case, we can optimize dataweights for model ranking accuracy while retaining the standard Bradley--Terry training objective.
\end{enumerate}

Finally, in Section \ref{sec:theory} we provide a few stylized examples
illustrating these same benefits of dataweighting in theory, with the goal of 
providing further intuition about how dataweights work.

Our work builds on validation-guided example reweighting~\citep{ren2018learning,shu2019meta,calian2025datarater}
and scalable metagradient computation~\citep{engstrom2025optimizing,ilyas2025magic}.
We investigate how these tools can improve training through continuous
reweighting over long training segments, transfer of learned weighting functions
to new data and larger models, and optimization of downstream ranking objectives;
Section~\ref{sec:related} gives the detailed comparisons.

\section{Learning dataweights at scale}
\label{sec:dataweighting}

We describe the general problem of model training from noisy, heterogeneous data using dataweights. We then describe how we instantiate dataweighting for this 
problem at scale.

\subsection{Problem statement}

We are given a training dataset of size $n$ and a
target dataset of size $n'$.
The training dataset is a large, heterogeneous dataset composed of many
potentially noisy sources.
In contrast, the target dataset is a small, highly-curated gold-standard dataset
that represents the target domain.
Our goal is to train a model $\theta$, using all the available samples, that performs
well on the target domain.
Write the training examples as $\{z_i\}_{i=1}^n$ and the target
examples as $\{z'_j\}_{j=1}^{n'}$, with each example in a space $\mathcal{Z}$.
Let $\ell(z;\theta)$ denote the per-example training loss of the model with
parameters $\theta$.

Dataweighting modifies the training objective by assigning weights to each training data point. For any vector of nonnegative weights $w \in \mathbb{R}_{\ge 0}^n$, 
define the reweighted loss function
\begin{equation}
    \label{eq:train-loss}
    L^{w}(\theta) = \sum_{i=1}^{n} w_i \ell(z_i; \theta).
\end{equation}
Also define $\mathcal{A}(w,\theta_0,k)$ to be any iterative  model training
procedure that optimizes the weighted loss $L^w(\theta)$ starting from
initialization $\theta_0$ for $k$ steps; note that here, $\mathcal{A}$ captures
the model architecture, optimizer, hyperparameters, and in general everything
about training other than the weight vector $w$, initialization $\theta_0$, and
step count $k$. Let $T$ be the total number of steps that can be taken by $\mathcal{A}$
(i.e., the total length of the algorithm's learning rate schedule).

Our ultimate objective is to train a model with good performance on the target domain. Let $L'(\theta)$ denote the target objective evaluated
on the model with parameters $\theta$. For example, this could be the average loss on the target set:
\[L'(\theta)=\frac{1}{n'}\sum_{j=1}^{n'}\ell(z'_j;\theta).\]
In general, however, the target objective need not use the same loss as
training; later we explore more complex target objectives.

Our methods leverage the target dataset to learn dataweights that place more mass on relevant training data.
Formally, we learn a choice of weights $w$ such that the  model obtained by optimizing $L^{w}$ performs well on the target loss $L'(\theta)$.
In essence, our dataweights seek to solve the following problem:
\begin{equation}
    \label{eq:dataweights}
    \underset{w\in\mathbb{R}_{\ge 0}^n}{\mathrm{minimize}} \quad L'(\mathcal{A}(w,\theta_0,T)),
\end{equation}
given an initialization $\theta_0$. 
This objective encourages weights that improve target performance
by changing how strongly each training example influences the model. However, solving \eqref{eq:dataweights} outright is computationally infeasible. We discuss strategies that \emph{approximate} solving \eqref{eq:dataweights}, while remaining feasible even for models with billions of parameters.

\subsection{Scalable dataweighting}

We now describe our procedure for approximating \eqref{eq:dataweights} at the scale of 
modern models and datasets. 
A core primitive in our method is computation of the {\em metagradient}
$\nabla_w L'(\mathcal{A}(w,\theta_0,k))$,
the derivative of the target objective with respect 
to the data weights (i.e., {\em through} the algorithm $\mathcal{A}$).
Despite the complexity of computing this quantity, 
a line of work in data attribution 
\citep{koh2017understanding, ghorbani2019data, ilyas2022datamodels, grosse2023studying, park2023trak, ilyas2024data}
offers efficient ways to compute approximate metagradients,
and recent work shows how to compute these gradients {\em exactly} 
for moderate-sized models and datasets \citep{engstrom2025optimizing,ilyas2025magic}.
Using metagradient computation as a primitive, we describe a modular 
framework for learning optimal dataweights.

\paragraph{Backbone: scalable metagradient descent.}
For algorithm $\mathcal{A}$, set of dataweights $w$, 
initial parameters $\theta$, and number of training steps $k < T$,
we take as given the ability to compute a {\em metagradient} of the form 
\[
    g(w, \theta, k) := \nabla_w L'\!\left(\mathcal{A}(w,\theta,k)\right),
\]
which captures the effect of infinitesimally up- or down-weighting each point 
in the training set on the final target objective after training.
Note that we will not discuss the efficient computation of metagradients here,
and instead refer to Section 3.3 of \citet{ilyas2025magic} for a thorough treatment.

With this primitive in hand, our algorithm at a high level will be that of 
{\em metagradient descent} (MGD), as introduced by \citet{engstrom2025optimizing}.
In MGD, we initialize $\smash{w^{(0)} = \mathbf{1}_n}$ and $\theta_0$, and then repeatedly apply
\begin{equation}
    \label{eq:mgd-basic}    
    w^{(t+1)} \gets \left[w^{(t)} - \eta \cdot g\!\left(w^{(t)}, \theta_0, T\right)\right]_+, 
\end{equation}
where $T$ is the number of steps used to train the model and $[\cdot]_+$ represents non-negative 
clipping. 
For clarity, we refer to steps taken by the algorithm $\mathcal{A}$
to update model parameters $\theta$ as {\em inner steps}, 
and metagradient steps taken on dataweights $w$ as {\em outer steps}.
While metagradient descent 
is the basis of our method, we introduce several extensions that 
allow us to apply it to bigger models trained on large, heterogenous, potentially 
streaming training data sets.

\paragraph{Parameterizing dataweights.}
We first extend \eqref{eq:mgd-basic} to large (and potentially
streaming) training datasets by introducing \emph{parametric} dataweights. Formally, parametric dataweights are defined by a function
$\smash{h_\psi: \mathcal{Z} \to \mathbb{R}_{\geq 0}}$, parameterized by $\smash{\psi \in \mathbb{R}^p}$. The corresponding dataweights are then obtained as $\smash{w_i^\psi = h_\psi(z_i)}$. We compute metagradients 
with respect to the underlying parameters $\psi$ via the chain rule, i.e.,
\begin{equation}
    \label{eq:parametric-chain-rule}
\nabla_\psi L'\!\left(\mathcal{A}\!\left(w^\psi, \theta, k\right)\right) = 
\sum_{i=1}^n \left[\nabla_w L'\!\left(\mathcal{A}\!\left(w^\psi, \theta, k\right)\right)\right]_i \cdot \nabla_\psi h_\psi(z_i).
\end{equation}
We recover basic MGD via the {\em identity} parameterization,
where we set $p = n$, such that
\begin{equation}
    \label{eq:np-w}
    \psi \in \mathbb{R}^n \qquad \text{and} \qquad
   w_i^\psi = h_\psi(z_i) = \psi_i.
\end{equation}
There are two main drawbacks to this trivial parameterization.
First, there is one learnable parameter 
per training point (the scalar weight $\psi_i$), 
making the dataweight optimization extremely high-dimensional and overparameterized. 
Second, it is unclear how to handle training data that
arrives only \emph{after} we have trained a model; for example, if we want to retrain or fine-tune the model on newly collected data.

A solution to both of these challenges is to flexibly 
parameterize the dataweights $w$ as a function of features of the
corresponding training point. 
As a concrete example---which we call the {\em linear} parameterization---we let
$\phi: \mathcal{Z} \to \mathbb{R}^{d}$
be a fixed feature map (e.g., a frozen embedding of the example).
We then let
\begin{equation}
\label{eq:parametric-w}
    \psi \in \mathbb{R}^{d+1} \qquad \text{and} \qquad 
    w_i^{\psi} = h_\psi(z_i) = \log(1 + \exp(\phi(z_i)^\top \psi + \psi_0)).
\end{equation}
By choosing $d\ll n$ and optimizing $\psi \in \mathbb{R}^{d+1}$, rather than the per-example weights 
$\psi \in \mathbb{R}^n$, we reduce the 
dimensionality of the optimization problem and also enable extrapolation to 
unseen training points. That is, we can incorporate new data that was not seen 
during the dataweight optimization process by simply assigning it a weight 
using the map $h_\psi$ \eqref{eq:parametric-w}. Thus, even if a model is repeatedly retrained over time on freshly collected data, one can apply dataweighing by learning a parameterized dataweighting map only \emph{once}.

\paragraph{Segmenting the training trajectory.}
The next change we make is to differentiate with respect to \emph{segments} of the
training trajectory rather than over entire training runs. Specifically, rather than 
differentiating through $\mathcal{A}(w, \theta_0, T)$ where $T$ is the total number of 
training steps, we choose a relatively small number of {\em segments} $K$ such that $T \mod K = 0$.
In lieu of \eqref{eq:mgd-basic}, we initialize both $\psi^{(0)}$ {\em and} $\theta^{(0)} = \theta_0$, 
then for $t=1,2,\dots$ we:
\begin{itemize}
    \item[(a)] compute $w^{(t-1)} \gets [h_{\psi^{(t-1)}}(z_i)]_{i=1}^n$, the dataweights 
    induced by $\psi^{(t-1)}$;
    \item[(b)] update the model parameters $\theta^{(t)} \gets \mathcal{A}(w^{(t-1)}, \theta^{(t-1)}, \frac{T}{K})$, resetting $\theta^{(t)} \gets \theta_0$ if $t \mod K = 0$;
    \item[(c)] compute $\nabla_w L'(\mathcal{A}(w^{(t-1)}, \theta^{(t-1)}, \frac{T}{K}))$ and update the underlying $\psi^{(t)}$ using gradient \eqref{eq:parametric-chain-rule}:
    \vspace*{-.2em}
    \begin{equation*}
    \psi^{(t)} \gets \psi^{(t-1)} - \eta \cdot \nabla_\psi L'\left(\mathcal A\left(w^{(t-1)}, \theta^{(t-1)}, \nicefrac{T}{K}\right)\right).
\end{equation*}\vspace*{-1.3em}
\end{itemize}
Every $K$ steps is a full run of $\mathcal{A}$;
we call this a {\em meta-epoch} and use $\Omega$ ($=\!\lfloor \nicefrac{t}{K}\rfloor$) to track the count. 
When $K = 1$ (and we use the identity parameterization \eqref{eq:np-w}), we recover the ordinary metagradient 
descent algorithm. On the other extreme, when $K = T$ (or within a small constant factor of $T$),
we recover a variation on prior approaches to dataweighting, which approximate the value of a given 
datapoint by differentiating through a few steps of the training procedure $\mathcal{A}$ (e.g., 
\citep{ren2018learning,calian2025datarater}). In our experiments, we use $K = \{1, 10, 20\}$
depending on the exact setting. We find that increasing $K$ to be on the order of $T$ significantly 
harms the method's performance, which may partially explain some prior negative results.

Segmentation increases the frequency with which we update the parameter $\psi$
(and therefore the dataweights $w_i$).
In particular, updating the dataweights $t$ times requires running the training 
procedure for $\smash{\frac{t \cdot T}{K}}$ total optimization steps, and so 
increasing $K$ reduces the cost of optimization linearly.

\paragraph{Transferring dataweights across scales.}
We further improve the scalability of the algorithm by leveraging 
{\em cross-algorithm transfer} of learned dataweights $w$. 
The general idea here is that given an algorithm $\mathcal{A}$ that may 
be costly to run (e.g., full-parameter fine-tuning of a 70B-parameter 
language model), we can define a {\em proxy algorithm} $\mathcal{A}'$ 
(e.g., LoRA fine-tuning of a 7B-parameter language model with appropriately scaled hyperparameters) and perform dataweight optimization using the proxy. Thus, we first optimize dataweights $w_i$ using $\mathcal{A}'$, and then retrain a model using  $\mathcal{A}$ by plugging the learned weights into objective \eqref{eq:train-loss}.

\paragraph{Optimizing complex downstream objectives.}
Finally, we explore the design of the target objective $L'(\theta)$.
By default, we defined $L'(\theta)$ as the average loss 
of a model with parameters $\theta$ on a high-quality target set.
Notice, however, that the metagradient 
$\nabla_w L'(\mathcal{A}'(w, \theta, k))$ depends on $L'(\theta)$ only through 
a final ``target gradient'' $\nabla_\theta L'(\theta)$---meaning that 
we can use {\em any} differentiable function of model parameters as 
the target objective. 
Furthermore, because the metagradient computation itself is computationally
expensive relative to evaluation of a trained model, we can encode complex 
measurements of desired behavior into $L'$ (e.g., $L'$ might take 
several minutes to evaluate and thus be totally impractical to train on)
at essentially no wall-clock overhead to our algorithm, so long as 
evaluating and differentiating $L'$ costs less than $\nicefrac{T}{K}$ training steps.

\section{Applying scalable dataweighting to reward models}
\label{sec:results}
\begingroup
\captionsetup{font=small}
\captionsetup[subfigure]{font=small}
\setlength{\textfloatsep}{8pt plus 2pt minus 2pt}
\setlength{\floatsep}{8pt plus 2pt minus 2pt}
\setlength{\intextsep}{8pt plus 2pt minus 2pt}
We identify {\em reward modeling} as an ideal testbed for 
training specialized models from heterogeneous data. In reward modeling, our input is {\em preference data}:
a collection of prompts, each paired with two candidate responses and a label
indicating which response is preferred. The goal is to learn a model that takes
a prompt and a response and assigns a numerical score, allowing us to compare
new responses by predicting which one would be preferred. We train the model
with the Bradley-Terry objective~\citep{bradley1952rank,christiano2017deep},
which encourages higher scores for preferred responses.
Formally, each training example is
$z_i = (\text{prompt}_i, \text{pref\_resp}_i, \text{unpref\_resp}_i)$ 
and the corresponding training loss for a reward model $r_\theta$
is 
$$\ell(z_i;\theta) := -\log \sigma\left(r_\theta(\text{prompt}_i,\text{pref\_resp}_i) - r_\theta(\text{prompt}_i,\text{unpref\_resp}_i)\right).$$
Reward modeling is an excellent candidate for dataweighting:
preference data is noisy, often based on LLM judges, and comes from diverse sources.
We demonstrate the efficacy of dataweighting on real human preference data in a variety
of settings. We begin by giving an overview of the training and target datasets
we use, as well as our reward model training procedure.

\subsection{Experimental setup}
We demonstrate the efficacy of dataweighting 
for reward modeling across a variety of training datasets, target objectives,
and model architectures. We also compare dataweighting to existing algorithms 
for data-centric adaptation from the literature. 
We provide a high-level summary of these settings here, with a more complete 
description available in Appendix~\ref{app:experimental-setup}.

\paragraph{Data.} We use preference data released by Arena in our analysis. 
Specifically, for training we make use of:
\vspace*{-.5em}
\begin{itemize}
    \setlength{\itemsep}{2pt}
    \setlength{\parsep}{0pt}
    \setlength{\parskip}{0pt}
    \item \texttt{Human-155k}~\citep{chiang24chatbot}: $155{,}577$
    preference data points collected from Arena users;
    \item \texttt{Bot-10k}: $10{,}000$ Arena-flagged bot and adversarial examples;
    \item \texttt{Gemini-155k}: Synthetic Gemini 3 Flash preference labels on the prompt-response pairs in \texttt{Human-155k}. 
\end{itemize}
\vspace*{-.5em}
Datasets such as \texttt{Human-45k} refer to 
random subsamples of the corresponding datasets; mixtures such as \texttt{Human-45k + Gemini-110k} 
refer to combined subsamples with disjoint rows.
As target datasets, we use:
\vspace*{-.5em}
\begin{itemize}
    \setlength{\itemsep}{1pt}
    \setlength{\parsep}{0pt}
    \setlength{\parskip}{0pt}
    \item \texttt{Expert}: a curated subset of Arena preferences for
    sophisticated, domain-specific prompts, testing gains on a niche part of the
    training distribution. 
    \item \texttt{STEM}: math, science, and coding preferences from
    HelpSteer3 and Tulu2.5~\citep{wang2025helpsteer3preferenceopenhumanannotatedpreference,ivison2024unpacking},
    with annotations from qualified human rubric labelers or ground-truth
    verifiers.
\end{itemize}
Each domain is split into a target set used for optimizing dataweights and an entirely held-out test set:
$3000$/$547$ examples for \texttt{Expert} and $6796$/$6796$ for \texttt{STEM}.

\paragraph{Reward model training.} We fine-tune \texttt{Qwen2.5-\allowbreak0.5B-\allowbreak Instruct} unless otherwise specified.
{Except where otherwise stated,
for fair comparisons {\em all} methods use the same fixed-size
\emph{checkpoint selection set} of $100$ examples for early stopping,
disjoint from both the target and test sets.}
{Comparisons between dataweighted and unweighted models use matched random seeds.}
Appendix~\ref{app:experimental-setup} details dataset construction, sampling,
and reward-head implementation.
Appendix~\ref{app:run-hyperparameters} lists the training hyperparameters,
dataweight optimization settings, and parametric weighting model configuration.

\subsection{Establishing the efficacy of dataweighting}
\label{sec:selection-baselines}
We start by establishing the efficacy of our proposed method for dataweighting
relative to existing ideas in the literature on data-centric model adaptation. 
Specifically, we compare our method
to:
\begin{itemize}
    \item {\bf Attribution-based data selection methods} (approximate influence
    \citep{koh2017understanding}, TracIn \citep{pruthi2020estimating}, and TRAK
    \citep{park2023trak}), which assign a score to each training example based 
    on its estimated contribution to the target objective, and then keep only the 
    highest-scoring examples;
    \item {\bf Metagradient-based methods over a short horizon} (as in \cite{ren2018learning}
    and others), where we attempt to optimize dataweights $w$ by differentiating through 
    single inner optimization steps---in other words, our method but with $K = T$.
    \item {\bf Fine-tuning}, where we leverage the high-quality target data
    by training directly on it, instead of using it to optimize 
    dataweights for the training data. Specifically, we study a method which first 
    {\em pre-trains} on the $n$ datapoints from the heterogenous training set, 
    and then {\em fine-tunes} on the $n'$ target samples.
\end{itemize}
We also evaluate the unweighted baseline, which sets all weights $w_i$ equal to one.

\paragraph{Dataweighting outperforms attribution-based selection.} In 
Figure \ref{fig:selection-baselines}, we compare our dataweighting approach with the aforementioned attribution baselines using 
\texttt{Human-155k} as the training set and the validation split of 
\texttt{STEM} as the target set, evaluating on the corresponding (unseen) test
split. 
We use a small fixed pool of held-out examples to determine how many samples to 
remove for the baseline methods, and use the same fixed pool to decide when to 
early stop our dataweighting method. 

\begin{figure*}[!ht]
    \centering
    \includegraphics[width=.9\linewidth]{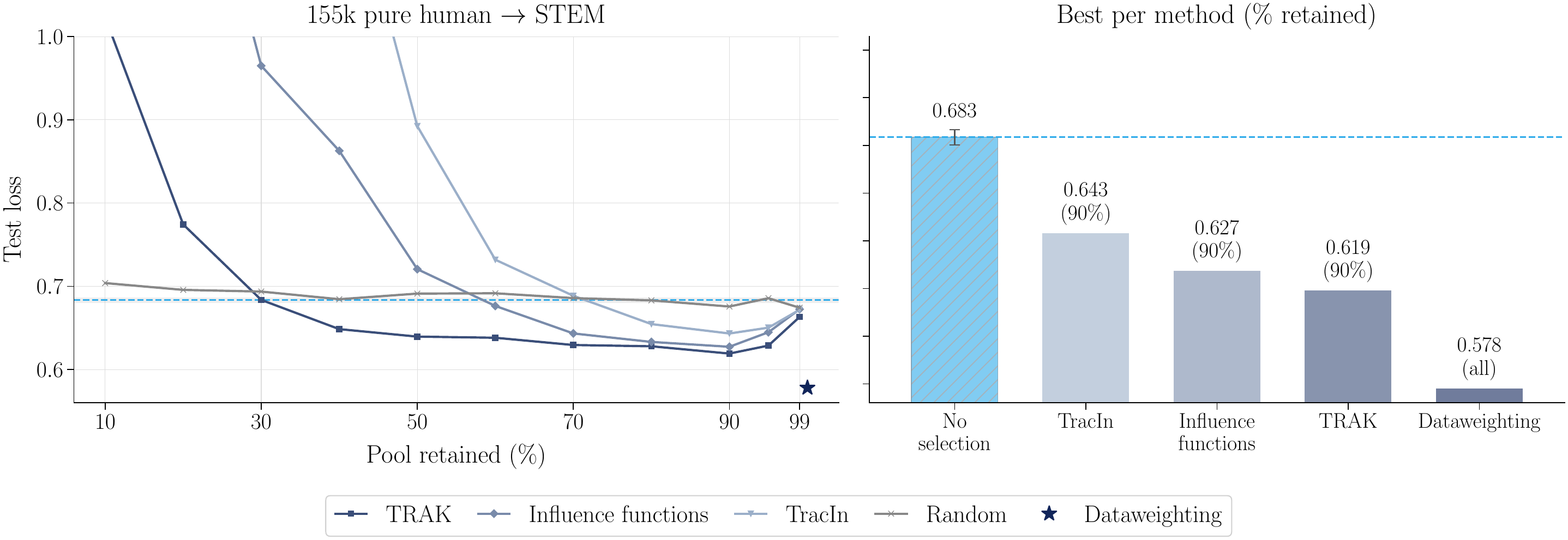}
    \caption{{\em Dataweighting outperforms attribution-based selection.}
    Held-out \texttt{STEM} test loss after training on \texttt{Human-155k}.
    \emph{Left:} Baselines across different selection fractions, with the unweighted
    baseline and a gray band showing $\pm 1$ standard deviation over three
    seeds. Dataweighting reweights the full pool (keeping $100\%$ of data points).
    \emph{Right:} Each method at its best selection fraction (in parentheses); the leftmost
    bar is the unweighted baseline. Dataweighting reaches $0.578$,
    compared with $0.619$ for TRAK, the strongest selection baseline.
    See Appendix~\ref{app:selection-pools} for other training datasets.}
    \label{fig:selection-baselines}
\end{figure*}

Overall, we find that dataweighting achieves a lower loss 
on the unseen \texttt{STEM} data than any of the baselines, and reduces test 
loss compared to unweighted training on \texttt{Human-155k} by 15\%. 
Between the data selection methods surveyed, TRAK \citep{park2023trak}
appears to be both more effective than TracIn and influence functions 
in terms of the final test loss, as well as more robust to the choice 
of selection fraction.
More qualitatively, at low selection fractions
we find that selected preferences become skewed toward longer responses,
particularly for TracIn and influence functions. Such a skew can encourage the
reward model to use response length as a shortcut, assigning low probability to
preferences favoring longer responses and incurring large losses on those pairs.
This offers a plausible explanation for the sharp loss increases under aggressive
selection (see Appendix~\ref{app:selection-length-audit} for a more
thorough investigation).

\begin{figure}[!ht]
    \centering
    \begin{minipage}[t]{0.50\linewidth}
        \vspace{0pt}
        \centering
        \includegraphics[width=\linewidth]{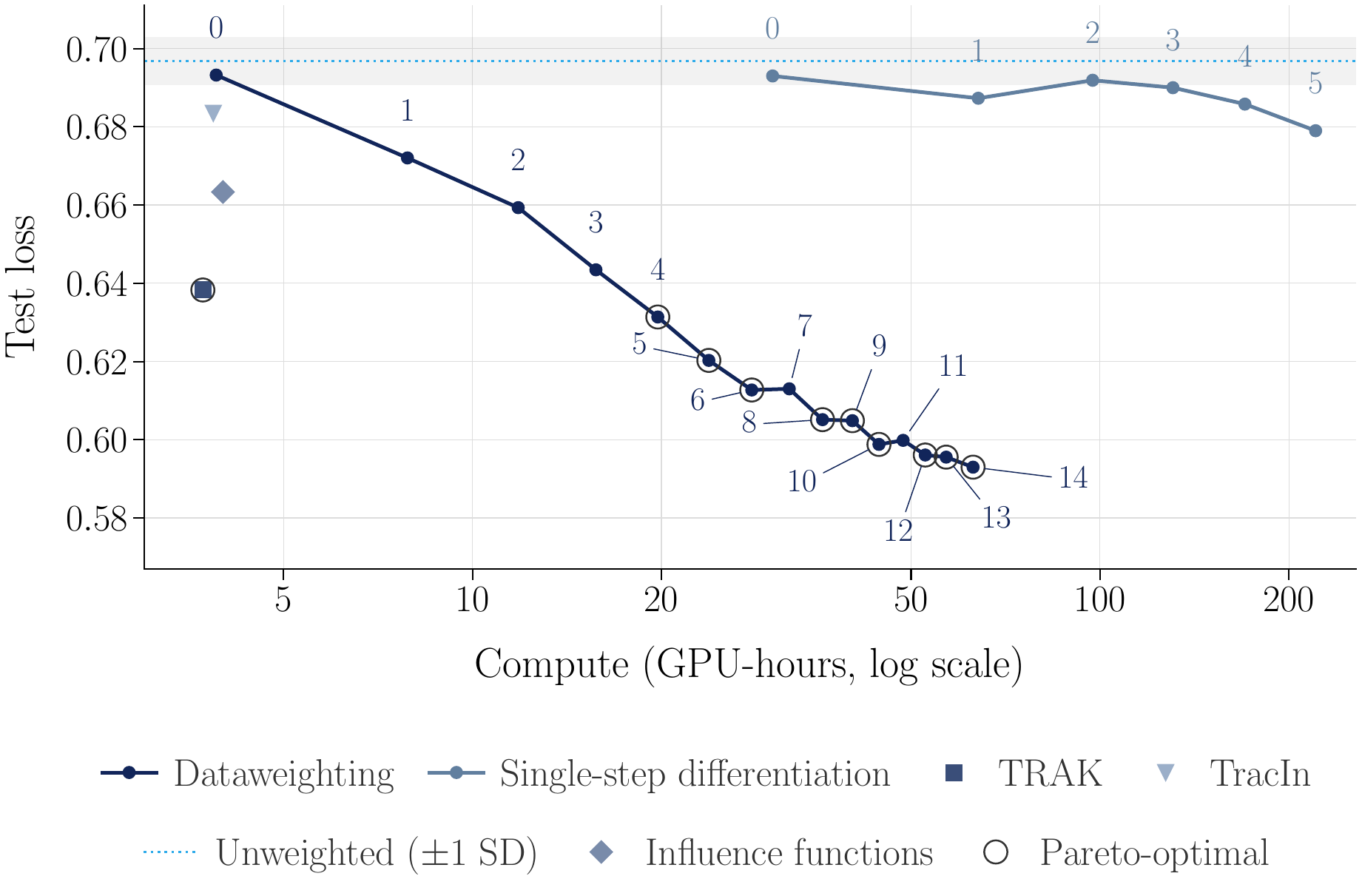}
    \end{minipage}\hfill
    \begin{minipage}[t]{0.46\linewidth}
        \vspace{0pt}
        \vspace{6pt}
        \captionsetup{justification=raggedright,singlelinecheck=false,skip=0pt}
        \caption{
        {\em Dataweighting improves the cost--performance tradeoff.}
        Held-out \texttt{STEM} loss versus compute for
        \texttt{Human-30k}, comparing dataweighting,
        single-step differentiation, and attribution-based selection.
        Point labels indicate meta-epochs $\Omega$ (i.e., complete 
        runs of the algorithm $\mathcal{A}$); black rings mark Pareto-optimal
        observations. The gray band shows the unweighted baseline mean and $\pm 1$ standard
        deviation across three seeds. Additional results appear in
        Appendix~\ref{app:selection-cost-performance}.
        \label{fig:selection-cost-benefit}}
    \end{minipage}
    \vspace{-4pt}
\end{figure}

\paragraph{Dataweighting achieves Pareto-optimal cost-performance tradeoffs.}
In our method, the number of outer steps provides a cost-performance tradeoff: later iterations can improve test loss at higher
cumulative compute. Figure~\ref{fig:selection-cost-benefit} compares
the trajectory of dataweighting with single-step differentiation and
attribution-based selection in terms of the achieved test loss and compute. We
observe that dataweighting is Pareto optimal when there are at least four
meta-epochs.
Appendix~\ref{app:selection-baselines} provides
additional details and results.

\begin{figure}[!ht]
    \centering
    \includegraphics[width=.9\linewidth]{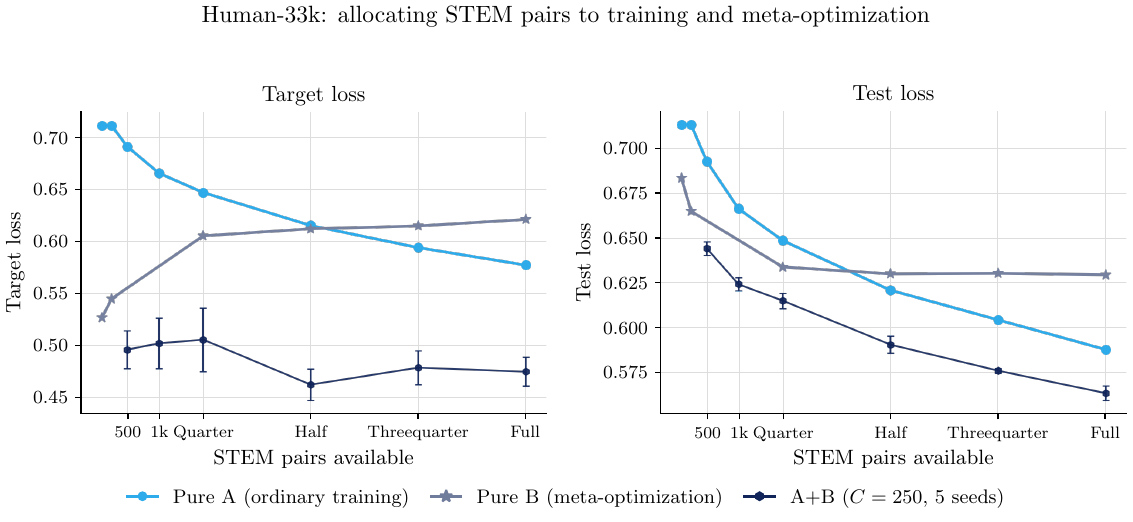}
    \captionsetup{skip=4pt}
    \caption{{\em A small \texttt{STEM} target set complements direct training on \texttt{STEM} data.}
    Target loss (left) and held-out test loss (right)
    with a \texttt{Human-33k} training base. Target-domain (\texttt{STEM}) pairs are allocated
    entirely to ordinary training (Pure A), entirely to the meta-optimization target
    (Pure B), or \textcolor{black}{split, with $250$ target pairs and the rest used for training} (A+B).
    Full denotes $6{,}796$ pairs; Quarter, Half, and Threequarter are fractions of
    \textcolor{black}{the full target set}.}
    \label{fig:expert-split}
    \vspace{-4pt}
\end{figure}

\paragraph{Dataweighting outperforms fine-tuning, especially when high-quality data is scarce.}
A natural question is whether reserving target-domain data for dataweighting
is more beneficial than using it directly for training.
Given a budget of $B$ \texttt{STEM} preference pairs, we allocate $C$ to the
dataweighting target and $B-C$ to training. We compare $C=0$ (no dataweighting),
$C=B$ (all pairs reserved for dataweighting), and $C=250$ (split).
The split achieves the lowest held-out test loss at every budget
(Figure~\ref{fig:expert-split}), outperforming ordinary training even though it uses
fewer target-domain pairs for training.

\subsection{Revisiting our motivating challenges}
\label{sec:motivating_challenges}
Having established the efficacy of our dataweighting method relative to natural baselines, 
we now turn to the motivating applications of dataweights.
Specifically, we show how dataweights allows us to circumvent many of the named 
challenges of building reliable models from heterogenous, noisy, and partially synthetic data.
Figure \ref{fig:dataweighting} summarizes our results.

\begin{figure*}[!ht]
    \centering
    \includegraphics[width=\linewidth]{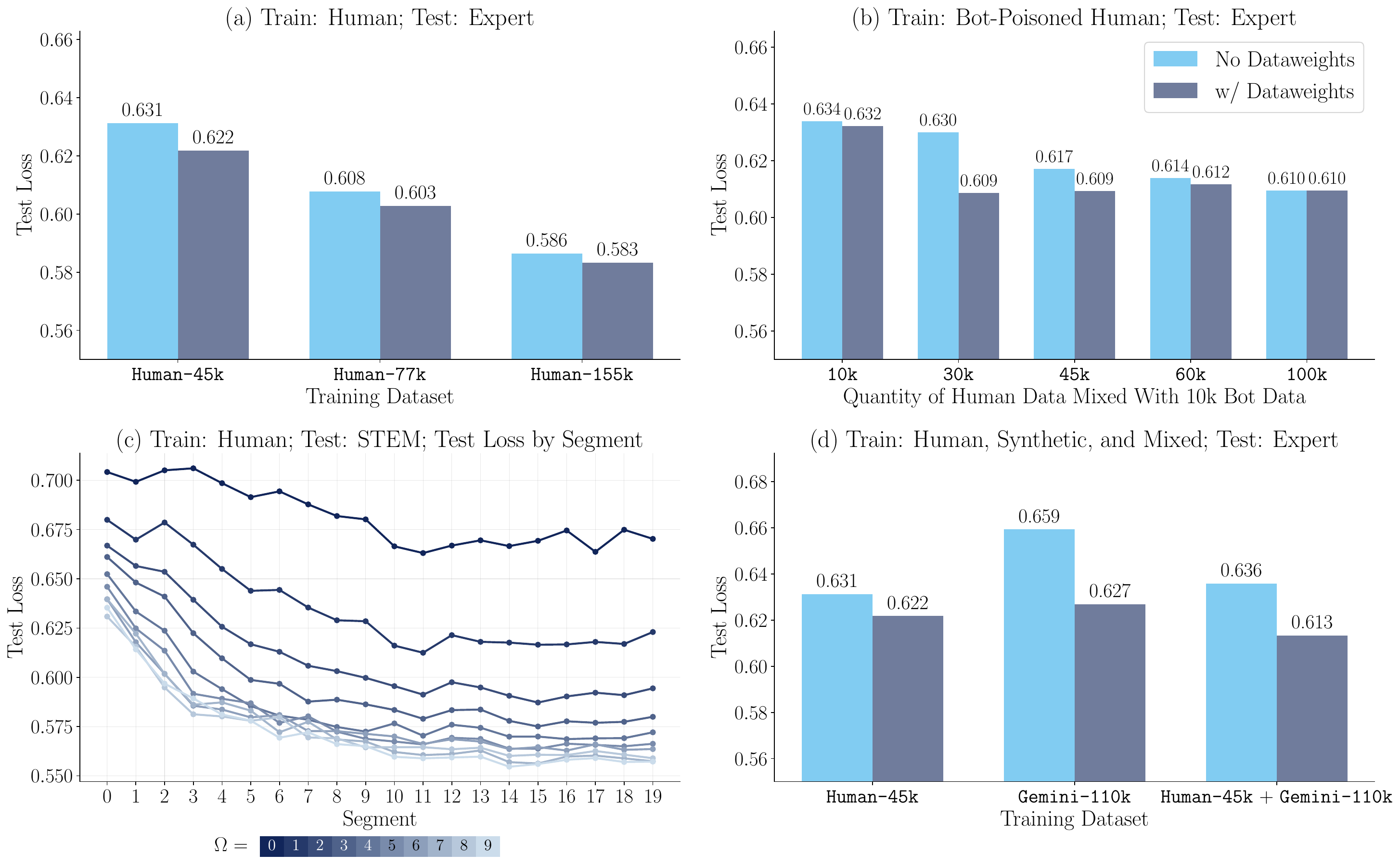}
    \caption{{\em Dataweighting consistently improves reward models across a variety of settings.} Each plot depicts the test loss of reward models trained with and without dataweighting across different training data sources and evaluation sets. In (a), we apply dataweights to a heterogeneous corpus of human preference data, targeting the curated \texttt{Expert} target set, and show that dataweighting improves test-set performance. In (b), we apply dataweighting to a training set comprising heterogeneous
    human data that has been ``poisoned'' with data suspected to be from bots;
    dataweighting allows us to recover performance. In (c), we plot the test loss as a function of the \textcolor{black}{training segment} for dataweights that are further and further optimized for
    the \texttt{STEM} target set. Finally, in (d), we plot the test loss
    of models trained with and without dataweighting on real, synthetic, and mixed
    data; dataweighting enables us to leverage human and synthetic sources to outperform
    either source in isolation.}
    \label{fig:dataweighting}
\end{figure*}

\paragraph{Challenge 1: Leveraging heterogeneous data.} 
Our first main point is that
dataweights can be used to improve in-distribution performance. To demonstrate
this, we train a reward model on varying-sized samples of \texttt{Human-155k}
and show the test loss with and without dataweighting; see
Figure~\ref{fig:dataweighting}(a).
Across all dataset sizes, dataweighting noticeably decreases the test loss on
\texttt{Expert} preferences. 
We provide test and target loss curves in Appendix~\ref{app:heterogeneous-data}.

\paragraph{Challenge 2: Automatically downweighting noisy or adversarial data.} 
Next, we mix human preference data
with noisy and adversarial bot data, varying the human-to-bot data ratio. We
again target the clean \texttt{Expert} preference set.
Figure~\ref{fig:dataweighting}(b) demonstrates that naive training with such
low-quality data can harm performance, especially when the human-to-bot ratio is
low. At the same time, in almost all cases, 
dataweighting restores performance. Plots of test and target training
curves are found in Appendix~\ref{app:noisy-data}, in addition to
dataweight distributions that show how bot
dataweights shift downward during optimization.

\paragraph{Challenge 3: Customizing to specific domains.} 
Next, we demonstrate how dataweighting enables customization
of reward models to specific domains. We target the \texttt{STEM} preference
set, which contains math, science, and coding preference pairs. The annotation of this dataset follows a substantially different
process from the crowdsourced annotation process on Arena.

Without dataweighting, training on \texttt{Human-155k} yields a test loss of
$0.6631$ on the \texttt{STEM} preference set. This makes sense: the target
domain is significantly out-of-distribution with respect to the training data.
However, once we apply dataweighting, our learned dataweights bring the test
loss down to $0.5572$. We provide the test loss ``meta-training'' curves in
Figure~\ref{fig:dataweighting}(c). Each line denotes a new meta-epoch $\Omega$
(i.e., a full run of the algorithmm $\mathcal{A}$). With each meta-epoch, 
we decrease the test loss nearly monotonically. Note that the curve
for $\Omega=0$ corresponds to a normal training procedure where all data points
are weighted equally. The target loss curves, which follow a similar trend to
the test loss curves, can be found in Appendix~\ref{app:customization}.

\paragraph{Challenge 4: Safely incorporating synthetic data.}
We study whether dataweighting allows us to use synthetic LLM-as-a-judge
labels in place of human labels while preserving performance.
Specifically, we train three reward models on \texttt{Human-45k},
\texttt{Gemini-\allowbreak110k}, and \texttt{Human-45k} +
\texttt{Gemini-\allowbreak110k}, respectively. We do this with and without
dataweights. We use unequal dataset sizes to reflect real-world constraints:
synthetic data is cheap to generate at scale, while human-curated data remains
expensive and limited. 

In Figure \ref{fig:dataweighting}(d), we observe that under a standard,
non-dataweighted procedure, purely training on \texttt{Human-45k} gives the best
test loss. Training on only synthetic data, \texttt{Gemini-110k}, yields poor
test loss. Lastly, although mixing synthetic and real data (\texttt{Human-45k} +
\texttt{Gemini-110k}) improves performance over synthetic data alone, it does
not exceed training on only the real human subset. Under the dataweighting
procedure, training on purely synthetic data results in much closer performance
to real human data. Furthermore, with dataweighting, combining both human and
synthetic data together reaches the best performance, better than using the real
human preference set alone. Full test and
target loss curves for \texttt{Human-45k} and \texttt{Gemini-110k} are in
Appendices~\ref{app:heterogeneous-data} and~\ref{app:synthetic-data};
Appendix~\ref{app:synthetic-data}
compares learning curves on synthetic and human data.

\subsection{Parametric dataweights and larger-scale transfer}
\label{sec:parametric-experiments}
So far our dataweights have used the identity parameterization described in Section \ref{sec:dataweighting}. 
The number of learned weights therefore grows with the dataset size $n$, which
might be prohibitively expensive for large training sets; moreover, this rules out direct
transfer of the dataweights to new training datasets.
We now study the parametric variant~\eqref{eq:parametric-w}: a small head
$h_\psi$ maps fixed example features $\phi(z_i)$ to nonnegative weights
$\smash{w_i^\psi=h_\psi(\phi(z_i))}$. Specifically, we define:
\[
    h_\psi(\phi(z))=\mathrm{softplus}(\phi(z)^\top\mathbf{a}+b),
    \qquad \psi=(\mathbf{a},b).
\]
Here, $\phi(z)\in\mathbb{R}^{4608}$ concatenates frozen OpenAI
\texttt{text-embedding-3-small} embeddings of the prompt and the two responses,
each with $1{,}536$ dimensions. The learnable projection $\mathbf{a}$ and scalar
bias $b$ give $p=4{,}609$ parameters, independent of the training dataset size $n$. 

\paragraph{Same-scale comparison.} We extend the customization setup of
Section~\ref{sec:motivating_challenges} by varying how many \texttt{STEM} preference
examples are mixed into a 45k-row human-preference base, with sizes $\{0, 500,
1000, 2000\}$, and optimize the parameterizations across
three random seeds. Figure~\ref{fig:transfer-bars}(a)
shows comparable losses for the per-sample dataweights and the parametric dataweights across different data sizes, despite the linear parameterization having a roughly
tenfold smaller parameter count. 

\begin{figure*}[!ht]
    \centering
    \begin{subcaptiongroup}
        \phantomcaption\label{fig:r50k-size-sweep-seeds}
        \phantomcaption\label{fig:r50k-transfer-bars}
        \phantomcaption\label{fig:h155k-transfer-bars}
    \end{subcaptiongroup}
    \includegraphics[width=.96\linewidth]{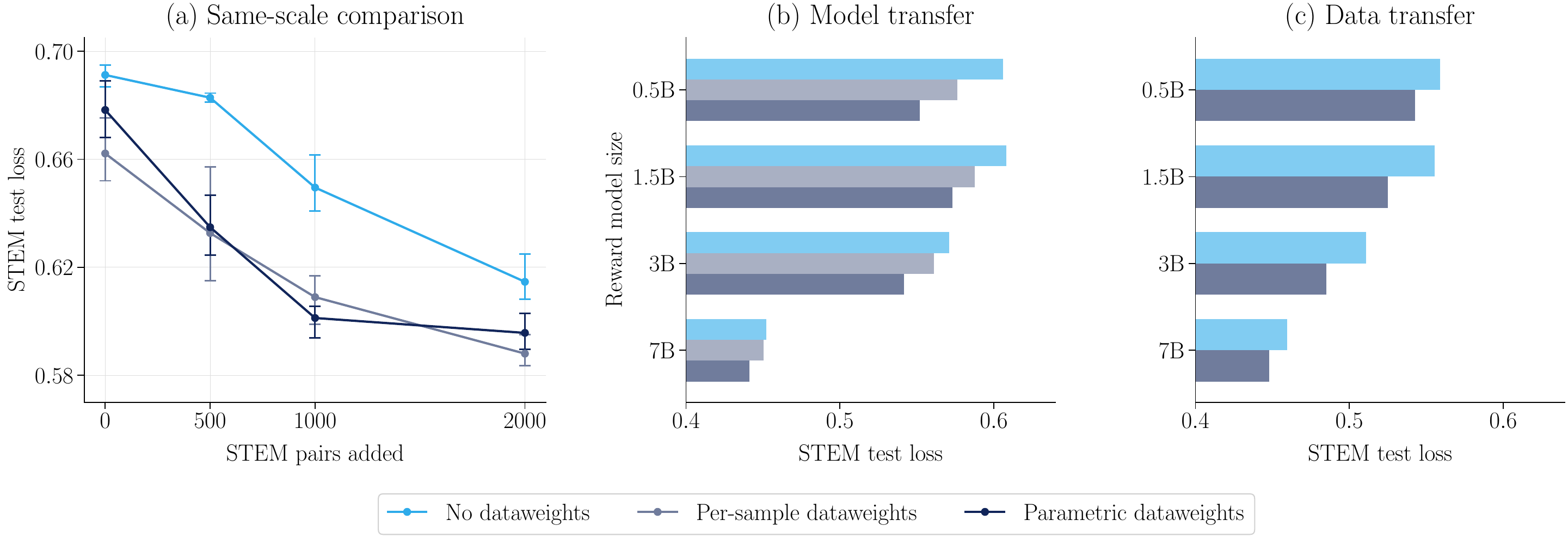}
    \caption{
    {\em Parametric dataweights transfer to larger models and new training data.}
    \emph{(a)} \texttt{STEM} test loss when training on
    \texttt{Human-45k} plus $\{0,500,1000,2000\}$ \texttt{STEM} examples.
    Curves show means over three seeds; error bars show their minimum and maximum.
    \emph{(b)} Transferring dataweights learned at model size 0.5B to 1.5B, 3B, and 7B reward models
    on \texttt{Human-45k} plus $5{,}000$ \texttt{STEM} examples ($50$k total).
    \emph{(c)} Same setup as (b), transferring the linear dataweighting map to the full
    \texttt{Human-155k} training base, with approximately $3.5$ times as many human examples;
    the new examples were not seen while learning the dataweighting~map.}
    \label{fig:transfer-bars}
    \vspace{-3pt}
\end{figure*}

\vspace{4pt}
\paragraph{Cross-scale transfer.}
  A key practical question is whether dataweights learned with a small reward model can be reused when training larger models. We therefore learn dataweights at the 0.5B scale, freeze them, and apply them unchanged when training 1.5B,
  3B, and 7B reward models for a single weighted epoch. This \emph{forward-only} transfer requires no additional metagradient computation at the larger scales. Figure~\ref{fig:transfer-bars}(b) shows that both identity and linear dataweights
  improve over the unweighted baseline across all scales. These results suggest that dataweighting can be amortized: the
  expensive meta-optimization can be performed at a smaller scale, and the resulting weighting rule can be deployed directly in training runs of a larger model.

\paragraph{Generalizing to unseen training examples.}
  A limitation of per-sample dataweights is that they only assign weights to examples
  seen during meta-optimization. The linear parameterization removes this restriction:
  because it maps an example embedding to a weight, it can be evaluated on new training
  examples without relearning a separate scalar for each one. We test this by applying
  a linear head learned on the smaller 45k-row base to the full \texttt{Human-155k}
  training set. This transfer requires only computing embeddings and a forward pass
  through the learned weighting rule, rather than running metagradient optimization on
  the larger dataset. Figure~\ref{fig:transfer-bars}(c) shows that the resulting
  weights continue to improve over the unweighted baseline across model scales. Thus,
  parametric dataweights provide a way to amortize data curation across datasets: a
  weighting rule learned on a manageable subset can be deployed directly on larger pools containing previously unseen training examples.

\subsection{Optimizing a complex target objective}
\label{sec:nontrain-target}

\begin{table}[!ht]
  \centering
  \color{black}
  \captionsetup{
    labelfont={color=black},
    textfont={color=black},
    justification=raggedright,
    singlelinecheck=false
  }
  \small
  \setlength{\tabcolsep}{3pt}

  \begin{minipage}[t]{0.58\linewidth}
    \centering
    \begin{tabular}{@{}llcc@{}}
      \toprule
      Training loss ($\ell$) & Target loss ($L'$)             & Spearman $\rho$ $\uparrow$ & Rank MAE $\downarrow$ \\
      \midrule
      BT & {\color{gray} N/A}      & 0.541          & 7.91          \\
      BT & BT       & 0.565          & 7.67          \\
      Spearman & {\color{gray} N/A} & 0.325          & 9.05          \\
      BT & Spearman    & \textbf{0.625} & \textbf{7.31} \\
      \bottomrule
    \end{tabular}
        \caption{Dataweighting allows for complex target 
        objectives that are difficult to train for directly. Here, 
        standard Bradley-Terry (BT) training with dataweights 
        optimized for downstream ranking correlation outperforms (a)
        unweighted BT training (b) training with dataweights optimized 
        for BT loss on the target dataset; and (c) attempting 
        to train directly for correlation on the training set.
      Depicted graphically in Appendix~\ref{app:ranking-performance}.}
\label{tab:leaderboard-full-test-bars}
  \end{minipage}%
  \hfill
  \begin{minipage}[t]{0.4\linewidth}
    \centering
    \begin{tabular}{@{}lcc@{}}
      \toprule
      Target loss ($L'$)  & Spearman $\rho$ $\uparrow$      & Rank MAE $\downarrow$          \\
      \midrule
      N/A  & 0.83 [0.79, 0.94]               & 9.33 [8.00, 9.88]              \\
      BT      & 0.88 [0.82, 0.94]               & 7.89 [7.33, 8.22]              \\
      Spearman & \textbf{0.95} [0.91, 0.96]      & \textbf{6.17} [4.67, 6.47]     \\
      \bottomrule
    \end{tabular}
    \caption{Re-evaluating the best three methods from Table \ref{tab:leaderboard-full-test-bars} 
    (BT training with no dataweights, dataweights optimized for target BT, and dataweights optimized 
    for target Spearman correlation) with nine model-disjoint holdout splits.
      Medians [25th, 75th percentiles], also depicated graphically in Appendix~\ref{app:ranking-performance}.}
    \label{tab:ranking-distribution-splits}
  \end{minipage}
\end{table}

  So far the target $L'$ has shared the Bradley-Terry form of the training loss.
  We now consider a target objective that is not a training loss: \emph{ranking}
  the response-generating LLMs. In Arena's data, each LLM response is accompanied by the model identity.
  
  For each model $i$, let $s_i(\theta)$ denote the
  average reward assigned by the reward model to responses from model $i$, and let
  $b_i$ denote the ground-truth Bradley-Terry coefficient fit from the original
  pairwise annotations. The evaluation metrics are (a)
  Spearman~$\rho$ between the model-induced ranking $\{s_i(\theta)\}$
  and the ground-truth ranking $\{b_i\}$, as well as (b)
  Rank MAE, the mean absolute error in model rank.
  Because Spearman~$\rho$ is non-differentiable, we use a differentiable
  soft-rank surrogate as $L'$. For a score vector $u$, define
  \begin{equation}
  \label{eq:soft-rank}
      \widetilde{r}_i(u) \;=\; 1 + \sum_{j \neq i}
      \sigma\!\left((u_j - u_i)/\tau\right),
  \end{equation}
  where $\sigma(t)=1/(1+e^{-t})$ is the logistic function and $\tau>0$ is
  the temperature. We take
  $\widetilde{L}'(\theta)$ to be the negative Pearson correlation between
  $\widetilde{r}(s(\theta))$ and $\widetilde{r}(b)$. During meta-optimization,
  this surrogate is used only as the target objective for learning dataweights;
  the reward model itself is still trained with the standard
  (weighted) Bradley-Terry loss.

  \textcolor{black}{Table~\ref{tab:leaderboard-full-test-bars}} compares this rank-targeted
  dataweighting with standard Bradley-Terry training, with and without dataweighting, as well as a baseline that trains the reward model directly using the soft-Spearman
  loss. Direct soft-Spearman training performs worse than Bradley-Terry training,
  suggesting that the rank surrogate is a weak inner-loop learning signal. In
  contrast, using the rank surrogate as a target objective improves both Rank MAE
  and Spearman~$\rho$.
  This suggests a useful separation of roles: Bradley-Terry captures the local signal needed to fit the reward
  model, while the rank surrogate operates one level up, selecting the BT-trained
  trajectory whose induced model ordering best matches the ground-truth downstream
  ranking.

\paragraph{Consistency across held-out splits.}
  To verify that the improvement in \textcolor{black}{Table~\ref{tab:leaderboard-full-test-bars}} is not
  an artifact of a single model split, we repeat the comparison on nine model-disjoint
  holdout splits, each holding out a different set of response-generating models from
  the leaderboard (\textcolor{black}{Table~\ref{tab:ranking-distribution-splits}}). Across splits,
  Spearman-targeted meta-optimization attains a consistently higher held-out
  Spearman~$\rho$ and lower Rank MAE than both unweighted Bradley-Terry training and
  BT-targeted dataweighting, indicating that the rank surrogate selects dataweights
  whose induced ordering generalizes to unseen models.

\paragraph{Transferring parametric dataweights.}
  The per-sample dataweights used above are tied to specific training rows. Following the
  parametric dataweighting of Section~\ref{sec:parametric-experiments}, we instead
  optimize a parametric map from a frozen text embedding of each preference pair to
  its scalar weight, fit on a small subset against the soft-Spearman objective, and then
  apply the learned map to assign weights to all $92$k training pairs.
  We then train a reward model once with those fixed weights, without further
  meta-optimization (Figure~\ref{fig:ranking-transfer-by-size}). A map learned
  on $50$k pairs increases held-out Spearman~$\rho$ from $0.648$ (unweighted) to
  $0.915$, approaching the $0.925$ obtained by non-parametric, per-sample dataweights.
  The gain grows with the size of the subset used to learn the parametric mapping. 
  Because the map depends only on the pair
  embedding, the same weighting can be applied to a different reward model without
  re-running the meta-optimization.

\begin{figure}[!ht]
    \centering
    \begin{minipage}[c]{0.45\linewidth}
        \centering
        \includegraphics[width=\linewidth]{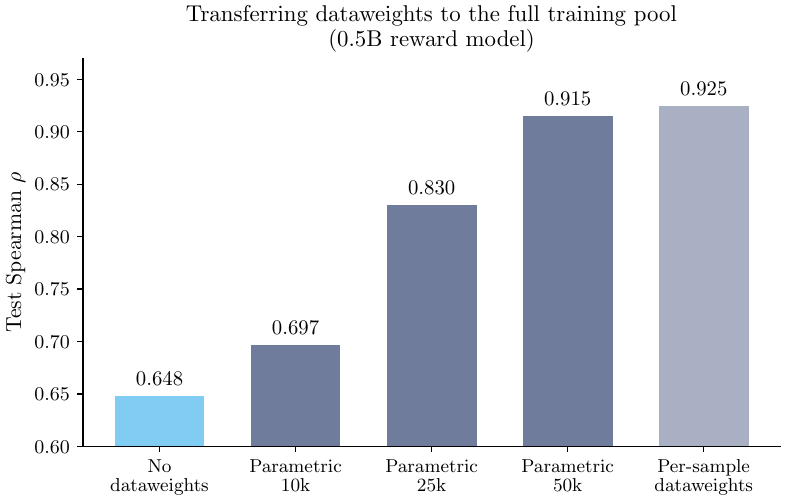}
    \end{minipage}
    \hfill
    \begin{minipage}[c]{0.49\linewidth}
    \caption{{\em Parametric rank-dataweights transfer to unseen training points.} A
    parametric dataweight map is optimized against the
    soft-Spearman objective on a small subset ($10$k/$25$k/$50$k pairs), then used to
    predict weights for all $92$k training pairs; a new reward model is
    then trained once under those weights. Held-out leaderboard Spearman~$\rho$ improves
    with the meta-training size and approaches non-parametric, per-sample dataweights, while
    unweighted training lags well behind.}
    \label{fig:ranking-transfer-by-size}
    \end{minipage}
\end{figure}

\section{Theoretical intuition for dataweighting}
\label{sec:theory}

We provide a few stylized theoretical results illustrating how a small target set
can guide dataweighting. We study linear regression with two signal
features and many weak nuisance features.
For simplicity, we consider a simplified
version of dataweighting that selects the
fixed-size source subset minimizing
the empirical target loss,
sidestepping any potential optimization issues. That is, we assume 
access to an oracle solving:
\begin{equation}
    \label{eq:mgd-oracle}
    \arg\min_{\substack{w \in \{0, 1\}^n \\ \|w\|_0 = k}} \sum_{j=1}^{n'} \ell(z'_j; \widehat{\theta}(w)),
    \qquad \text{ where }
    \qquad \widehat{\theta}(w) \in \arg\min_{\theta} \sum_{i=1}^n w_i \cdot \ell(z_i; \theta).
\end{equation}
Here, $\widehat{\theta}(w)$ represents minimum-norm weighted least-squares regression on the training set 
(i.e., $\mathcal{A}(w, \mathbf{0}_d, \infty)$ in our prior notation).
Note that our empirical results do not rely on such an oracle and instead 
apply (meta-)gradient descent to solve a problem similar to 
\eqref{eq:mgd-oracle} (albeit with continuous, unrestricted weights
and for a more complex learning algorithm $\mathcal{A}$).

We compare this algorithm to a baseline of pretraining on the source data
followed by fine-tuning on the same target set.
To ensure our proofs do not rely on pathologies of overparameterized regression, 
we compare to both pretraining on the entire source dataset ($w = \mathbf{1}_n$)
as well as pretraining on a random $k$-subset of the source data.
In this section, we present the high-level results and corresponding 
intuition, while Appendix~\ref{app:theory} provides the precise assumptions,
statements, and proofs.

\paragraph{Setting and intuition.} We study minimum-norm $d$-dimensional 
linear regression with 
$n$ source data points and $n' < n$ target points, with $n' < d$. 
Both source and target 
points have Gaussian covariates with mean zero and covariance
matrices $\Sigma_S$ and $\Sigma_T$ respectively. 
Target labels are given by 
$$f^*(x) := x_1,$$ 
and so the optimal target parameter is the first canonical vector, $\theta^* = e_1$.
For the first two results below, 
source labels are given by $\hat{f} \neq f^*$, where the two labeling functions 
agree on the halfspace $x_3 \leq 0$ and otherwise disagree by $\delta > 0$. 

We let $\mathcal{R}_{DW}(n, n', k, d)$ be the expected population risk of finding and using the
optimal $k$-subset of the training data using the oracle \eqref{eq:mgd-oracle}.
We then let $\mathcal{R}_{FT}(n, n', k, d)$ denote the minimum expected population risk between 
(a) first fitting on all the data then fine-tuning on the target data;
(b) fitting on $k$ random data points then fine-tuning on the target data.
Since  $n' < d$, the fine-tuning step corresponds to finding the minimum-norm
perturbation to the estimator from pre-training that interpolates the 
target data.

The main idea behind the proofs is 
that we have insufficient target data to learn $f^*(x)$ directly,
and so fine-tuning heavily overfits to the $n'$ target samples.
Using the unweighted source data leads to reliance on spurious features that the
target data is insufficient to unlearn during fine-tuning.
It turns out, however, that the target data suffices to guide least-squares
regression on the {\em source} data towards a good solution.

\paragraph{Correcting biased labels.}
Even arbitrarily small label errors (e.g., from a synthetic labeler) 
can induce a spurious dependence that fine-tuning cannot fully remove. 
In Theorem \ref{thm:theory-bias}, we show that as we take the number of 
source data points $n \to \infty$ the risk of dataweighting converges to 
zero, while the risk of pretraining and fine-tuning stays bounded away 
from zero, formalizing the following informal result:
\begin{informaltheorem}
    \label{infthm:synth}
    Fix any $n' \geq 4$, $\alpha \in (0, \frac{1}{2})$, and $\delta > 0$.
    There exists a family of covariance matrices $(\Sigma_S(d), \Sigma_T(d))$
    and a source label function $\hat{f}(x)$
    and dimension function $d(n) > n + n'$ such that
    \begin{align*}
        \|f^* - \hat{f}\|_\infty &\leq \delta \\
        \lim_{n \to \infty} \mathcal{R}_{FT}(n, n', \lfloor \alpha n \rfloor, d(n)) &\geq c_{n'} \cdot \delta^2 \\
        \lim_{n \to \infty }\mathcal{R}_{DW}(n, n', \lfloor \alpha n \rfloor, d(n)) &= 0,
    \end{align*}
    where $c_{n'} > 0$ is a positive constant depending only on $n'$.
\end{informaltheorem}

The key idea is that upweighting only the examples within the halfspace 
$x_3 \leq 0$ (i.e., the examples on which the labeling functions agree)
yields an estimator with zero asymptotic risk.
Even a finite-size validation set suffices to select such examples 
(or examples whose population risk is comparable), 
and so dataweighting attains zero asymptotic risk. 
Meanwhile, unweighted training on the source data (whether $n$ samples 
or $k = \alpha n$ samples) results in an estimator 
that places non-trivial weight on components other than $e_1$, and 
there is not enough target fine-tuning data to unlearn these spurious 
features.

\paragraph{Filtering severe corruption.}
Extending the previous result, we can show that even at fixed sample size $n$ 
and dimension $d$, and as the label corruption size 
$\smash{\delta := \|f - \hat{f}\|}$ grows, 
dataweighting identifies an entirely clean subset (i.e., a subset on which 
$\hat{f}(x) = f^*(x)$) with probability tending to one,
conditional on enough clean examples existing. 
The probability of error is
exponentially small in the source pool size.
In particular, Theorem~\ref{thm:theory-filtering} formalizes the following 
informal result:
\begin{informaltheorem}
    Fix $n, n', d$ with $n' > 1$ and $d > n + n'$.
    Let $\hat{w}(n, n', k, d)$ be the output of the dataweighting oracle
    \eqref{eq:mgd-oracle}, and take the source and target labels $\hat{f}$
    and $f^*$ from Informal Theorem \ref{infthm:synth}, parameterized by 
    their discrepancy $\delta$. Then, for $\alpha \in (0, \nicefrac{1}{2})$, 
    the probability that dataweighting 
    assigns positive weight to a training data point on which $f^*$ and 
    $\hat{f}$ disagree decays as:
    \[
        \lim_{\delta \to \infty} \mathbb{P}\left(
            \exists\ i: f^*(x^{(i)}) \neq \hat{f}(x^{(i)}) \text{ and }
            \hat{w}(n, n', \lfloor\alpha n\rfloor, d)_i > 0
        \right) \leq \exp\{-n(\nicefrac{1}{2}-\alpha)^2\}.
    \]
\end{informaltheorem}
The main idea behind the proof is that as $\delta \to \infty$, the target 
loss of a sample containing even a single point where  
$f^*$ and $\hat{f}$ disagree diverges with high probability.

\paragraph{Adapting to covariate shift.}
Finally, we show that even when the source data is labeled correctly, 
selecting examples can change the learner's inductive
bias to favor features important in the target domain. 
To show this, we study the case where the source data is labeled with 
the true labeling function $f^*$, but the source and target covariances 
vary systematically. In particular, Theorem~\ref{thm:theory-covariate} 
formalizes the following informal theorem:
\begin{informaltheorem}
    Fix $\alpha \in (0, \nicefrac{1}{2})$ and $n' \geq 4$. 
    There exist source and target covariances $(\Sigma_S(n, d),
    \Sigma_T(n, d))$ and a dimension function $d(n)$ such that,
    even with $\hat{f} = f^* = x_1$,
    \begin{align*}
        \lim_{n \to \infty} \mathcal{R}_{FT}(n, n', \lfloor \alpha n \rfloor, d(n))
        &\geq c_\alpha > 0, \\
        \lim_{n \to \infty} \mathcal{R}_{DW}(n, n', \lfloor \alpha n \rfloor, d(n))
        &= 0,
    \end{align*}
    where $c_\alpha > 0$ is a positive constant depending only on $\alpha$.
\end{informaltheorem}
\noindent Note that the construction in Appendix \ref{app:theory} yields this statement after an 
orthogonal change of basis.

\section{Related work}
\label{sec:related}
We discuss prior approaches to leveraging heterogeneous and noisy data sources, particularly focusing on reweighting methods and aggregating heterogeneous preferences.
Closest to our work are papers on automatic data selection
using data attribution methods \citep{ghorbani2019data,xia2024less,xie2023doremi}
and metagradient descent \citep{engstrom2025optimizing,calian2025datarater,dass2025datamil},
as well as prior work on learning to reweight data \citep{ren2018learning}.

\paragraph{\textcolor{black}{Reweighting methods.}} Our dataweights approach
relates to a number of existing studies proposing reweighting strategies for
deep learning~\citep{huang2006correcting, ren2018learning, byrd2019effect,
lipton2018detecting, wang2021importance}, often motivated by particular models
of distribution shift such as covariate shift and label shift. Most related is
the work of \citet{ren2018learning}, who similarly assume a clean target set and
aim to optimize training data weights, although
their update differentiates through a single SGD step.
Also relatedly, Meta-Weight-Net~\citep{shu2019meta}
and DataRater~\citep{calian2025datarater} learn parametric weighting functions
using meta-objectives. Our implementation computes exact metagradients within
long training segments (typically 5--10\% of total training), using scalable
attribution methods that support differentiation through general optimizers.
As described in Section~\ref{sec:dataweighting}, these segment-level updates
approximate differentiation through the full training run.

Despite mixed evidence on the effectiveness of reweighting~\citep{wang2021importance,sagawa2019dro},
we find it effective for model customization and for combining human and
synthetic preferences. We also study how a parametric weighting function transfers
to unseen examples and larger reward models, and how dataweights can be optimized
for downstream ranking objectives rather than the standard Bradley-Terry loss.

\paragraph{\textcolor{black}{Data attribution.}} Data attribution methods \citep{koh2017understanding, ghorbani2019data, ilyas2022datamodels, grosse2023studying, park2023trak} seek to quantify the influence of individual training data points on the model's downstream behavior.
Compared with methods that use attribution scores or metagradients to select
training examples~\citep{engstrom2025optimizing,dass2025datamil,xia2024less},
we optimize continuous example weights directly in the training loss.
We build on the \textsc{Replay} algorithm \citep{engstrom2025optimizing,ilyas2025magic} to compute exact ``metagradients'' of the model's performance on the clean target data with respect to dataweights, allowing us to automatically discover---and thus downweight---noisy and irrelevant parts of the training data.

\paragraph{\textcolor{black}{Distributional robustness and domain adaptation.}}
A central issue with heterogeneous data is that training examples may come from a mixture of domains that differ from the test distribution. A large body of work studies how to learn predictors that robustly generalize under such shifts via domain adaptation \citep{ben2006analysis, ben2010theory, zhao2018adversarial}, distributionally robust optimization (DRO) that optimizes worst-case performance over groups \citep{sagawa2019dro}, and invariance-based objectives such as invariant risk minimization \citep{arjovsky2019invariant}.
Rather than seeking a single representation that is invariant across environments, we directly use a small, high-quality target set to reweight heterogeneous training data toward the target distribution. Recent work in language model pretraining similarly uses robust or minimax objectives to tune mixture weights over data domains (as opposed to per-example weights) to improve downstream performance \citep{xie2023doremi}.

\paragraph{\textcolor{black}{Robust statistics.}}
Heterogeneity is often caused by corrupted observations---e.g., mislabeled examples or adversarial outliers. Robust statistics formalizes this through contamination models and offers robust estimators that take the contamination into account \citep{huber1964robust, diakonikolas2019robust}. A parallel literature in machine learning studies learning with label noise \citep{natarajan2013learning, lidividemix}. All these works aim to detect corruption from internal inconsistency; our work is different in that we exploit an additional, high-quality target sample that specifies the desired test distribution, and we use it to determine how much each training example should influence the learned model.

\paragraph{\textcolor{black}{Synthetic data.}}
An increasingly common cause of heterogeneity is mixing real and synthetic data. Synthetic data has proven useful for improving power in statistical inference \citep{angelopoulos2023prediction, huang2020catalytic}, training better machine learning models in domains such as robotics \citep{tobin2017domain} and computer vision \citep{zhu2017unpaired}, and improving efficiency in language model evaluations \citep{zheng2023judging}. A key challenge with synthetic data is knowing which examples are useful and which are potentially even harmful. Our method can be seen as learning how to weight each synthetic example based on how it improves performance on the target set.

\paragraph{\textcolor{black}{Reward modeling, RLHF, heterogeneous preferences.}} Reward modeling and reinforcement learning from human feedback (RLHF) operationalize ``alignment'' by learning a reward function from human-labeled comparisons and then optimizing a policy against that reward \citep{christiano2017deep, ouyang2022training}. Humans may provide noisy feedback, or even vote adversarially. By optimizing against a clean target dataset, our method downweights such noisy and adversarial votes. At a high level, our work relates to social choice theory \citep{brandt2016handbook, fischbacher2006heterogeneous, ahn2003heterogeneous}, which studies principled ways for aggregating heterogeneous preferences. Some recent works proposed data selection algorithms for reward model training~\citep{feng2025pilaf,shenactive}; however, these approaches primarily focus on reducing data volume.
In the alignment context, recent work formalizes heterogeneous preferences through latent user types and analyzes what it means to align a single policy when reward functions vary across users \citep{shirali2025direct}.

  \begingroup
  \emergencystretch=2em
  \printbibliography
  \endgroup

\clearpage
\endgroup
\endgroup
\setcounter{topnumber}{2}
\setcounter{bottomnumber}{1}
\setcounter{totalnumber}{3}
\appendix
\begingroup
\captionsetup{font=small,skip=4pt}
\setlength{\intextsep}{8pt plus 2pt minus 2pt}

\clearpage
\section{Full algorithm}
We describe our end-to-end algorithm for training 
models from heterogenous data. 
Fix a training algorithm $\mathcal{A}$, 
a training set of $n$ samples and a target set of $n'$ samples.
Putting all the ideas from Section \ref{sec:dataweighting} together, our 
final procedure for learning dataweights is as follows:
\begin{enumerate}
    \item Select a proxy algorithm $\mathcal{A}'$ 
    that runs on a subset $\tilde{n} \leq n$ of the training data; 
    a number of segments $K$;
    a target function $L'$ matching our downstream objective;
    and a dataweight parameterization $h_\psi(\cdot)$
    \item Initialize dataweighting parameters $\psi^{(0)}$ and model parameters $\theta^{(0)}$
    \item Initialize the meta-epoch count $\Omega = 0$
    \item Repeat for $t = 1,\ldots$ until convergence:
    \begin{enumerate}
        \item Compute $w$ from $\psi^{(t-1)}$ using \eqref{eq:parametric-w} or \eqref{eq:np-w}.
        \item Set $\theta^{(t)} \gets \mathcal{A}'(w, \theta^{(t-1)}, \frac{T}{K})$ if $t \mod K \neq 0$, otherwise set $\theta^{(t)} = \theta_0$ and increment $\Omega$
        \item Compute the metagradient $\nabla_w L'(\mathcal{A}'(w, \theta^{(t-1)}, \frac{T}{K}))$
        \item Compute the metagradient with respect to $\psi^{(t-1)}$ via \eqref{eq:parametric-chain-rule} 
        \item Update $\psi^{(t)}$ with a first-order method (e.g., SignSGD)
    \end{enumerate}
    \item (If applicable) Compute dataweights for all $n$ training samples via $w_i := h_\psi(z_i)$ 
    \item (If applicable) Compute and return the final model $\mathcal{A}(w, \theta^{(0)}, T)$
\end{enumerate}

\section{\textcolor{black}{Dataset construction and reward model training}}
\label{app:experimental-setup}

\paragraph{Training data.} We use two public preference datasets, \texttt{arena-\allowbreak human-\allowbreak preference-\allowbreak 100k} and \texttt{arena-\allowbreak human-\allowbreak preference-\allowbreak 140k} \cite{chiang24chatbot}. \textcolor{black}{From these datasets, we filter} for single-turn, non-tie examples, yielding a combined dataset of 155,577 rows of human-labeled pairwise preference data---we hereby refer to this dataset as \texttt{Human-155k}. We also obtained $10,000$ examples of bot and adversarial activity flagged by Arena in their preference data, henceforth referred to as \texttt{Bot-10k}.

Using the prompts and response pairs found in \texttt{Human-155k}, we generate a synthetic preference dataset by replacing the human label with an LLM-as-a-judge label \cite{zheng2023judging}. Specifically, we prompt Gemini 3 Flash to pick the best response given the prompt and response pair.
Measured against the original human labels, the LLM-as-a-judge labels reached an accuracy rate of $0.6812$. We will refer to this synthetic dataset as \texttt{Gemini-155k}.

From \texttt{Human-155k} and \texttt{Gemini-155k} we randomly subsample to derive the dataset pairs (\texttt{Human-45k}, \texttt{Gemini-110k}) and (\texttt{Human-77k}, \texttt{Gemini-77k}). Sampled data rows are disjoint within each pair. The first dataset pair illustrates a setting in which collecting synthetic labels is much cheaper than collecting real human labels, while the second dataset pair keeps the sample size fixed so we isolate the effect of using noisy labels.

\paragraph{Target datasets.} We evaluate dataweighting on two different target datasets: \texttt{Expert} and \texttt{STEM}.
\texttt{Expert} is sourced from Arena's \texttt{arena-expert-5k}: a heavily filtered subset of Arena's preference data aimed at identifying sophisticated, domain-specific prompts. Around $5.5\%$ of Arena's preference data is labeled as ``expert.'' Since \texttt{Expert} is a subset of Arena's general human preference data, we use it to evaluate in-distribution performance gains of dataweighting, albeit a niche part of the distribution.

\texttt{STEM} is sourced from both HelpSteer3 and Tulu2.5 preference datasets \citep{wang2025helpsteer3preferenceopenhumanannotatedpreference, ivison2024unpacking}. From HelpSteer3, we filter for only rows labeled as ``stem'' or ``code'' and for rows where the responses were not tied. These preference rows contain a prompt, two responses, and an overall preference score labeled by ScaleAI annotators noted to have a relevant degree and/or software engineering work experience \citep{wang2025helpsteer3preferenceopenhumanannotatedpreference}. From Tulu2.5, we filter for the \texttt{prm800k\_pairs\_phase2} subset, where chosen and rejected preference pairs have been selected based on mathematical correctness \citep{ivison2024unpacking, lightman2023lets}. Since \texttt{STEM} annotations come from ground truth verifiers and/or hired human rubric labelers with relevant domain expertise, rather than crowdsourced organic usage like in \texttt{Human-155k}, we use this dataset to evaluate out-of-distribution performance of dataweighting.

\textcolor{black}{For each domain, we use a meta-optimization target split and a held-out test split, in addition to the separate checkpoint selection set described below. The target split supplies the objective in Section~\ref{sec:dataweighting}; the test split is used only for evaluation.}
\texttt{Expert} has a target/test split of $3000$/$547$. \texttt{STEM} has a target/test split of $6796$/$6796$.

\paragraph{Reward model training procedure.} We adopt a standard reward model training procedure where we minimize the Bradley-Terry objective, as discussed in Section \textcolor{black}{\ref{sec:results}}. Unless specified otherwise, the base pre-trained model we start from is \texttt{Qwen2.5-0.5B-Instruct}. \textcolor{black}{The 1.5B experiments use \texttt{Qwen2.5-1.5B-Instruct}; the transfer experiments also include 3B and 7B reward models (Section~\ref{sec:parametric-experiments}).} When training a reward model, we remove the \texttt{vocab\_head} and replace it with a linear \texttt{reward\_head} that takes in the hidden dimension output from a \texttt{<|cls|>} token and outputs a scalar reward.

\textcolor{black}{We use the same separate $100$-example checkpoint selection set
across methods for early stopping. This simple protocol lets us evaluate
dataweighting; we leave alternatives such as cross-validation to future work.}

\Needspace{12\baselineskip}
\subsection{LLM-as-a-judge prompt}
\label{appendix:llm-judge-prompt}

We detail the prompt used for synthetic labels below. Synthetic labels were generated by \textcolor{black}{Gemini 3 Flash} with default settings.
\begin{quote}
\ttfamily
\raggedright
You will be shown a user prompt followed by two responses generated by AI assistants.
Your task is to determine which response better satisfies the user's intent.

Evaluate the responses from the *user's perspective*, focusing on the following criteria:

\begin{itemize}
    \item \textit{Helpfulness}: Does the response effectively assist or inform the user?
    \item \textit{Relevance}: Does it directly address the user's question or request?
    \item \textit{Accuracy}: Are the facts and reasoning correct?
    \item \textit{Depth and Detail}: Does it provide sufficient explanation or insight?
    \item \textit{Creativity} (when applicable): Is the response original or engaging?
\end{itemize}

Important guidelines: Your evaluation must be impartial and based solely on quality and alignment with the user's needs.

Return your decision for the winner, either A or B in the tags <winner> ... </winner>.
\end{quote}

\clearpage
\section{\textcolor{black}{Additional ranking experiments}}
\label{app:ranking-followups}

This appendix provides graphical versions of the ranking comparisons in
Section~\ref{sec:nontrain-target} and examines alternative rank surrogates.
The parametric ranking-transfer experiment is reported in
Figure~\ref{fig:ranking-transfer-by-size} in the main text.

\Needspace{0.35\textheight}
\subsection{Ranking performance on held-out models}
\label{app:ranking-performance}

\begin{figure}[H]
    \centering
    \includegraphics[width=0.94\linewidth]{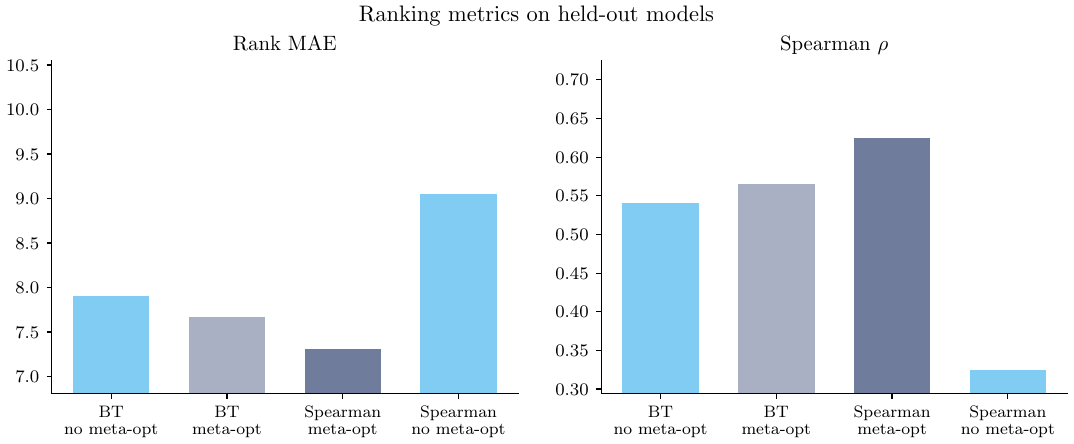}
    \caption{{\em BT and Spearman: standard vs.\ meta-optimized reward models.} \textcolor{black}{Rank MAE (lower is better) and Spearman~$\rho$ (higher is better) on held-out test models.}}
    \label{fig:leaderboard-full-test-bars}
\end{figure}

\begin{figure}[H]
    \centering
    \includegraphics[width=0.76\linewidth]{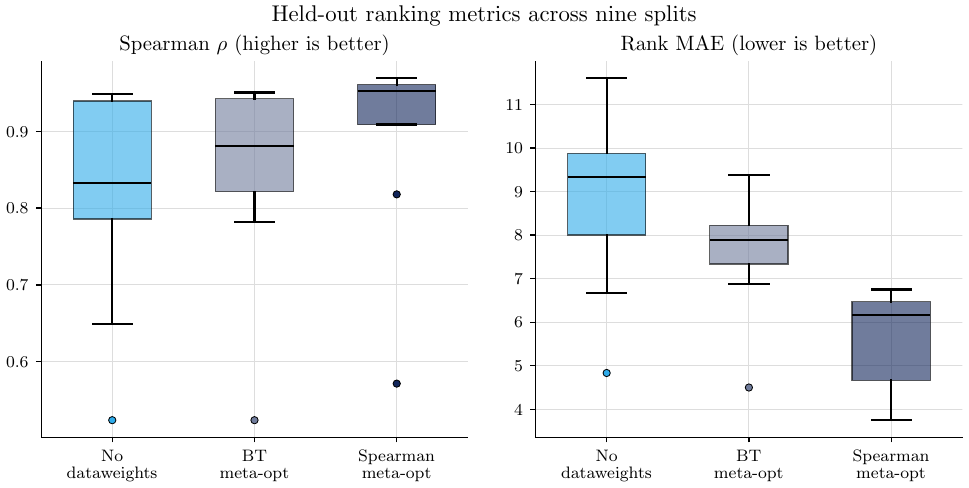}
    \caption{{\em Rank-targeted meta-optimization improves held-out ranking across
    splits.} Held-out Spearman~$\rho$ (left, higher is better) and Rank MAE (right,
    lower is better) across nine model-disjoint holdout splits, comparing unweighted
    Bradley-Terry training, BT-targeted meta-optimization, and Spearman-targeted
    meta-optimization. The Spearman-targeted objective shifts the whole distribution
    toward better rankings (higher~$\rho$, lower MAE) relative to both baselines.}
    \label{fig:ranking-distribution-splits}
\end{figure}

\subsection{Alternative rank surrogates}
\label{app:ranking-surrogates}

  The soft-Spearman surrogate is one of several differentiable choices for the target
  objective. Replacing it with a smooth footrule distance (a smooth-$\ell_1$ penalty on
  the soft ranks, aligned with Rank MAE) performs comparably---matching or slightly
  exceeding soft-Spearman on held-out~$\rho$ (e.g.\ $0.965$ vs.\ $0.957$ on one split),
  consistent with its closer alignment to the MAE metric---whereas a soft-Kendall-$\tau$
  surrogate is consistently weaker. We therefore use soft-Spearman by default.

\clearpage
\section{Additional results}
\label{app:additional-results}

\Needspace{0.52\textheight}
\subsection{Leveraging heterogeneous data}
\label{app:heterogeneous-data}

\begin{figure}[H]
    \centering
    \includegraphics[width=0.54\linewidth]{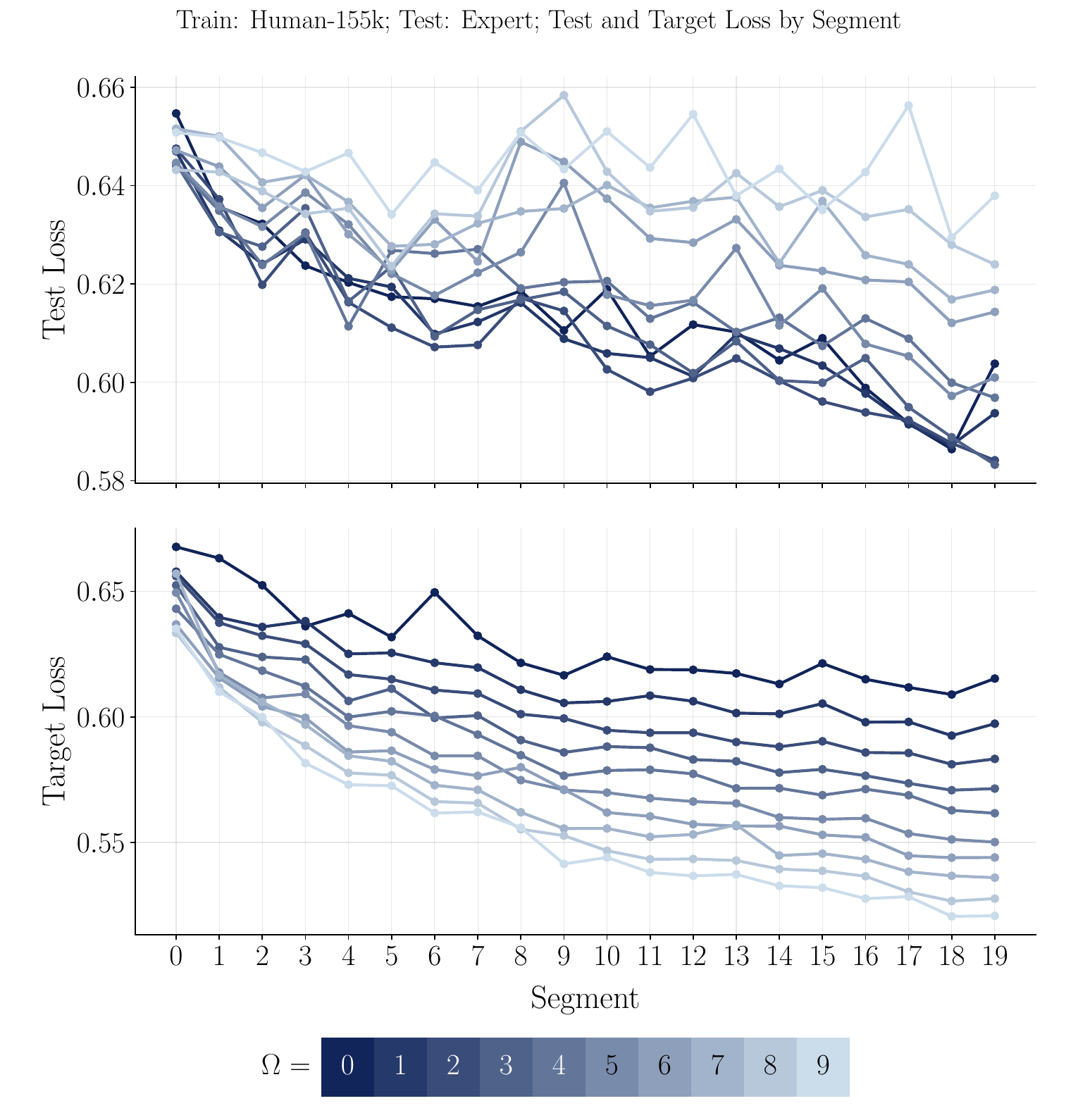}
    \caption{Test and target loss by segment for model trained on \texttt{Human-155k}, targeting \texttt{Expert}. This example shows notable overfitting. \textcolor{black}{Although the first few metagradient steps improve test loss, further steps overfit the target.}}
    \label{fig:human-155k-expert}
\end{figure}

\begin{figure}[H]
    \centering
    \includegraphics[width=0.54\linewidth]{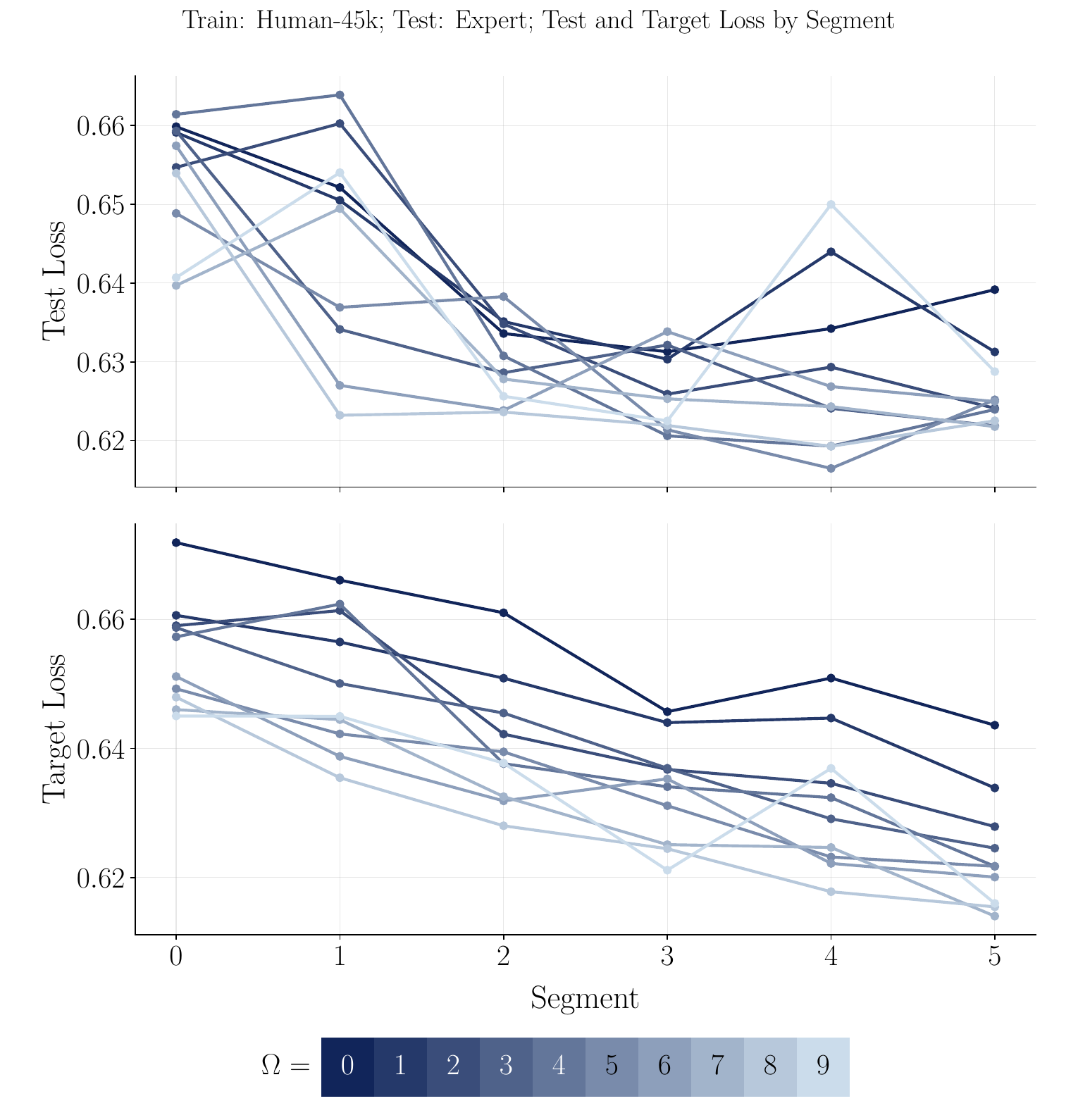}
    \caption{Test and target loss by segment for model trained on \texttt{Human-45k}, targeting \texttt{Expert}.}
    \label{fig:human-45k-expert}
\end{figure}

\begin{figure}[H]
    \centering
    \includegraphics[width=0.54\linewidth]{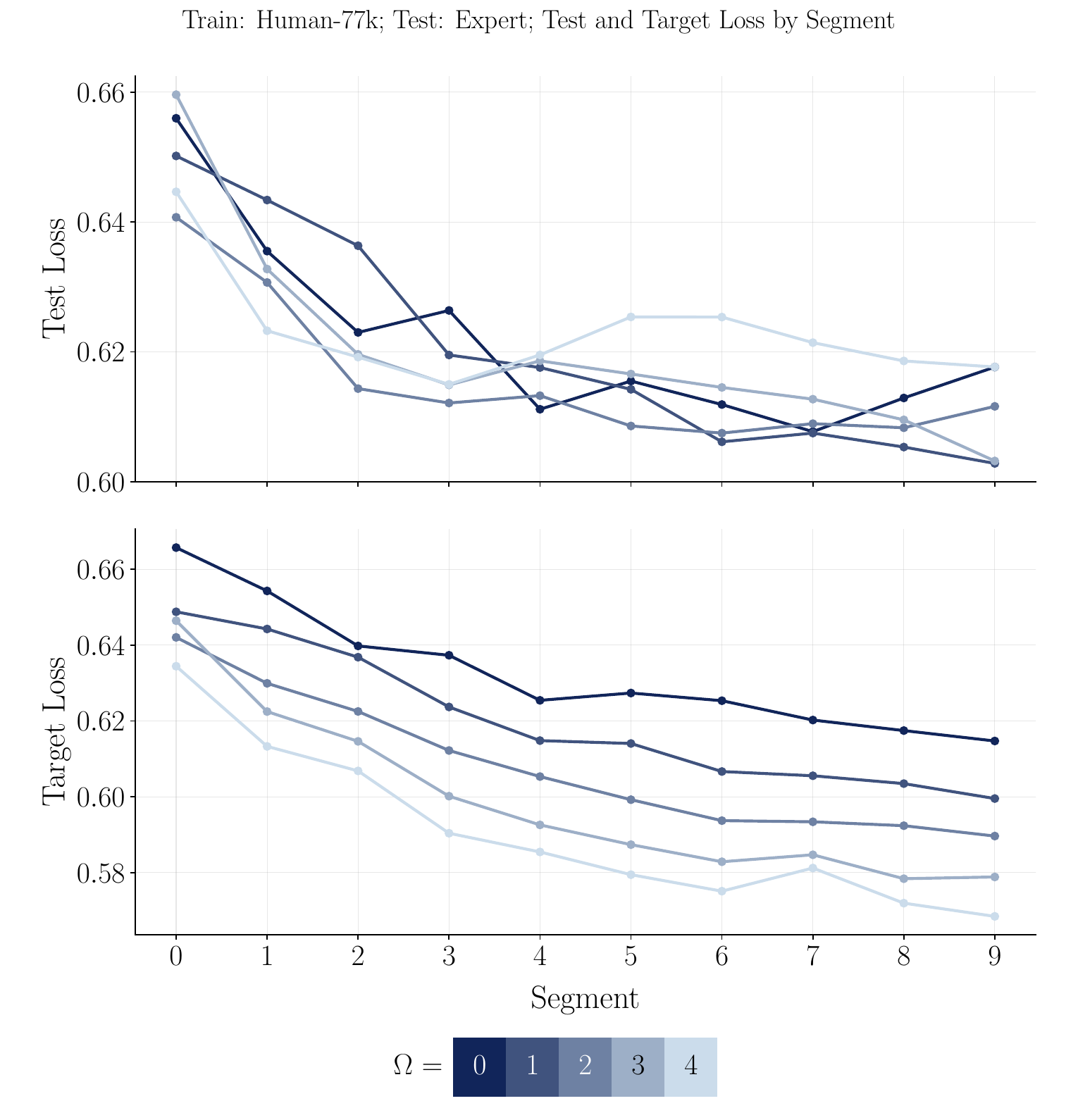}
    \caption{Test and target loss by segment for model trained on \texttt{Human-77k}, targeting \texttt{Expert}.}
    \label{fig:human-77k-expert}
\end{figure}

\Needspace{0.42\textheight}
\subsection{Downweighting noisy and adversarial data}
\label{app:noisy-data}

\paragraph{Evolution of bot and human dataweights.}
In Figure~\ref{fig:dataweight-evolution-30k}, we plot the distribution of
dataweights for bot and human datapoints. We observe the dataweight distribution
for bots shifting down throughout the dataweighting process.

\begin{figure}[H]
    \centering
    \includegraphics[width=0.82\linewidth]{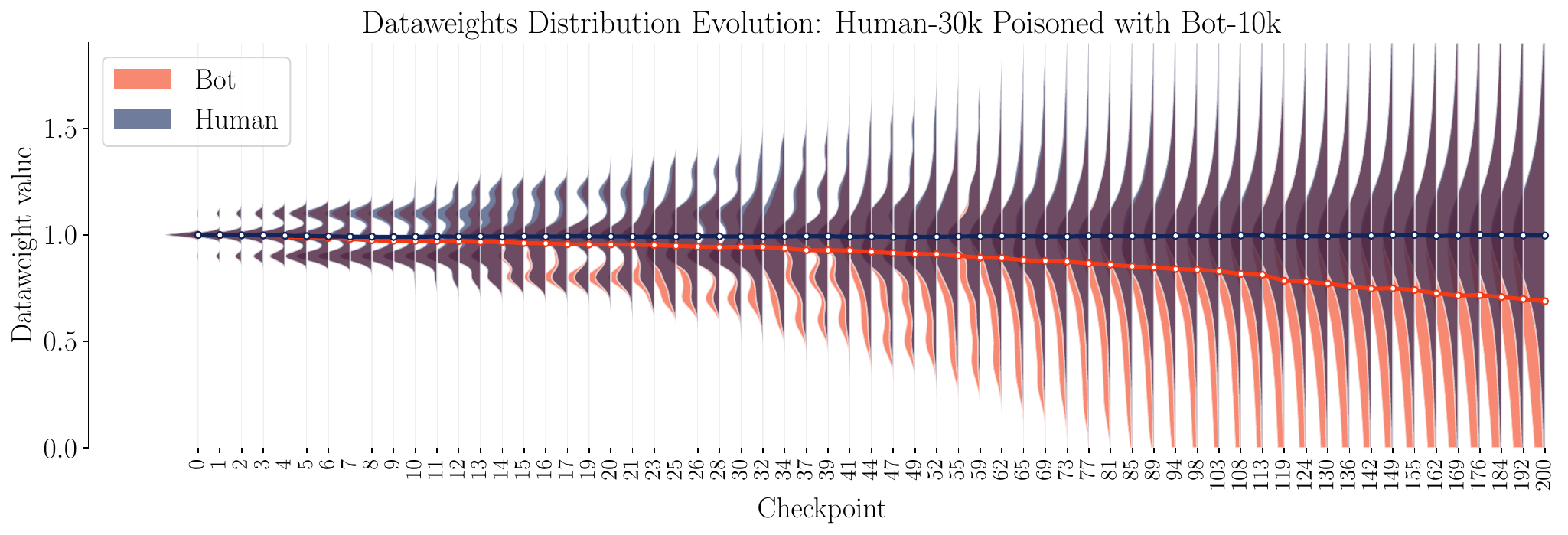}
    \caption{{\em Dataweighting automatically separates bot and human data.} Here, we visualize a KDE ridge plot of the learned dataweight distribution for human and bot data points throughout the dataweight learning procedure. \textcolor{black}{Each horizontal ridge shows a kernel density estimate of the dataweights at one checkpoint.} Bot data points decrease in weight over time. }
    \label{fig:dataweight-evolution-30k}
\end{figure}

\begin{figure}[H]
    \centering
    \includegraphics[width=0.62\linewidth]{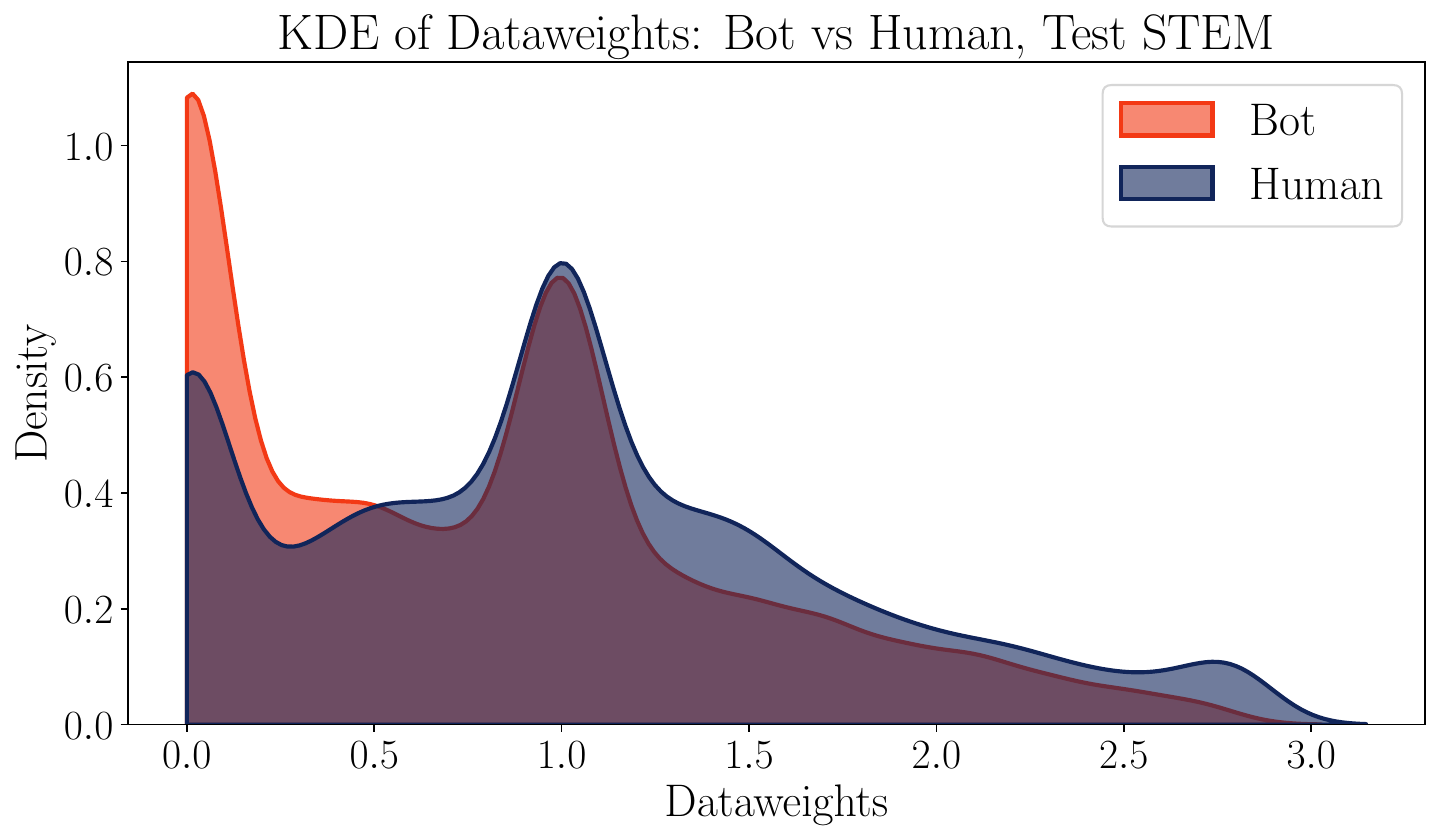}
    \caption{Kernel density estimate of \textcolor{black}{learned dataweights} for \texttt{Bot} versus \texttt{Human} examples, targeting \texttt{STEM}.}
    \label{fig:kde-dataweights-bot-human-stem}
\end{figure}

\begin{figure}[H]
    \centering
    \includegraphics[width=0.54\linewidth]{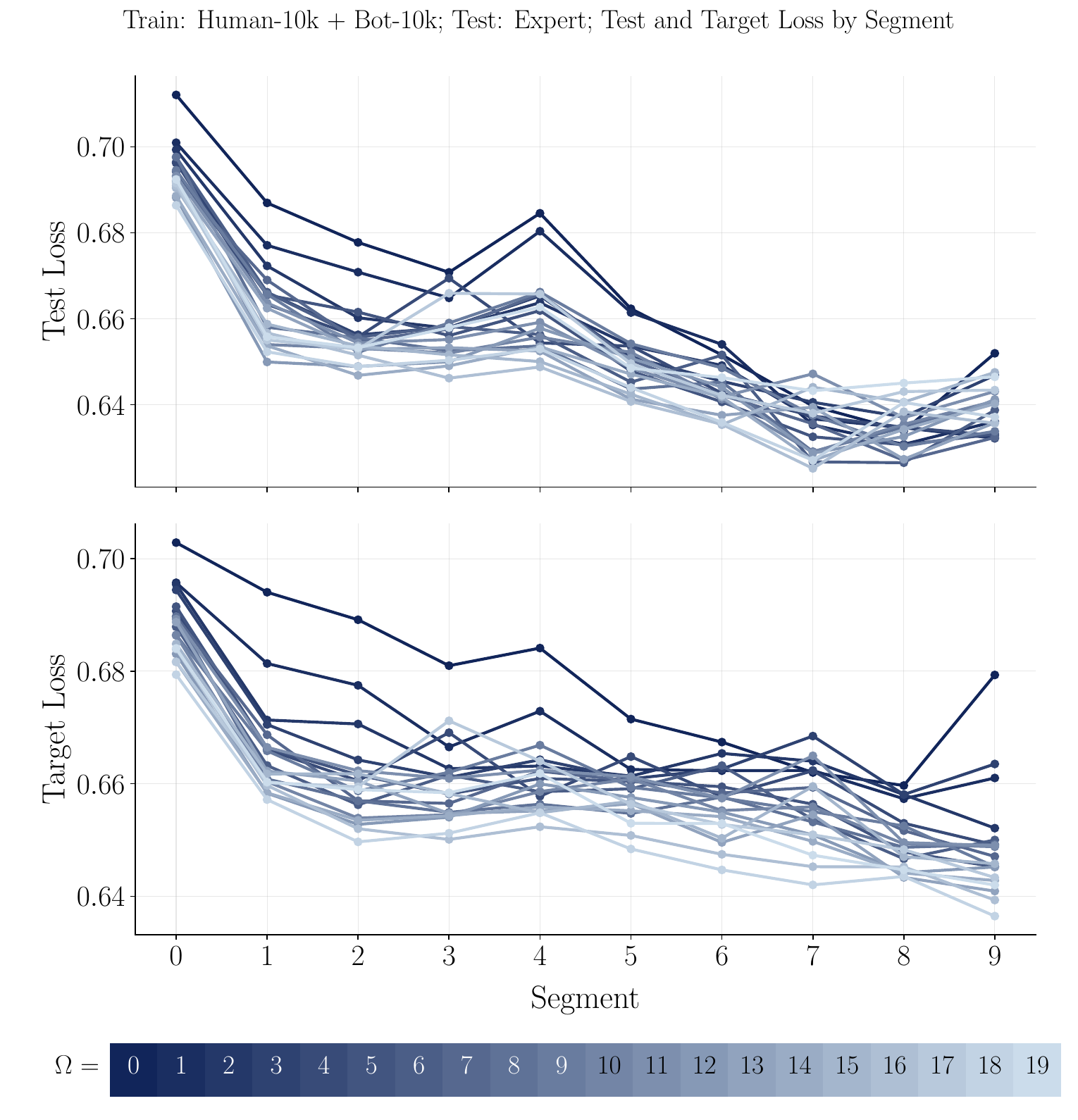}
    \caption{Test and target loss by segment for model trained on \texttt{Human-10k} + \texttt{Bot-10k}, targeting \texttt{Expert}.}
    \label{fig:human-10k-bot-10k-expert}
\end{figure}

\begin{figure}[H]
    \centering
    \includegraphics[width=0.54\linewidth]{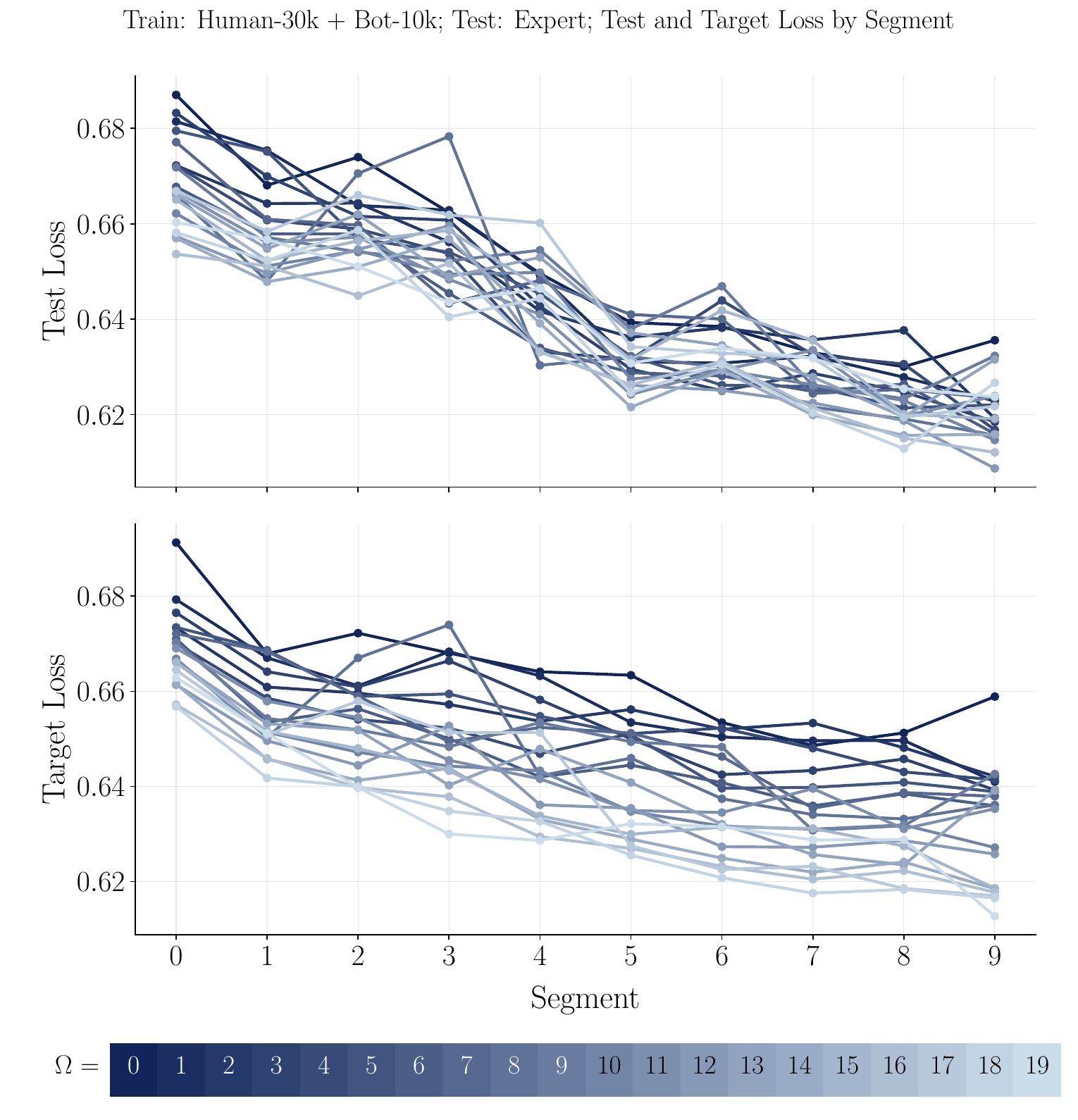}
    \caption{Test and target loss by segment for model trained on \texttt{Human-30k} + \texttt{Bot-10k}, targeting \texttt{Expert}.}
    \label{fig:human-30k-bot-10k-expert}
\end{figure}

\begin{figure}[H]
    \centering
    \includegraphics[width=0.54\linewidth]{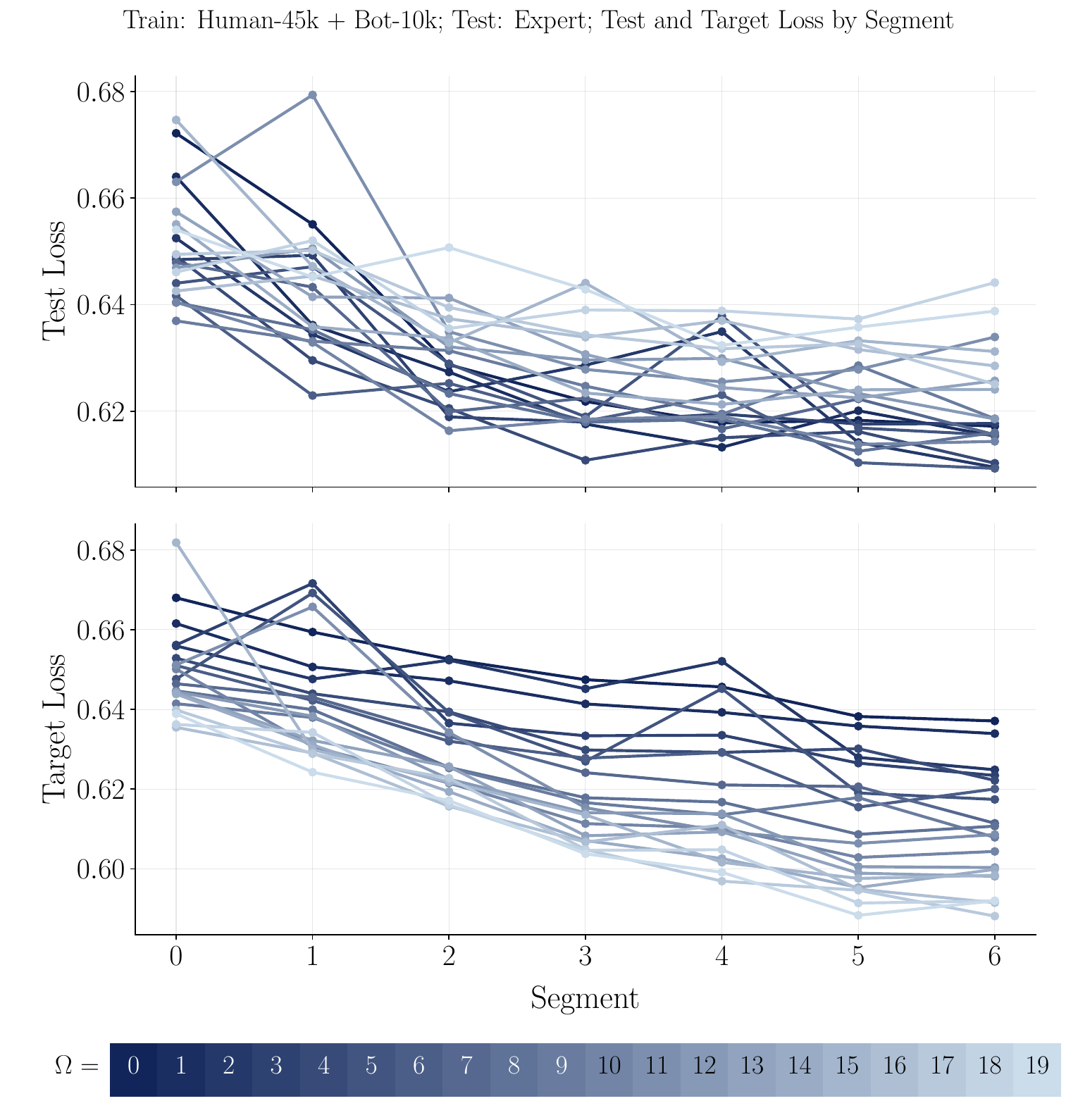}
    \caption{Test and target loss by segment for model trained on \texttt{Human-45k} + \texttt{Bot-10k}, targeting \texttt{Expert}.}
    \label{fig:human-45k-bot-10k-expert}
\end{figure}

\begin{figure}[H]
    \centering
    \includegraphics[width=0.54\linewidth]{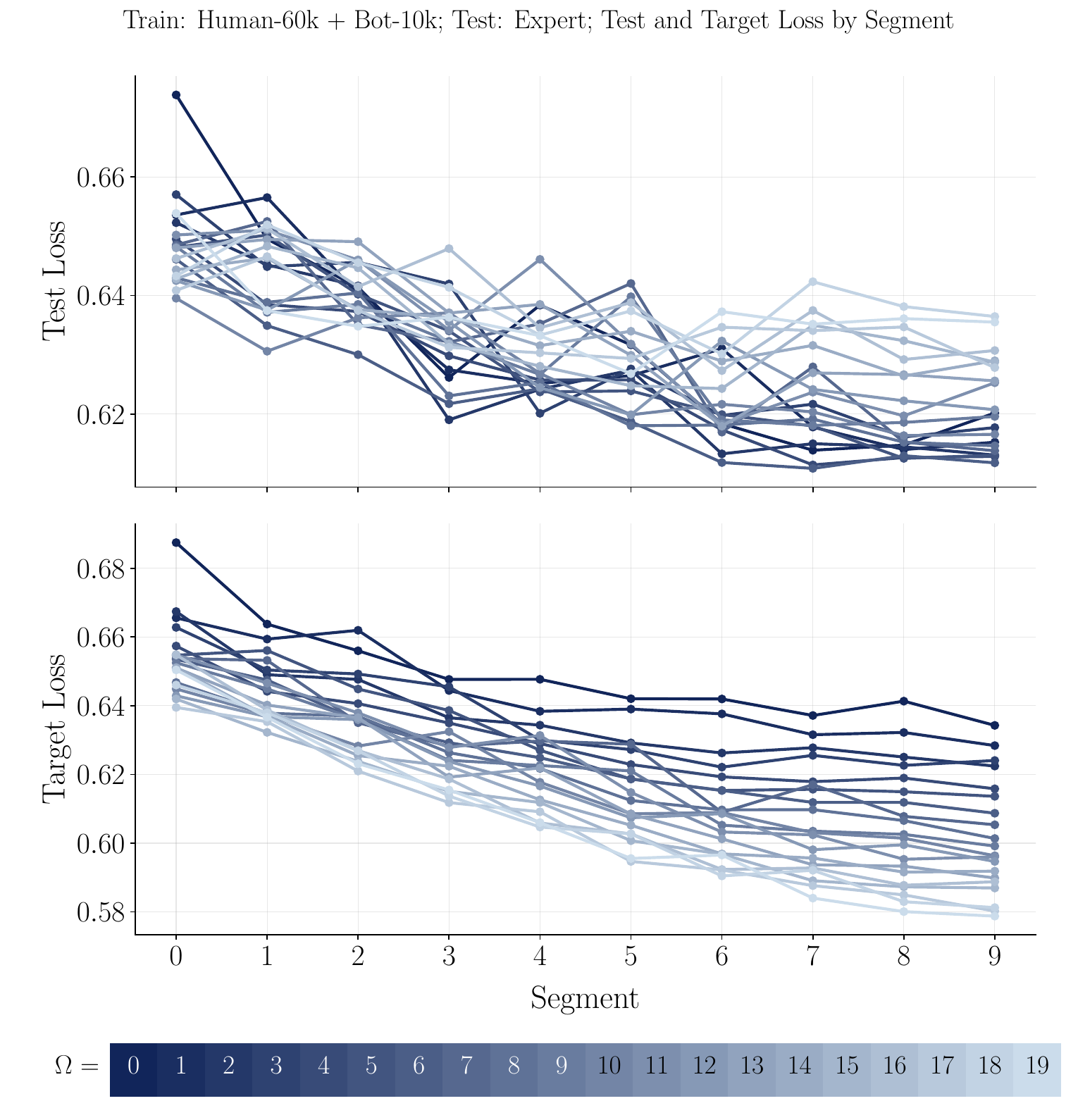}
    \caption{Test and target loss by segment for model trained on \texttt{Human-60k} + \texttt{Bot-10k}, targeting \texttt{Expert}. Notable overfitting to target observed.}
    \label{fig:human-60k-bot-10k-expert}
\end{figure}

\begin{figure}[H]
    \centering
    \includegraphics[width=0.54\linewidth]{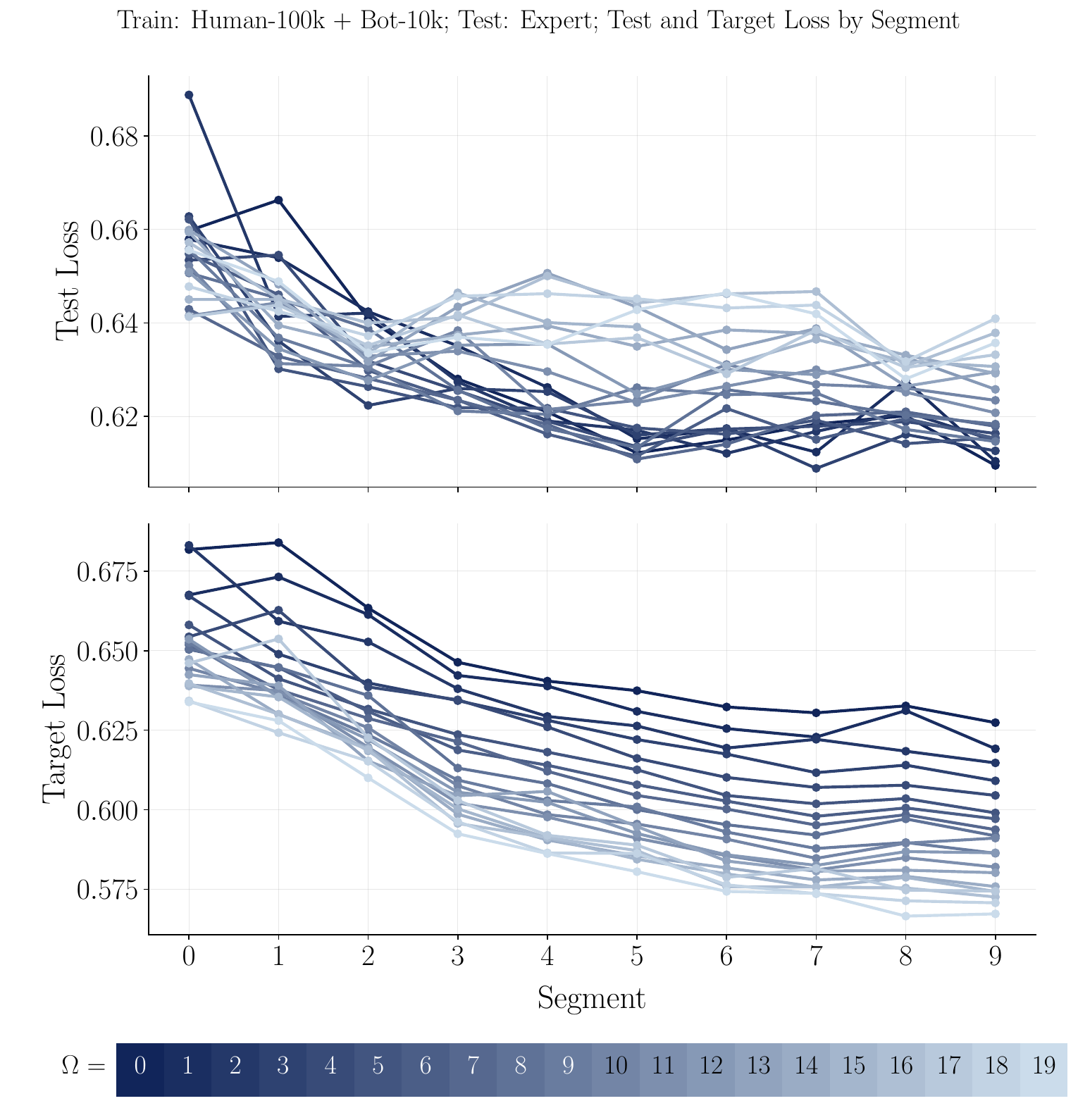}
    \caption{Test and target loss by segment for model trained on \texttt{Human-100k} + \texttt{Bot-10k}, targeting \texttt{Expert}. Notable overfitting to target observed.}
    \label{fig:human-100k-bot-10k-expert}
\end{figure}

\Needspace{0.42\textheight}
\subsection{Customizing to specific domains}
\label{app:customization}

\begin{figure}[H]
    \centering
    \includegraphics[width=0.66\linewidth]{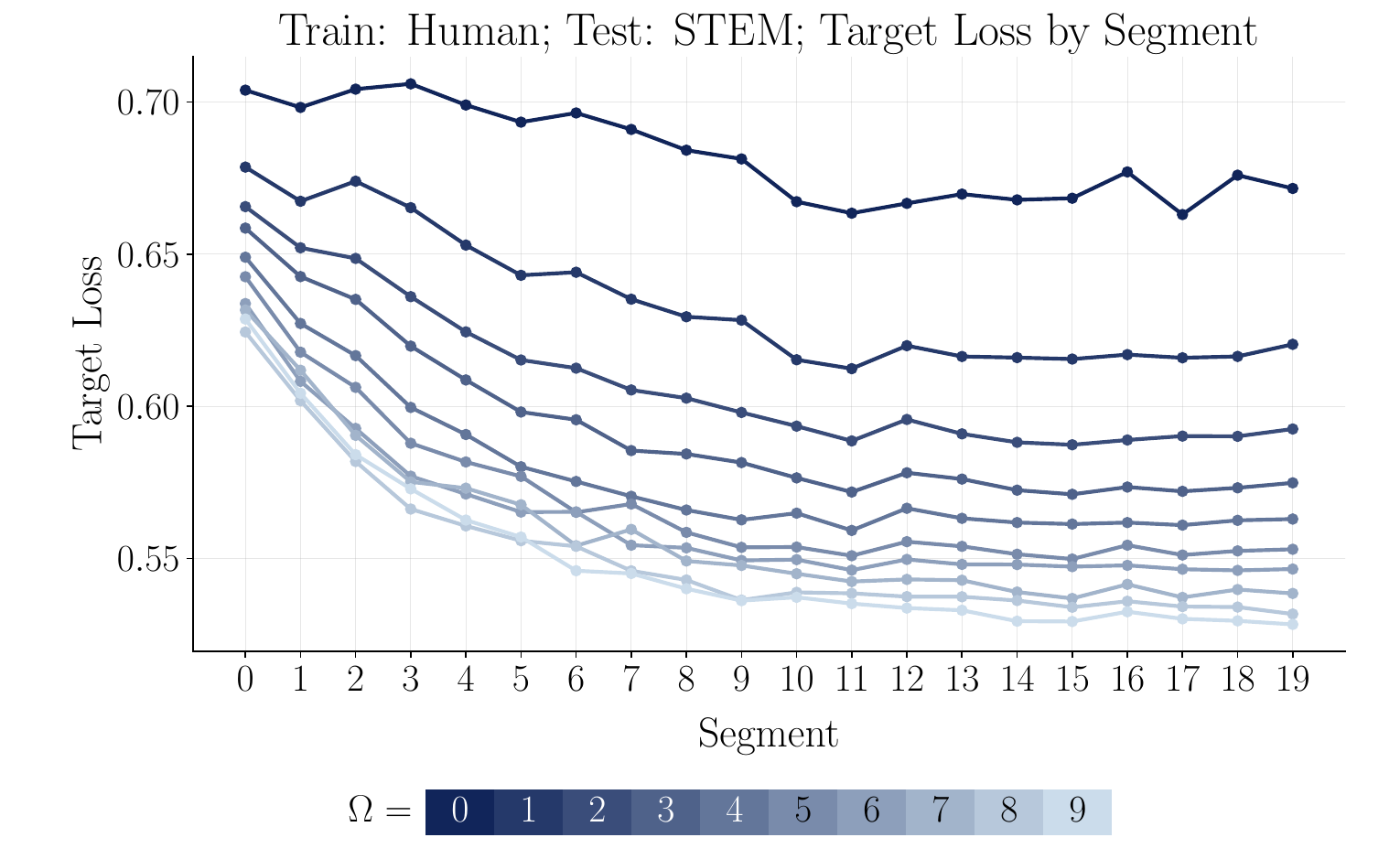}
    \caption{\textcolor{black}{Target loss} by segment for model trained on \texttt{Human-155k}, targeting \texttt{STEM}.}
    \label{fig:human-155k-stem}
\end{figure}

\Needspace{0.52\textheight}
\subsection{Incorporating synthetic data}
\label{app:synthetic-data}

\textcolor{black}{The human-only training trajectories
appear in Appendix~\ref{app:heterogeneous-data}. The following figures
show the synthetic-only and mixed-data trajectories and an equal-size
comparison.}

\begin{figure}[H]
    \centering
    \includegraphics[width=0.64\linewidth]{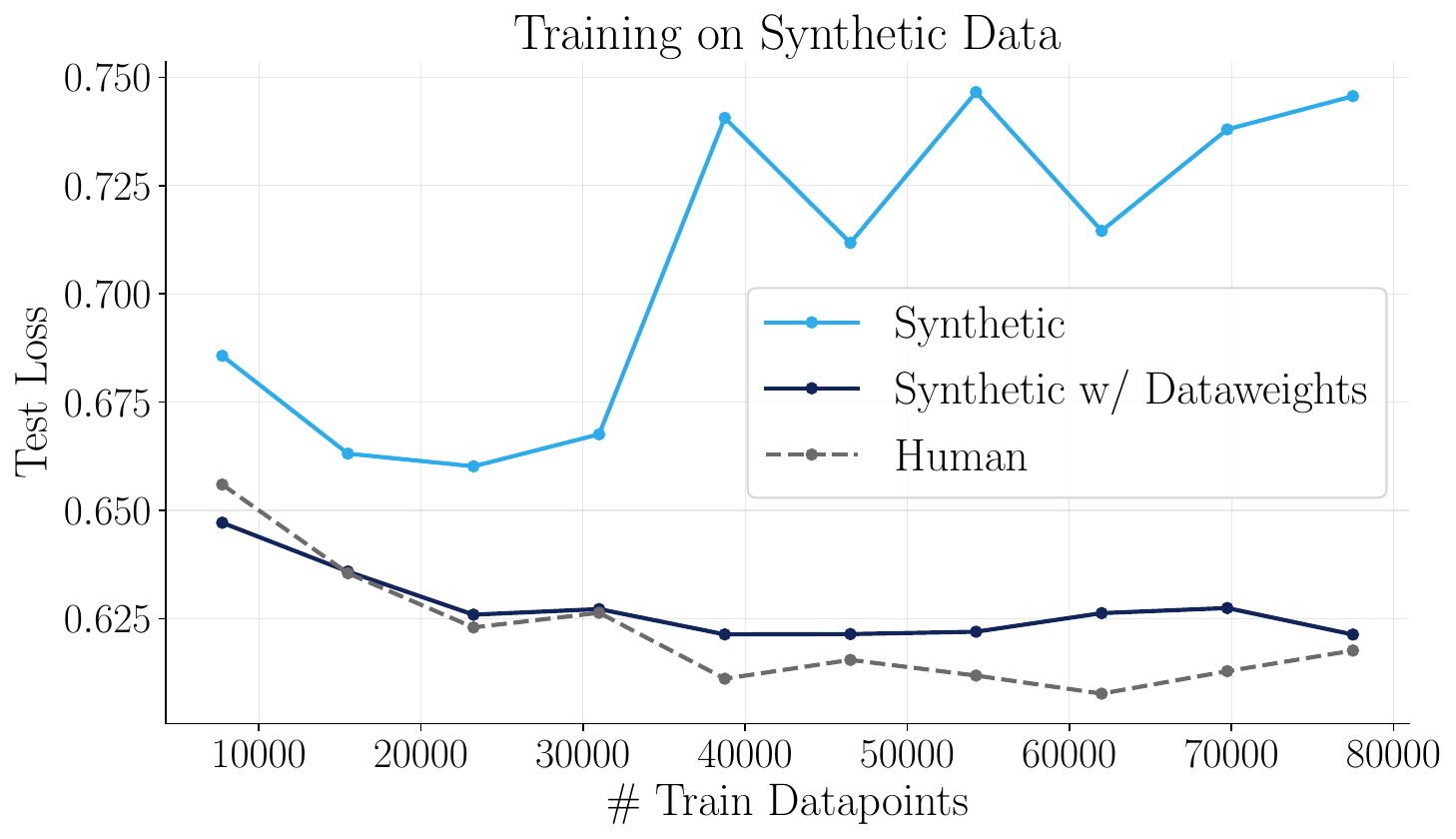}
    \caption{Test loss curves on \texttt{Expert} domain, while training on pure synthetic data (\texttt{Gemini-77k}), with and without dataweights. Test loss from training on \texttt{Human-77k} is shown for reference.}
    \label{fig:synth-data-curves}
\end{figure}

\begin{figure}[H]
    \centering
    \includegraphics[width=0.54\linewidth]{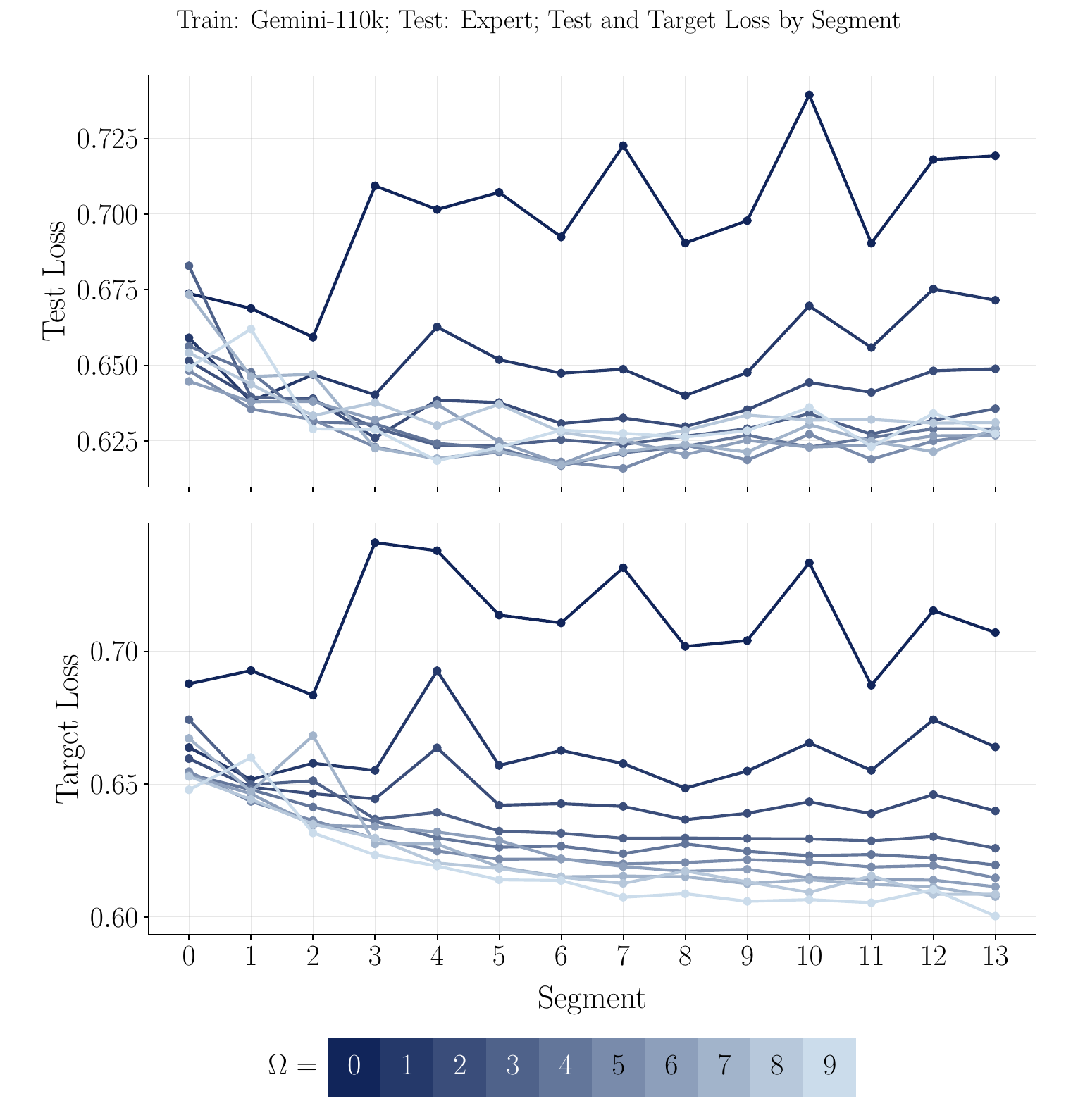}
    \caption{Test and target loss by segment for model trained on \texttt{Gemini-110k}, targeting \texttt{Expert}.}
    \label{fig:gemini-110k-expert}
\end{figure}

\begin{figure}[H]
    \centering
    \includegraphics[width=0.54\linewidth]{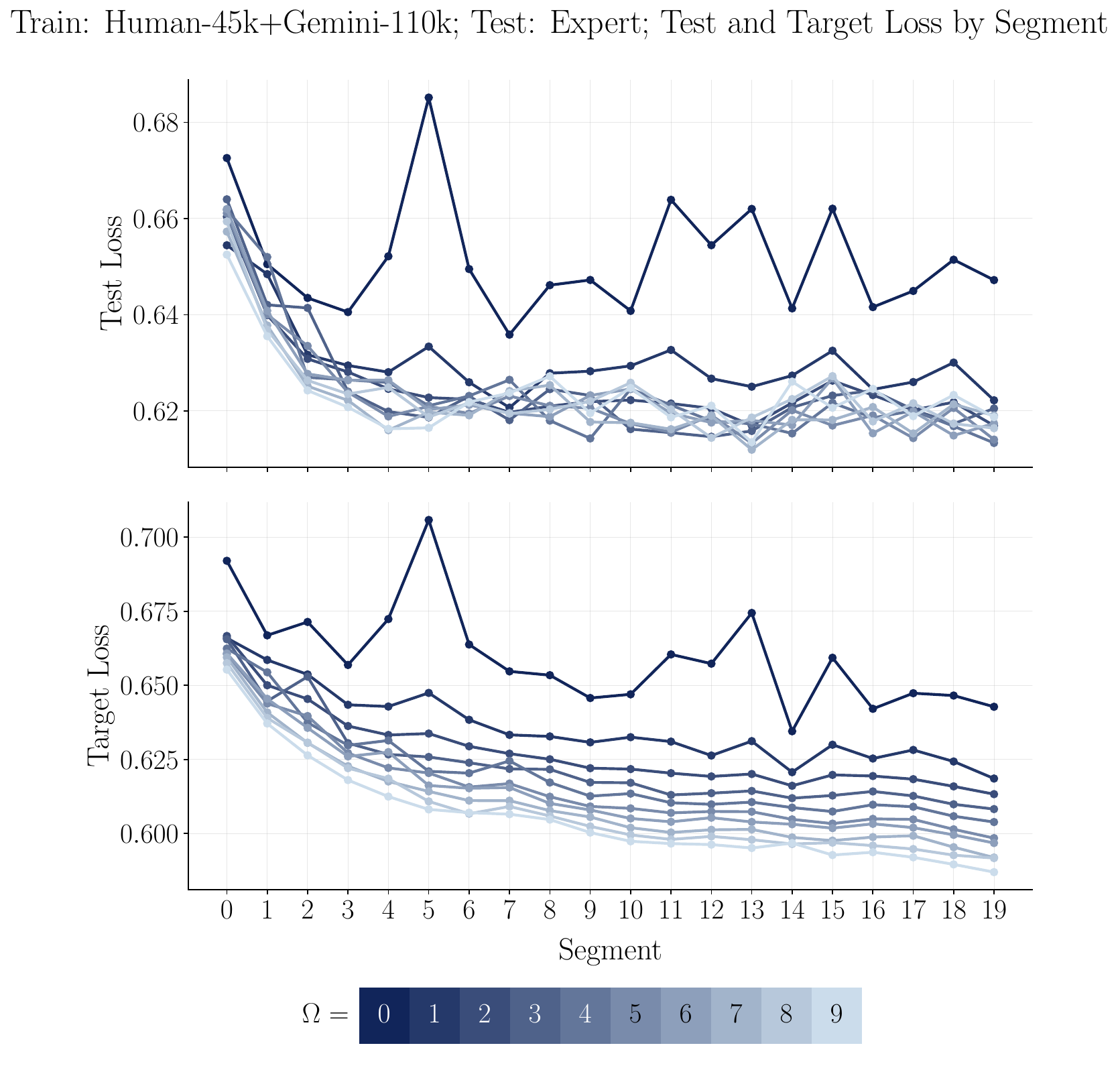}
    \caption{Test and target loss for model trained on \texttt{Human-45k} + \texttt{Gemini-110k}, targeting \texttt{Expert}.}
    \label{fig:human-45k-gemini-110k-expert}
\end{figure}

In particular, we notice test loss does not decrease significantly when dataweights are uniform. As metagradient steps are taken and dataweights are learned, test loss begins decreasing. In this example, the target and test curves are not perfectly in step, suggesting some overfitting may be occurring.

\begin{figure}[H]
    \centering
    \includegraphics[width=0.64\linewidth]{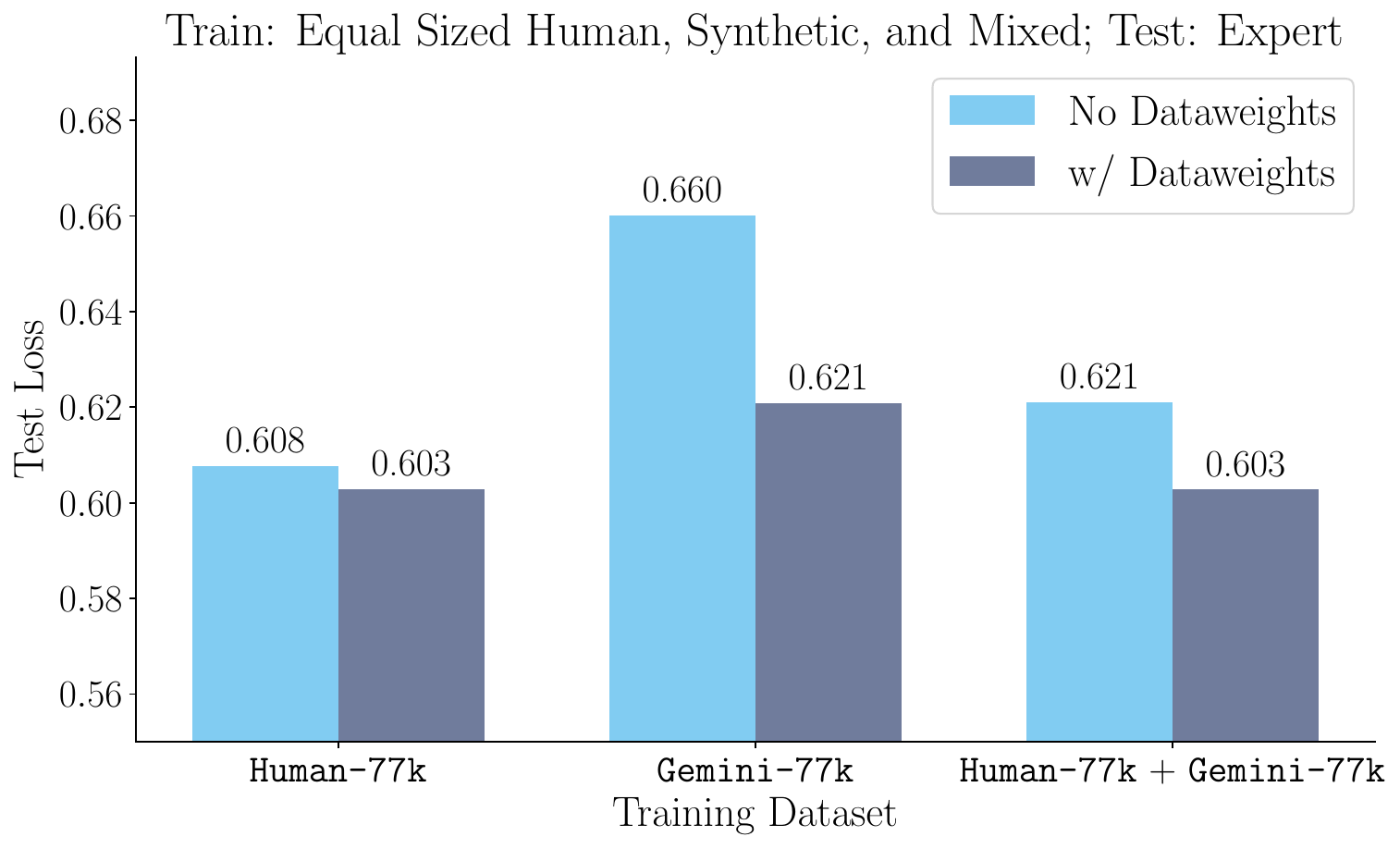}
    \caption{Test loss comparison for equal-sized \texttt{Human}, \texttt{Synthetic}, and mixed training sets, targeting \texttt{Expert}.}
    \label{fig:equal-size-comparison-expert}
\end{figure}

\begin{figure}[H]
    \centering
    \includegraphics[width=0.54\linewidth]{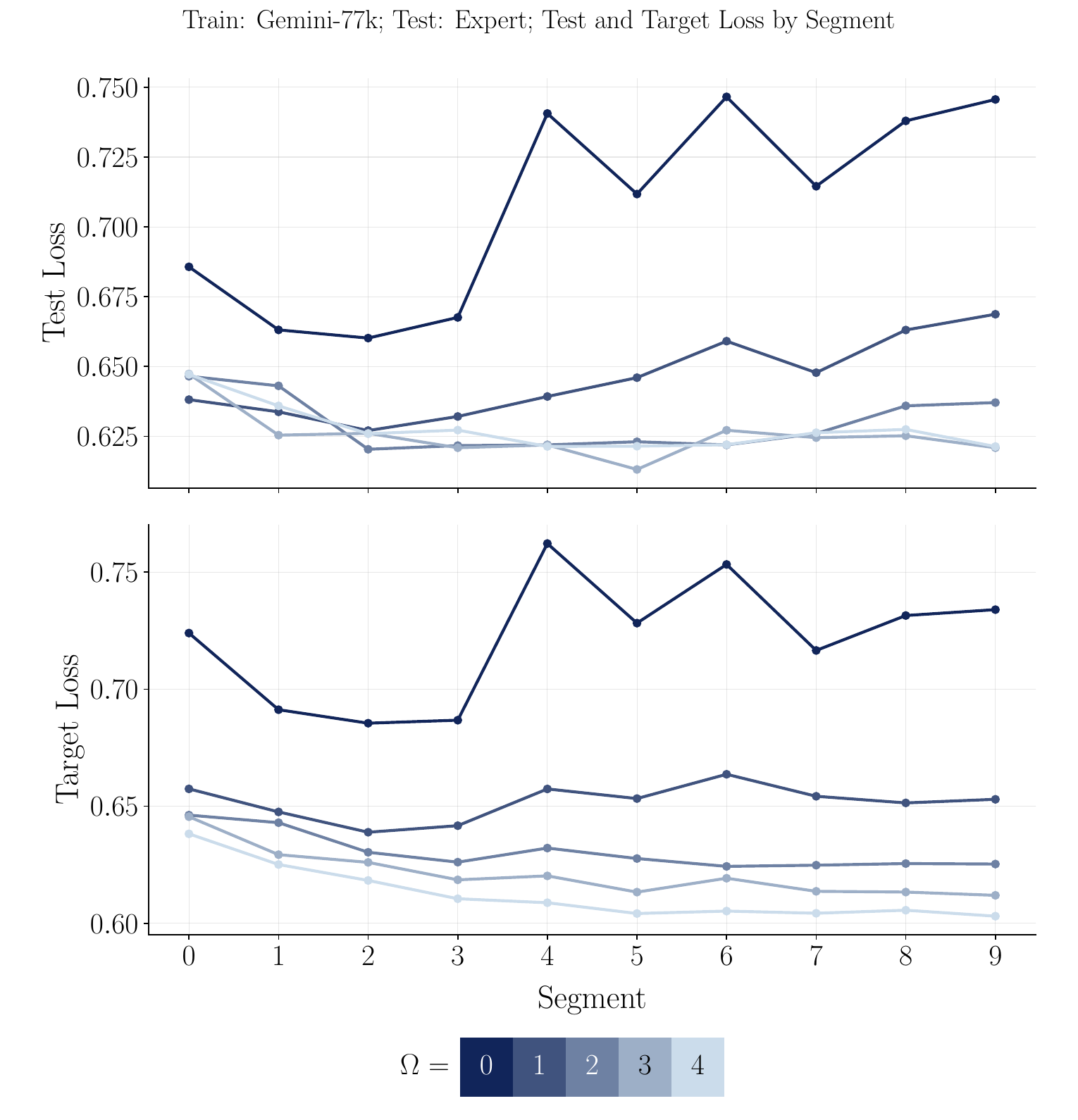}
    \caption{Test and target loss by segment for model trained on \texttt{Gemini-77k}, targeting \texttt{Expert}.}
    \label{fig:gemini-77k-expert}
\end{figure}

\begin{figure}[H]
    \centering
    \includegraphics[width=0.54\linewidth]{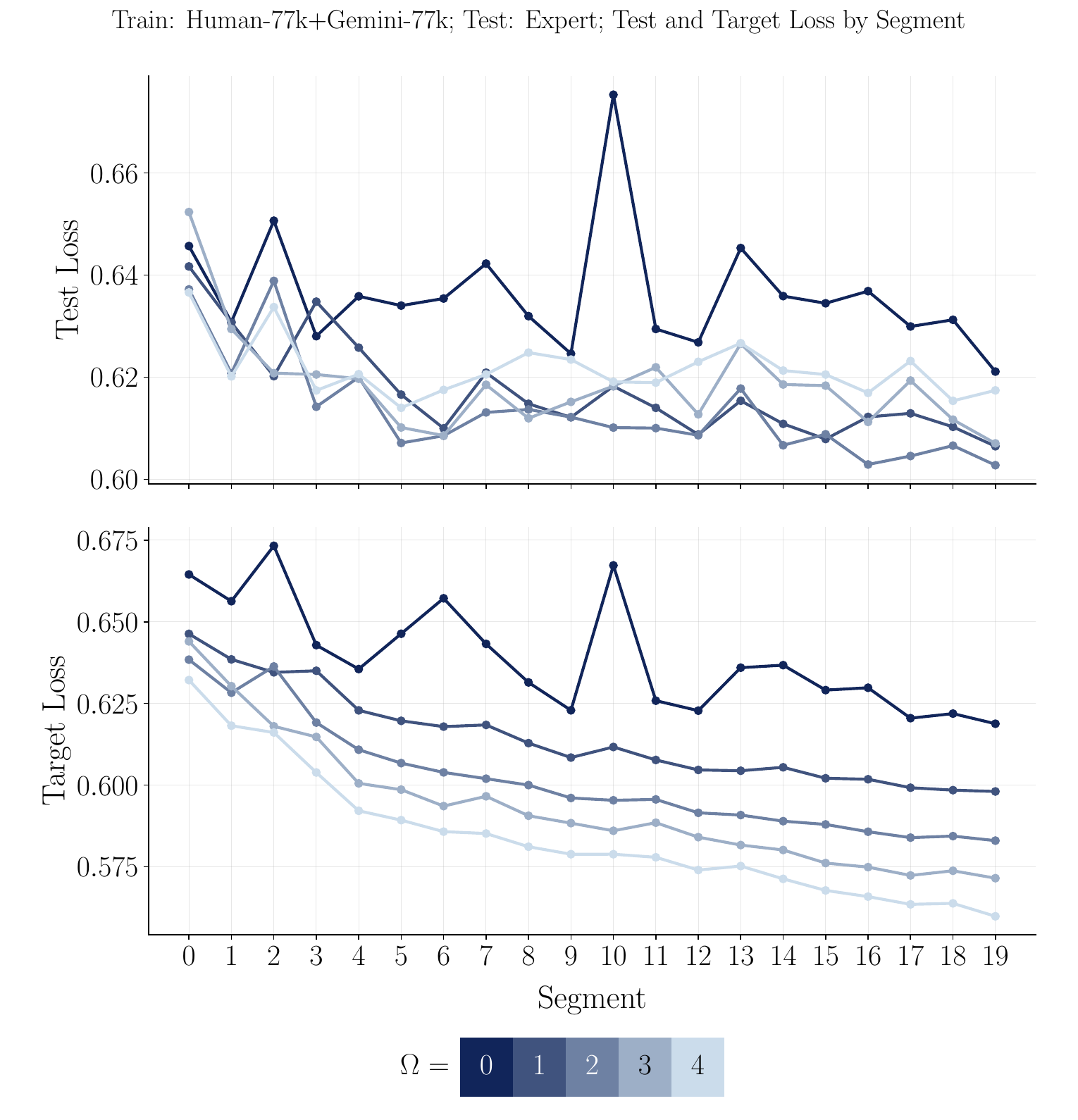}
    \caption{Test and target loss by segment for model trained on \texttt{Human-77k} + \texttt{Gemini-77k}, targeting \texttt{Expert}.}
    \label{fig:human-77k-gemini-77k-expert}
\end{figure}

\Needspace{0.52\textheight}
\subsection{Transferring dataweights to larger models}
\label{app:model-transfer}

\begin{figure}[H]
    \centering
    \includegraphics[width=0.54\linewidth]{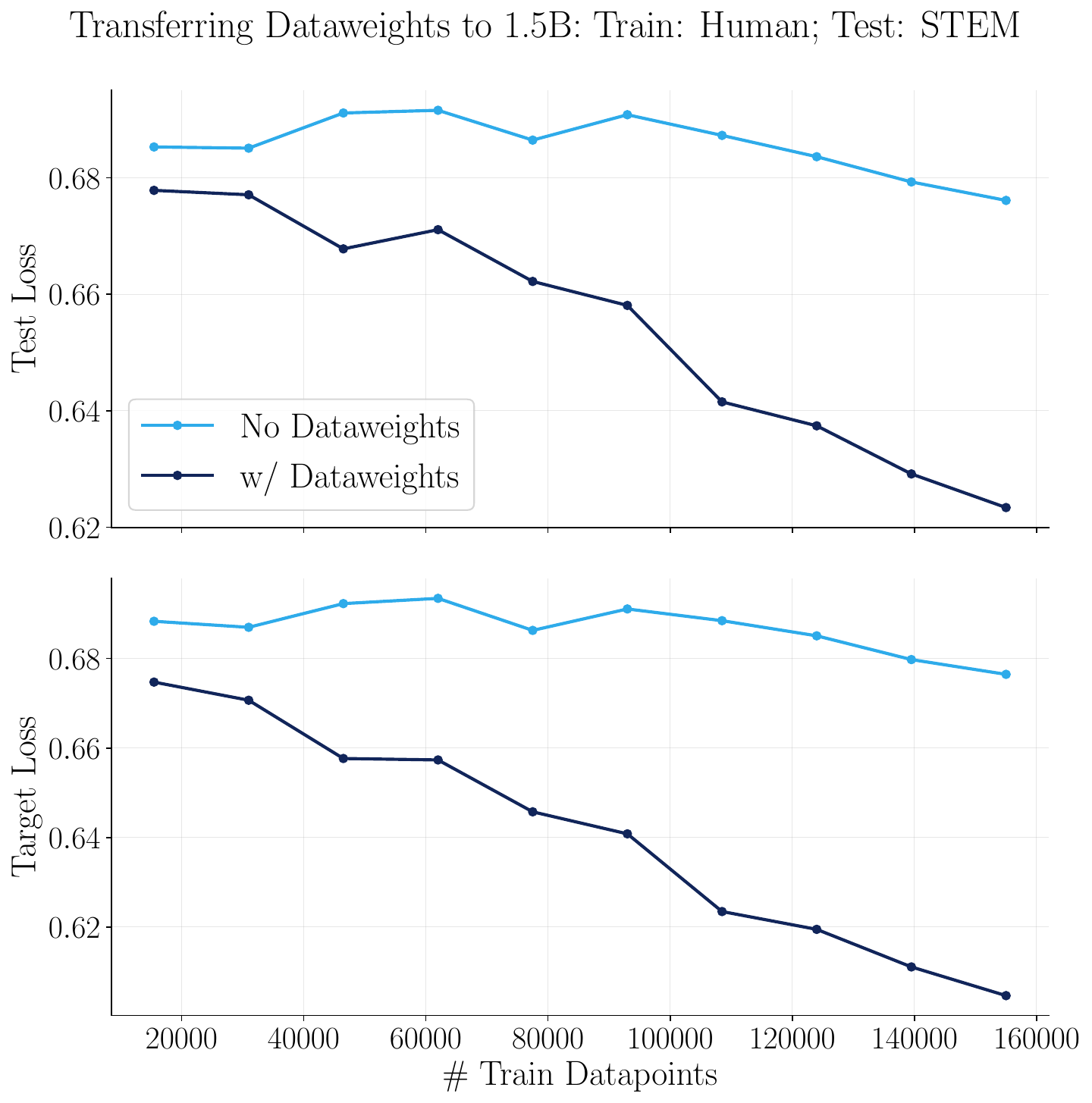}
    \caption{{\em Per-sample dataweights learned at 0.5B transfer to a 1.5B reward model.} Target and test loss curves on the \texttt{STEM} domain for a 1.5B reward model, with and without dataweights learned at the 0.5B scale.}
    \label{fig:transfer-stem}
\end{figure}

\begin{figure}[H]
    \centering
    \includegraphics[width=0.54\linewidth]{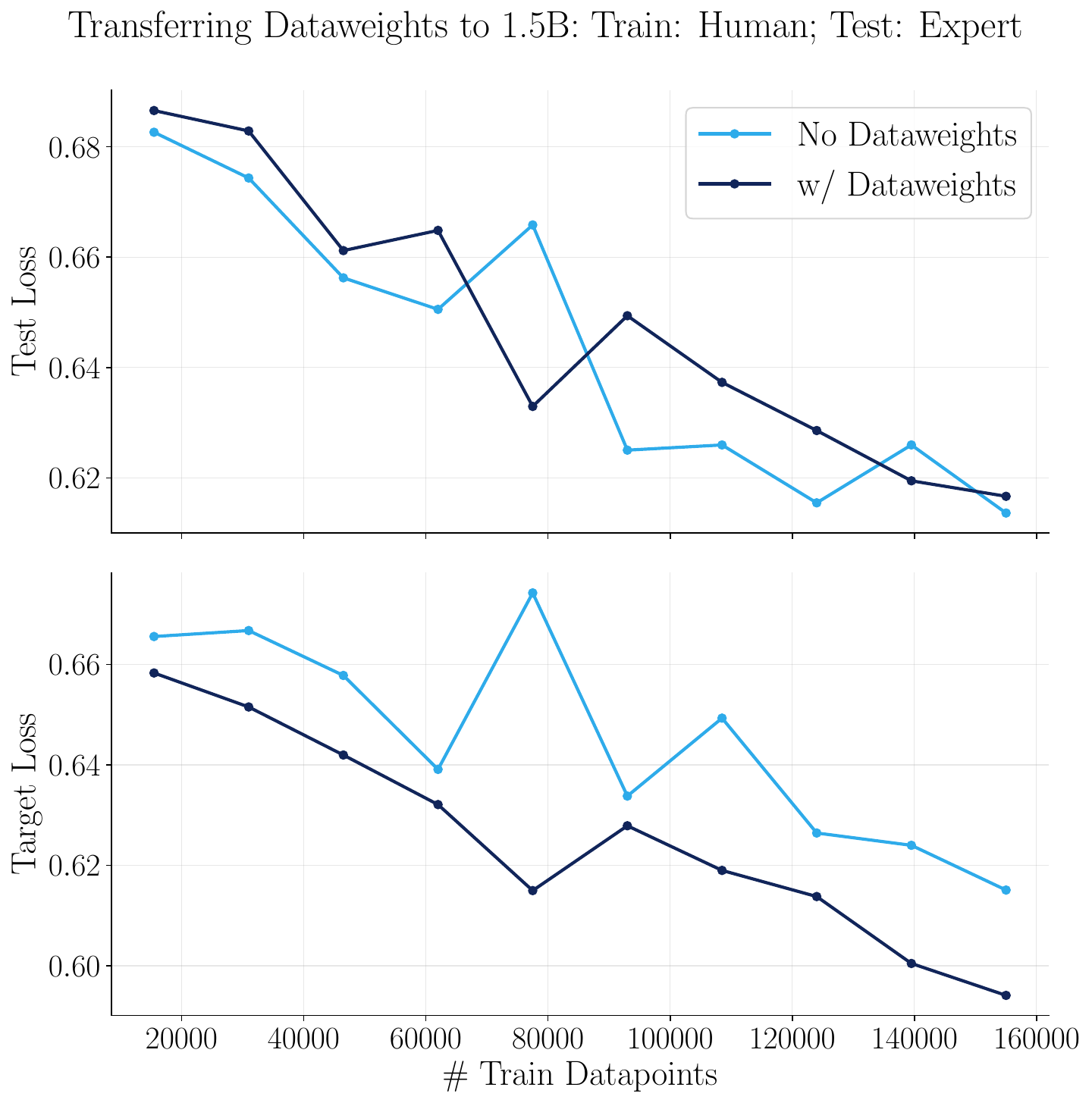}
    \caption{Transfer of \textcolor{black}{learned dataweights} to \textcolor{black}{a 1.5B-parameter reward model} trained on \texttt{Human} data, targeting \texttt{Expert}.}
    \label{fig:transfer-1.5b-human-expert}
\end{figure}

\begin{figure}[H]
    \centering
    \includegraphics[width=0.54\linewidth]{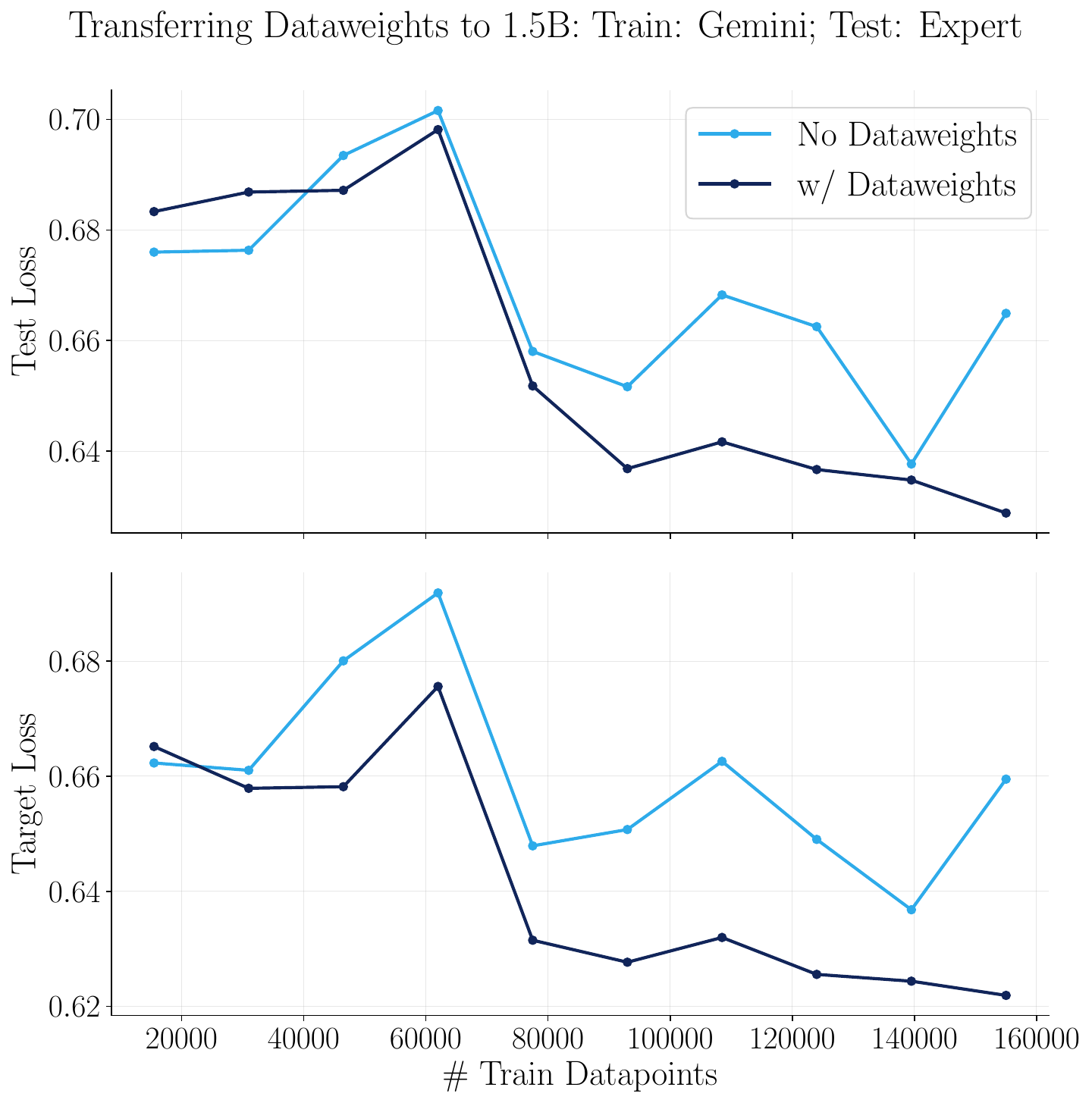}
    \caption{Transfer of \textcolor{black}{learned dataweights} to \textcolor{black}{a 1.5B-parameter reward model} trained on \texttt{Gemini} data, targeting \texttt{Expert}.}
    \label{fig:transfer-1.5b-gemini-expert}
\end{figure}

\clearpage
\section{Selection-baseline comparison: protocol and additional results}
\label{app:selection-baselines}

This appendix details the protocol behind Section~\ref{sec:selection-baselines} and
reports additional pool comparisons. \textcolor{black}{Figure~\ref{fig:selection-cost-benefit}
in the main text compares the methods' cost--performance tradeoffs.
We give the compute-cost accounting and additional two-pool trajectories below.}

\Needspace{8\baselineskip}
\subsection{Evaluation protocol and scorer configurations}
\label{app:selection-protocol}

Every scorer produces one score per training example (higher =
keep) for the same pool/target pair. For each keep fraction
$k \in \{10, 20, \ldots, 90, 95, 99\}$ we retain the top-scoring $k\%$ and tile the kept
rows back to the original pool size, so every arm performs an identical number of
optimizer steps regardless of $k$ (selection is therefore not confounded with a reduced
compute budget). Each arm trains with the same reward-model training procedure as all other experiments,
split into ten segments; we select the segment with the
lowest \textcolor{black}{loss on the checkpoint selection set} and report the loss on the disjoint held-out test split, so
\textcolor{black}{checkpoint selection does not use the held-out test set}. Every keep fraction has a
random-drop control, and the unweighted baseline is replicated over three seeds to
establish the noise band. Target and test splits were verified disjoint from each other
and from the training pools. Individual keep-fraction points are single-seed.

\paragraph{Scorer configurations.} \emph{TRAK:} per-example gradients are projected to
$1{,}024$ dimensions with a sparse Johnson--Lindenstrauss transform; scores follow the
primal form \textcolor{black}{$q_i^\top (Q^\top Q + \lambda I)^{-1} q_{\mathrm{tgt}}$},
z-scored per (model, checkpoint, projection) and averaged.
\textcolor{black}{Here, $q_i$ is the projected gradient for training example $i$,
$Q$ has rows $q_i^\top$, $q_{\mathrm{tgt}}$ is the projected target-loss gradient,
and $\lambda$ is the regularization parameter.}

\subsection{Cost--performance tradeoffs}
\label{app:selection-cost-performance}
Figure~\ref{fig:selection-cost-benefit-full} shows the full
dataweight-optimization trajectory.
Selection costs retain the original scoring-plus-one-retrain estimates, excluding
source-checkpoint generation and the keep-fraction search. 
Under this accounting, dataweighting first reaches the observed Pareto frontier at
$\Omega=3$ on \texttt{Human-77k} ($58.3$ GPU-hours, test loss $0.635$) and at
$\Omega=4$ on \texttt{Human-30k}+\texttt{Bot-10k} ($48.8$ GPU-hours, $0.637$).

\begin{figure}[H]
    \centering
    \includegraphics[width=0.84\linewidth]{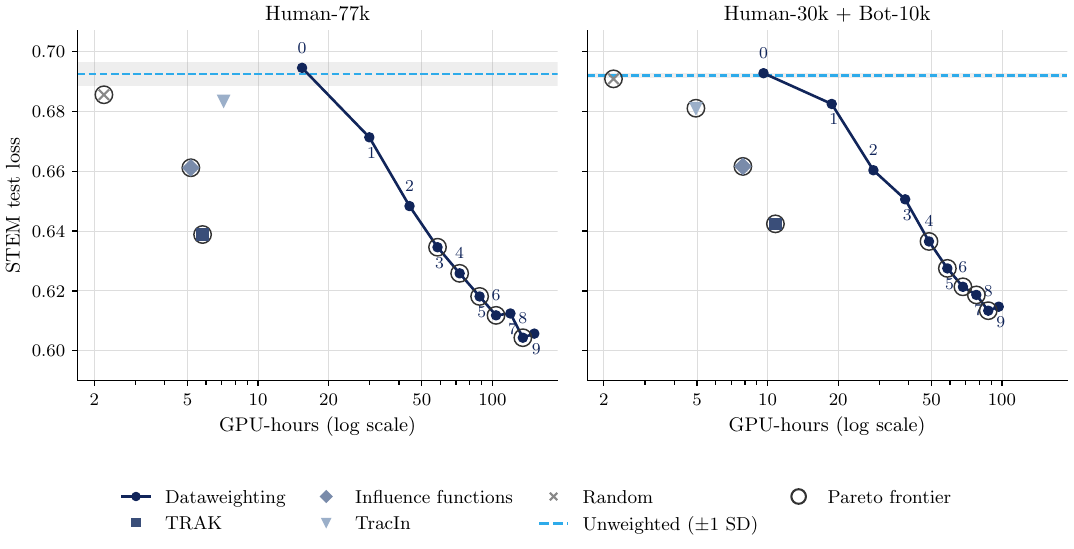}
    \caption{{\em Dataweighting's cost--performance trajectory on the \texttt{STEM}
    target.} Held-out test loss versus compute for two training pools; lower and left
    is better. Each dataweighting point is one completed meta-epoch, labelled by its
    zero-based index $\Omega=0,\ldots,9$, including the continuation run. At each
    budget, we report the test loss of the checkpoint with the lowest \textcolor{black}{selection-set loss}
    available so far. Baselines are shown at their best keep fraction, using the
    original scoring-plus-one-retrain cost estimates. Black rings mark the
    non-dominated points among the plotted observations. The grey band is $\pm 1$
    standard deviation over three unweighted seeds. Dataweighting requires no
    separate retraining stage.}
    \label{fig:selection-cost-benefit-full}
\end{figure}

\Needspace{0.60\textheight}
\subsection{Response-length audit}
\label{app:selection-length-audit}

We inspect the retained \texttt{Human-155k} pairs before repetition, using the
preference labels to identify the chosen and rejected responses. Length is the
number of characters across response turns before tokenization or truncation;
responses are identified by turn position, since some \texttt{STEM} responses
are stored with a user-role tag rather than an assistant-role tag.
Equal-length pairs remain in the denominator.
At $10\%$ retention, the fraction preferring the shorter response is
$80.6\%$ for TracIn, $57.2\%$ for influence functions, and $45.3\%$ for TRAK,
compared with $39.9\%$ for random selection and $40.3\%$ in the full pool.
Whitespace-delimited word counts give the same pattern
($80.1\%$, $56.6\%$, $44.3\%$, $39.8\%$, and $40.1\%$, respectively).
At $90\%$ retention, the character-based fractions return closer to the pool
distribution: $41.7\%$, $41.3\%$, and $40.1\%$ for TracIn, influence functions,
and TRAK. Their test losses also decrease from $1.739$, $1.764$, and $1.023$
at $10\%$ retention to $0.643$, $0.627$, and $0.619$ at $90\%$.
This descriptive audit identifies a selection-induced distribution shift,
not whether the original preferences were caused by length or whether length skew
causes the generalization degradation.

\begin{figure}[H]
    \centering
    \includegraphics[width=0.94\linewidth]{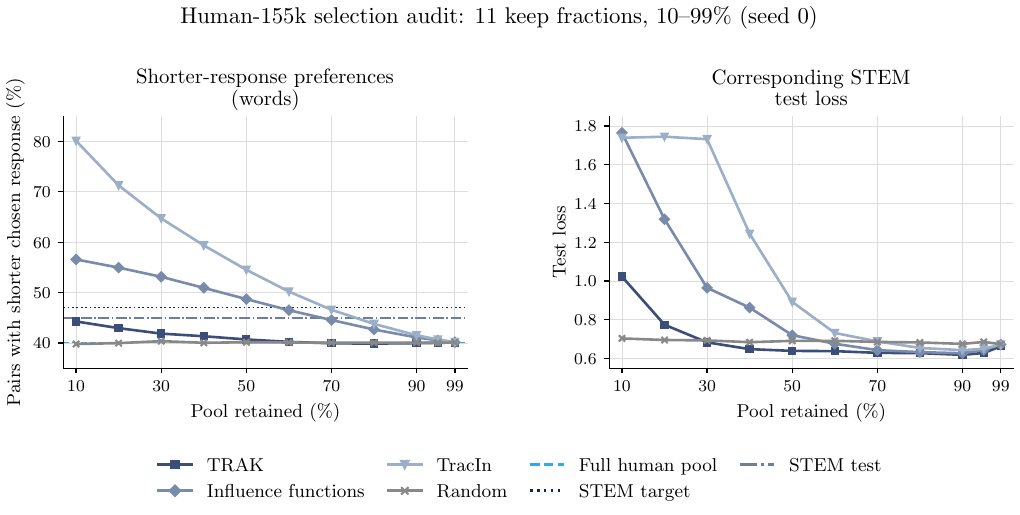}
    \caption{{\em Aggressive selection skews response-length preferences.}
    Audit of \texttt{Human-155k} subsets retained by TRAK, influence functions,
    TracIn, and random selection across eleven keep fractions (one seed).
    Left and middle: fraction of retained pairs whose chosen response is shorter
    than the rejected response, measured in characters and words, respectively;
    equal-length pairs remain in the denominator. Horizontal lines show the
    full human pool and the \texttt{STEM} target and test distributions.
    Right: corresponding test loss at the \textcolor{black}{checkpoint selected using the selection set}.
    TracIn and influence functions increasingly favor shorter-response preferences
    at low retention. These curves describe the selected data, rather than directly
    measuring the trained models' reliance on response length.}
    \label{fig:selection-length-bias}
\end{figure}

\Needspace{0.80\textheight}
\subsection{Comparisons across training pools}
\label{app:selection-pools}

\begin{figure}[H]
    \centering
    \includegraphics[width=0.72\linewidth]{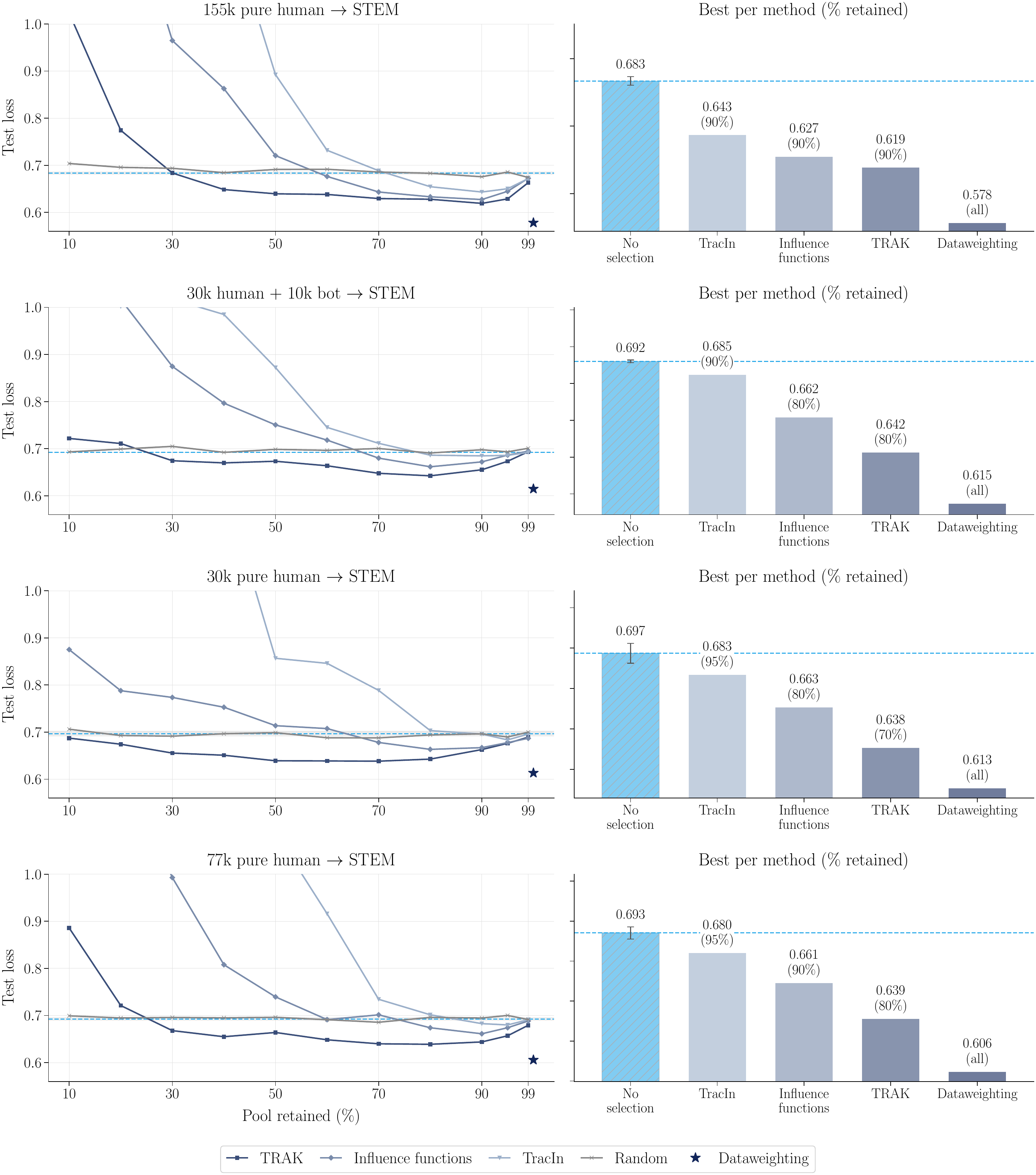}
    \caption{{\em The main-paper comparison, on every pool.}
    \textcolor{black}{Figure~\ref{fig:selection-baselines} repeated for all four training pools against the}
    \texttt{STEM} target, same protocol and same panels throughout. Rows 2--3 are a
    \emph{paired} pool: \texttt{Human-30k} is the poisoned pool with its $10{,}000$ bot
    rows removed and nothing else changed, so reading them adjacently separates what the
    contamination costs each scorer from what pool size costs it. \textcolor{black}{Dataweighting attains
    the lowest test loss on all four pools, followed by TRAK and influence functions.} Dataweighting's margin over the strongest baseline is
    $0.041 / 0.028 / 0.025 / 0.033$ (155k, poisoned 40k, clean 30k, 77k).}
    \label{fig:selection-all-pools}
\end{figure}

\clearpage
\section{Experimental setup}
\label{app:run-hyperparameters}
\providecommand{\hpsetupfig}[2]{\ifcsname r@#1\endcsname Fig.~\ref{#1}\else #2\fi}

Tables~\ref{tab:hp-inner}--\ref{tab:hp-parametric} summarize reward-model training,
dataweight optimization, and the parametric weighting model.
One training pass is one epoch over the pool. $K$ denotes segments per pass,
$O$ the configured number of outer passes, and $I$ the inner repeats per segment
(\texttt{inner\_repeats}). Each inner repeat replays the segment from its starting
model state, computes a metagradient, and updates the dataweights. Thus $I=1$
means one meta-optimization traversal per segment, not one model-optimizer step;
each traversal can contain multiple optimizer steps. Forward-only training uses
$I=0$. Autodiff recomputation and optional extra forward passes are not counted in $I$.

\begin{table*}[!t]
\centering
\caption{Reward-model training hyperparameters (inner optimization).}
\label{tab:hp-inner}
\begingroup\small\renewcommand{\arraystretch}{1.12}
\begin{tabularx}{\textwidth}{lX}
\toprule
Hyperparameter & Setting \\
\midrule
Base model & Qwen2.5-Instruct, 0.5B by default; larger models for transfer \\
Reward head & Linear scalar head on the \texttt{<|cls|>} hidden state \\
Training objective & Bradley--Terry loss \\
Optimizer & AdamW \\
Learning rate & $2\times10^{-6}$ \\
Batch size & 128 preference pairs by default \\
Epochs per inner-training pass & 1 \\
Mixed-precision training & FP32/BF16 \\
Maximum context length & 4,096 tokens \\
\bottomrule
\end{tabularx}
\endgroup
\end{table*}

\begin{table*}[!t]
\centering
\caption{Dataweight optimization settings and the corresponding figures.}
\label{tab:hp-outer}
\begingroup\small\setlength{\tabcolsep}{4pt}\renewcommand{\arraystretch}{1.15}
\begin{tabularx}{\textwidth}{>{\raggedright\arraybackslash}X>{\raggedright\arraybackslash}Xlcccc}
\toprule
Setup & Figure & Optimizer & LR & $K$ & $O$ & $I$ \\
\midrule
STEM, per-sample & \hpsetupfig{fig:transfer-bars}{Parametric comparison}(a) & AdamW & 0.1 & 10 & 10 & 1 \\
STEM, linear source & \hpsetupfig{fig:transfer-bars}{Parametric comparison}(a--c) & AdamW & 0.1 & 10 & 10 & 1 \\
Ranking, linear source & \hpsetupfig{fig:ranking-transfer-by-size}{Ranking transfer} & AdamW & 0.1 & 10 & 10 & 1 \\
Ranking, per-sample & \hpsetupfig{fig:ranking-distribution-splits}{Ranking distribution} & SignSGD & 0.1 & 10 & 10 & 1 \\
Selection, per-sample & \hpsetupfig{fig:selection-baselines}{Selection comparison}; \hpsetupfig{fig:selection-all-pools}{all pools} & SignSGD & 0.1 & 10 & 5--10 & 1 \\
Human-30k, full-pass & \hpsetupfig{fig:selection-cost-benefit}{Cost--performance} & SignSGD & 0.1 & 1 & 15 & 1 \\
Human-30k, single-step & \hpsetupfig{fig:selection-cost-benefit}{Cost--performance} & SignSGD & 0.1 & 234 & 15 & 1 \\
Forward-only transfer & \hpsetupfig{fig:transfer-bars}{Parametric comparison}(b,c); \hpsetupfig{fig:ranking-transfer-by-size}{ranking transfer} & --- & --- & 1 & 1 & 0 \\
\bottomrule
\end{tabularx}
\endgroup
\par\smallskip\begin{minipage}{\textwidth}\footnotesize
``Source'' denotes learning dataweights at 0.5B; destination models use fixed
weights. Selection budgets refer to individual original/continuation configurations.
\end{minipage}
\end{table*}

\begin{table*}[!t]
\centering
\caption{Parametric linear dataweight model used for
\hpsetupfig{fig:transfer-bars}{the parametric comparison} and
\hpsetupfig{fig:ranking-transfer-by-size}{ranking transfer}.}
\label{tab:hp-parametric}
\begingroup\small\renewcommand{\arraystretch}{1.12}
\begin{tabularx}{\textwidth}{lX}
\toprule
Component & Setting \\
\midrule
Embedding model & OpenAI \texttt{text-embedding-3-small}, frozen \\
Input texts & Prompt and the two candidate responses, embedded separately \\
Embedding dimensions & $3\times1{,}536=4{,}608$ per preference pair \\
Feature normalization & $\ell_2$ normalization of the concatenated feature vector \\
Weighting head & Single affine map $s_i=\mathbf{a}^{\top}\phi(z_i)+b$; no hidden layers \\
Trainable parameters & 4,609: 4,608 coefficients and one bias \\
Transfer & Freeze the learned head and apply it to destination-pool embeddings \\
\bottomrule
\end{tabularx}
\endgroup
\end{table*}

\endgroup

\clearpage
{\color{black}
\section{Stylized theoretical results for dataweighting}
\label{app:theory}
This appendix gives the formal results underlying Section~\ref{sec:theory}.
The constructions isolate three benefits of target-guided dataweighting:
correcting systematic synthetic-label bias, filtering severe corruption, and
adapting the training distribution to a specialized target. They concern global
optimization over fixed-size subsets, a restricted family of dataweights; they
do not assert that a particular metagradient algorithm finds the optimum.

\subsection{Shared model and learning procedures}
\label{app:theory-setup}

{\color{black}
We use overparameterized linear regression, where fitting a small target set
can leave prediction error on unseen examples. Inputs have two signal features
and $D$ weak nuisance features, so the full dimension is $d=D+2$:
\par}

\begin{equation}
    x=\bigl(g,\sqrt{\lambda/D}\,h\bigr),
    \qquad h\sim\mathcal{N}(0,I_D),\qquad \lambda>0.
\end{equation}

{\color{black}
Source inputs have $g\sim\mathcal{N}(0,I_2)$; target inputs have
$g\sim\mathcal{N}(0,T)$, with $T\succ0$. In both domains, $g$ and $h$ are
independent. Thus $\Sigma_S$ and $\Sigma_T$ are block diagonal with signal
blocks $I_2$ and $T$, respectively, and a shared nuisance block
$(\lambda/D)I_D$. Throughout, define
\begin{equation}
\label{eq:theory-biased-teacher}
    f^*(x)=g^\top\theta^*,
    \qquad
    \widehat f(x)=f^*(x)+\delta\,\operatorname{sign}(g_2)\mathbf{1}\{h_1>0\},
    \qquad \delta\ge0.
\end{equation}
Each result specifies the parameter choices in this construction.
We observe $n$ source examples and $n'$ target examples, all independent, with
$D+2>n+n'$. Source labels are $\widehat f(x)$; target labels are $f^*(x)$.
Write their design matrices as $X,Z$ and label vectors as $y,t$.

\paragraph{\normalcolor Learning procedures.}
Fix $1\le k\le n$. For each $k$-element source subset $S$, fit the minimum-norm interpolator
$\beta_S=X_S^\dagger y_S$. Dataweighting ($\mathrm{DW}$) returns the fit minimizing
$(n')^{-1}\|Z\beta_S-t\|^2$, breaking ties uniformly, without further fine-tuning.
This corresponds to weights $w_i=\mathbf{1}\{i\in S\}$.
The baselines fit either all source examples ($\mathrm{full+FT}$) or a uniformly
random $k$-subset ($\mathrm{random+FT}$), then fine-tune all parameters on the
same target set by squared-loss gradient flow to convergence.
The main text denotes the minimum of these two expected risks at dimension
$d$ by $\mathcal{R}_{FT}(n,n',k,d)$.

\paragraph{\normalcolor Risk and limits.}
We write the full parameter as $\beta=(\theta,v)\in\mathbb{R}^d$, where
$\theta\in\mathbb{R}^2$ contains the signal coefficients and
$v\in\mathbb{R}^D$ the nuisance coefficients. Its target population risk is
\par}

\begin{equation}
\label{eq:theory-population-risk}
    R_D(\beta)
    =(\theta-\theta^*)^\top T(\theta-\theta^*)
      +\frac{\lambda}{D}\|v\|^2.
\end{equation}

{\color{black}
For each procedure $A$, define
\par}

\begin{equation}
\label{eq:theory-limit-risk}
    \mathcal{R}_A(n)=\limsup_{D\to\infty}\mathbb{E}R_D(\beta_A).
\end{equation}

{\color{black}
Expectations include sampling and tie-breaking. We take $D\to\infty$ before
$n\to\infty$, keeping $n'$ fixed; the filtering result instead fixes $n,n',D$.
Each theorem specifies $\lambda$ and $T$, which may depend on $n$ but remain
fixed as $D\to\infty$.
\par}

\subsection{Reduction to two dimensions and validation control}
\label{app:theory-reduction}

{\color{black}
Let $G$ and $V$ contain the source and target signal features.
Standard Gaussian Gram-matrix limits and minimum-norm regression identities
give the limiting source coefficients
\begin{equation}
\label{eq:theory-ridge-reduction}
    \theta_S=(G_S^\top G_S+\lambda I_2)^{-1}G_S^\top y_S.
\end{equation}
Thus nuisance features induce an effective ridge penalty $\lambda$.
Fine-tuning from limiting source coefficients $\theta_0$ gives
\par}
\nopagebreak
\begin{equation}
\label{eq:theory-ft-reduction}
    \theta_{\mathrm{FT}}-\theta^*
    =\lambda(V^\top V+\lambda I_2)^{-1}(\theta_0-\theta^*).
\end{equation}

{\color{black}
The nuisance contribution to risk vanishes, leaving predictions that depend
on only two coefficients, regardless of how many subsets we search.
This yields a useful selection guarantee. Write
$R_T(\theta)=(\theta-\theta^*)^\top T(\theta-\theta^*)$.
For $n'\ge4$ and any reference subset $S_0$ chosen using source data alone,
with a definition independent of $D$,
\par}

{\color{black}
\begin{equation}
\label{eq:theory-validation-bound}
    \mathcal{R}_{\mathrm{DW}}(n)
    \le C_{n'}\,\mathbb{E}R_T(\theta_{S_0}),
    \qquad C_{n'}=\frac{4(n'-1)}{n'-3}\le12.
\end{equation}
\par}

{\color{black}
The constant $C_{n'}$ is independent of $T$, so the bound applies at each $n$
when the target covariance varies with the source sample size.

\begin{proof}[Proof of the selection bound]
Write $V=UT^{1/2}$ with $U$ standard Gaussian and $W=U^\top U$.
Comparing empirical and population quadratic forms, target-loss minimization
gives $R_T(\theta_{\mathrm{DW}})\le\kappa(W)R_T(\theta_{S_0})$ in the limiting
problem. Here $\kappa(W)=\lambda_{\max}(W)/\lambda_{\min}(W)$, and standard
Wishart moments give
$\mathbb{E}\kappa(W)\le\mathbb{E}[\operatorname{tr}W\operatorname{tr}W^{-1}]=C_{n'}$.
Independence of target and source data proves the bound in the limiting problem.

At fixed $n$, convergence is joint over the finitely many subsets.
Labels depend on nuisance features only through $h_1$, so standard
inverse-Wishart moment bounds on the remaining independent features give
uniform integrability of the risks. Fine-tuning orthogonally projects the
coefficient error, preserving this control. The bound therefore also holds
in expectation as $D\to\infty$ for the finite-$D$ selector, even if its
minimizing subset changes with $D$.
\end{proof}
\par}

\subsection{Correcting small synthetic-label bias}
\label{app:theory-bias}

{\color{black}
Take $T=I_2$, $\theta^*=e_1$, and fixed $\lambda,\delta>0$ in
\eqref{eq:theory-biased-teacher}.
Half the source labels are correct; the rest have a small systematic error
correlated with the irrelevant feature $g_2$. Averaging more source data
preserves this bias, while a growing clean subset allows consistent learning.
\par}

\begin{theorem}[Bias correction]
\label{thm:theory-bias}
Fix $0<\alpha<1/2$, $n'\ge4$, and $\lambda,\delta>0$, and set
$k=\lfloor\alpha n\rfloor$. Then
{\color{black}
\begin{equation}
\label{eq:theory-bias-result}
\begin{aligned}
    \mathcal{R}_{\mathrm{DW}}(n)&\longrightarrow0,\\
    \liminf_{n\to\infty}\min\{\mathcal{R}_{\mathrm{full+FT}}(n),
        \mathcal{R}_{\mathrm{random+FT}}(n)\}
        &\ge c_{\lambda,n'}\delta^2>0,
\end{aligned}
\end{equation}
\par}
where $c_{\lambda,n'}=(2\pi)^{-1}[\lambda/(\lambda+n')]^2$ is valid.
The two baselines have the same limiting risk.
\end{theorem}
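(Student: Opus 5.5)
The plan is to work entirely in the limiting two-dimensional problem given by the reductions \eqref{eq:theory-ridge-reduction} and \eqref{eq:theory-ft-reduction}. Dataweighting is handled through the selection bound \eqref{eq:theory-validation-bound}; the baselines are handled by computing the almost-sure limit of the pretrained coefficients and then lower-bounding the fine-tuning error. Throughout, $T=I_2$ and $\theta^*=e_1$, so $R_T(\theta)=\|\theta-e_1\|^2$.

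\emph{Dataweighting.} By \eqref{eq:theory-validation-bound} it suffices to exhibit a reference subset $S_0$, chosen from source data alone, with $\mathbb{E}R_T(\theta_{S_0})\to0$.
\begin{itemize}
\item \emph{Choice of $S_0$.} Take $S_0$ to be the first $k$ source indices with $h_1\le0$, if there are at least $k$ of them; otherwise take the first $k$ indices.
\item \emph{Probability of failure.} The number of clean points is $\mathrm{Bin}(n,1/2)$ and $k=\lfloor\alpha n\rfloor$ with $\alpha<1/2$. By Hoeffding, the failure event has probability at most $\exp\{-2n(1/2-\alpha)^2\}$.
\item \emph{Clean case.} Here $y_{S_0}=G_{S_0}e_1$ exactly, so \eqref{eq:theory-ridge-reduction} gives $\theta_{S_0}-e_1=-\lambda(G_{S_0}^\top G_{S_0}+\lambda I)^{-1}e_1$. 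Hence $R_T\le \lambda^2/\lambda_{\min}(G_{S_0}^\top G_{S_0})^2$. This tends to zero in expectation, because $G_{S_0}^\top G_{S_0}$ is a $2\times2$ Wishart matrix with $k\to\infty$ degrees of freedom, whose inverse moments are $O(k^{-2})$. The subset is selected using $h$ only, so $G_{S_0}$ is still i.i.d.\ Gaussian.
\item \emph{Failure case.} The ridge bound $\|\theta_S\|\le\|G_S^\top y_S\|/\lambda$ gives a risk whose second moment is polynomial in $k$. By Cauchy--Schwarz, its contribution is polynomial in $n$ times $\exp\{-n(1/2-\alpha)^2\}$, which tends to zero.
\end{itemize}
Combining the two cases gives $\mathcal{R}_{\mathrm{DW}}(n)\to0$.

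\emph{Baselines.} For full or random-$k$ pretraining, the number of fitted points $m\in\{n,k\}$ tends to infinity, so the strong law applies.
\begin{itemize}
\item \emph{Limit of the pretrained coefficients.} We have $G^\top G/m\to I_2$. The labels are $y=Ge_1+\delta\,\mathrm{sign}(g_2)\mathbf{1}\{h_1>0\}$, with $h_1$ independent of $g$, so
\[
G^\top y/m\to e_1+\tfrac{\delta}{2}\,\mathbb{E}|g_2|\,e_2=e_1+\tfrac{\delta}{\sqrt{2\pi}}e_2 .
\]
Since $\lambda/m\to0$, it follows that $\theta_0\to e_1+u$ almost surely, with $u=(\delta/\sqrt{2\pi})e_2$. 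This limit is the same for both baselines, so they have the same limiting risk.
\item \emph{Fine-tuning error.} Write $W=V^\top V$. By \eqref{eq:theory-ft-reduction} the risk is $F(\theta_0,W)=\|\lambda(W+\lambda I)^{-1}(\theta_0-e_1)\|^2$. Since $F$ is nonnegative and continuous in $\theta_0$, and $W$ is independent of the source data, Fatou gives $\liminf_n \mathbb{E}F\ge\mathbb{E}_W\|\lambda(W+\lambda)^{-1}u\|^2$.
\item \emph{Lower bound.} For any vector $u$,
\[
\|\lambda(W+\lambda)^{-1}u\|^2\ge\frac{\bigl(u^\top\lambda(W+\lambda)^{-1}u\bigr)^2}{\|u\|^2}.
\]
Apply Jensen to the convex map $x\mapsto\lambda/(\lambda+x)$ under the spectral measure of $W$ weighted by $u$. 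This gives $u^\top\lambda(W+\lambda)^{-1}u\ge\lambda\|u\|^2/(\lambda+q)$, where $q=u^\top Wu/\|u\|^2\sim\chi^2_{n'}$.
\item \emph{Averaging over $q$.} The map $x\mapsto(\lambda/(\lambda+x))^2$ is convex, so a second Jensen step with $\mathbb{E}q=n'$ gives
\[
\mathbb{E}_W\|\lambda(W+\lambda)^{-1}u\|^2\ge\Bigl(\frac{\lambda}{\lambda+n'}\Bigr)^2\frac{\delta^2}{2\pi}=c_{\lambda,n'}\delta^2 .
\]
\end{itemize}

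The step I expect to be most delicate is not the algebra but the order of limits. Each $\mathcal{R}_A(n)$ is a $\limsup_{D\to\infty}$ of finite-$D$ expected risks, so I must pass from those to the two-dimensional quantities above. For the baselines I only need a lower bound, so I would argue at fixed $n$ that $\beta_A$ converges in distribution to its limiting counterpart, and then use Fatou or lower semicontinuity. For dataweighting, the upper bound comes directly from \eqref{eq:theory-validation-bound}, which already includes the uniform-integrability argument.
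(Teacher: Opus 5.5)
Your argument for the two displayed claims is correct and follows the paper's proof closely. You use the same comparator (the first $k$ examples with $h_1\le 0$, with a Hoeffding-controlled fallback) plugged into \eqref{eq:theory-validation-bound}. You also identify the same limit $e_1+(\delta/\sqrt{2\pi})e_2$ for both pretrained baselines and finish with a Jensen bound on the fine-tuning factor.

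The one genuinely different step is that last bound. The paper uses rotational invariance of $W=V^\top V$ to write $\mathbb{E}\|\lambda(W+\lambda I_2)^{-1}u\|^2=\|u\|^2\cdot\frac12\mathbb{E}\operatorname{tr}[\lambda^2(W+\lambda I_2)^{-2}]$, then applies Jensen over the eigenvalues and over $\operatorname{tr}W$. You instead use Cauchy--Schwarz, Jensen over the $u$-weighted spectral measure, and the marginal $u^\top Wu/\|u\|^2\sim\chi^2_{n'}$. Your route needs only that one-dimensional marginal. The paper's computation yields the exact limiting constant $q_{\lambda,n'}$, which it then uses to assert a common limit.

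What is missing is the final clause of the statement: that the two baselines have the same limiting risk. You explicitly restrict yourself to lower bounds for the baselines (Fatou in $n$, Fatou or lower semicontinuity in $D$). A common almost-sure limit of $\theta_0$ does not by itself imply that $\mathcal{R}_{\mathrm{full+FT}}(n)$ and $\mathcal{R}_{\mathrm{random+FT}}(n)$ converge, let alone to the same value. For that you need uniform integrability.

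The paper gets it by proving mean-square convergence of the pretrained coefficients. On the well-conditioned event $\lambda_{\min}(G^\top G)\ge m/2$, the coefficient error has bounded moments. On the exponentially rare complement, the bound $\|\theta_0\|\le\|G^\top y\|/\lambda$ gives only polynomially growing moments, which the small probability absorbs. This is exactly the Cauchy--Schwarz device you already use for the dataweighting fallback. Since $\lambda(W+\lambda I_2)^{-1}$ is a contraction, $|F(a,W)-F(b,W)|\le\|a-b\|(\|a-e_1\|+\|b-e_1\|)$, so $L^2$ convergence of $\theta_0$ gives $\mathbb{E}F(\theta_0,W)\to\mathbb{E}F(e_1+u,W)$ for both baselines.

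Similarly, a lower bound does not suffice to identify $\mathcal{R}_A(n)=\limsup_{D\to\infty}\mathbb{E}R_D(\beta_A)$ with the two-dimensional quantity. You need the uniform-integrability control in $D$, which the paper obtains from inverse-Wishart moment bounds and the fact that fine-tuning orthogonally projects the coefficient error. Your remark that ``for the baselines I only need a lower bound'' is right for the displayed inequality but not for this clause.
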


{\color{black}
\begin{proof}
Put $b=\delta/\sqrt{2\pi}$. Since
$\mathbb{E}[g(\widehat f-f^*)]=(0,b)^\top$, both full-data and random-subset
fits converge to $e_1+be_2$. The convergence holds in mean square by Gaussian
covariance concentration: the squared coefficient error is bounded on
well-conditioned designs and is $O(1+N)$ otherwise, for a fit on $N$ examples;
the latter event has exponentially small probability.
Rotational invariance and Jensen's inequality give the fine-tuning risk factor
\begin{equation}
    q_{\lambda,n'}
    =\frac12\mathbb{E}\operatorname{tr}
       \bigl[\lambda^2(V^\top V+\lambda I_2)^{-2}\bigr]
    \ge\left(\frac{\lambda}{\lambda+n'}\right)^2.
\end{equation}
Both baselines therefore have limiting risk
$q_{\lambda,n'}b^2\ge c_{\lambda,n'}\delta^2$.

For dataweighting, use the first $k$ clean examples as a comparator, falling
back to the first $k$ examples if necessary. Cleanliness depends only on
$h_1$, so the clean signal features remain independent standard Gaussians.
Their ridge bias has norm at most one and vanishes as $k\to\infty$ with fixed $\lambda$.
Hoeffding's inequality bounds the fallback probability by $e^{-c_\alpha n}$,
where $c_\alpha=2(1/2-\alpha)^2$; its contribution to expected risk is
$O((1+k)e^{-c_\alpha n})$ by the same coefficient bound.
Thus the comparator's expected risk vanishes, and
\eqref{eq:theory-validation-bound} proves the claim. The selector itself
never observes the corruption indicators.
\end{proof}
\par}

\subsection{Filtering an unknown corrupted set}
\label{app:theory-filtering}

{\color{black}
Keep $T=I_2$ and $\theta^*=e_1$. Now hold $n,n',D,\lambda$ fixed and let
$\delta\to\infty$ in~\eqref{eq:theory-biased-teacher}.
Every contaminated subset's target loss eventually diverges as $\delta$ grows,
while clean fits are unaffected. Selection therefore rejects all corruption
whenever a clean subset exists.
\par}

\begin{theorem}[Filtering]
\label{thm:theory-filtering}
Fix finite $n,n',D$ with $n'\ge1$ and $D+2>n+n'$, and fix $\lambda>0$.
Let $0<\alpha<1/2$ and $k=\lfloor\alpha n\rfloor\ge1$. Then
\begin{equation}
\label{eq:theory-filtering-result}
\begin{aligned}
    \lim_{\textcolor{black}{\delta}\to\infty}\mathbb{P}\{\text{the selected subset is entirely clean}\}
       &=\mathbb{P}\{\operatorname{Bin}(n,1/2)\ge k\}\\
       &\ge1-e^{-c_\alpha n},
       \qquad c_\alpha=2(1/2-\alpha)^2.
\end{aligned}
\end{equation}
Conditional on the existence of a clean $k$-subset, the filtering probability
tends to one.
\end{theorem}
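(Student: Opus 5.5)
The plan is to couple all values of $\delta$ on one probability space: fix the covariates $(X,Z)$ and let only the source labels vary with $\delta$. Call source example $i$ \emph{clean} if $\mathbf{1}\{h_1^{(i)}>0\}=0$, so that $\widehat f(x^{(i)})=f^*(x^{(i)})$. Since $\operatorname{sign}(g_2)\neq0$ almost surely, the corrupted examples are a.s.\ exactly those with $h_1^{(i)}>0$. The number of clean examples $N_c$ is then $\operatorname{Bin}(n,1/2)$ and does not depend on $\delta$. Two cases follow.
\begin{itemize}
\item If $N_c<k$, every $k$-subset contains a corrupted point, so the selected subset is never clean, for any $\delta$.
\item If $N_c\ge k$, I will show that almost surely the selector picks a clean subset for all sufficiently large $\delta$.
\end{itemize}
Granting the second case, the indicator of ``selected subset is entirely clean'' converges a.s.\ to $\mathbf{1}\{N_c\ge k\}$. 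Bounded convergence then gives the limit $\mathbb{P}\{\operatorname{Bin}(n,1/2)\ge k\}$. The exponential bound comes from Hoeffding: $k/n\le\alpha<1/2$ gives $\mathbb{P}\{N_c<k\}\le\exp\{-2n(1/2-k/n)^2\}\le e^{-c_\alpha n}$. The conditional statement is immediate from the same a.s.\ convergence.

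For the key step, fix a subset $S$ and write $y_S=y_S^0+\delta c_S$. Here $y_S^0=X_S\beta^*$ (with $\beta^*=(\theta^*,0)$) are the clean labels, and $c_S$ has entries $\operatorname{sign}(g_2^{(i)})\mathbf{1}\{h_1^{(i)}>0\}$. Then $\beta_S=X_S^\dagger y_S^0+\delta X_S^\dagger c_S$, and the target loss of $S$ is
\[
  \tfrac{1}{n'}\bigl\|Z X_S^\dagger y_S^0-t+\delta\, Z X_S^\dagger c_S\bigr\|^2 .
\]
This is a quadratic in $\delta$ whose leading coefficient is $\|ZX_S^\dagger c_S\|^2/n'$.
\begin{itemize}
\item For a clean $S$, $c_S=0$, so the loss is a finite constant independent of $\delta$.
\item For a contaminated $S$, $c_S\neq0$ a.s. Because $k\le n<d$ and $X$ has a nondegenerate Gaussian law, $X_S$ has full row rank a.s. Hence $X_S^\dagger$ is injective and $u_S:=X_S^\dagger c_S\neq0$.
\end{itemize}

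The main obstacle is showing $Zu_S\neq0$ a.s. This is where independence of the target sample is needed. Conditionally on the source data, $u_S$ is a fixed nonzero vector, and each row of $Z$ is an independent $\mathcal{N}(0,\Sigma_T)$ vector with $\Sigma_T\succ0$. So $z_j^\top u_S\sim\mathcal{N}(0,u_S^\top\Sigma_T u_S)$ is nondegenerate and vanishes with probability zero; one row suffices because $n'\ge1$. There are only finitely many subsets, so a.s.\ every contaminated subset has loss growing like $\delta^2\|Zu_S\|^2/n'\to\infty$.

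On $\{N_c\ge k\}$ the minimum over clean subsets is finite and fixed, so for $\delta$ large every minimizer is clean. Ties can then occur only among clean subsets, so uniform tie-breaking does not affect cleanliness. This completes the a.s.\ convergence needed above.
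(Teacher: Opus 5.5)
Your proposal is correct and follows essentially the same route as the paper. Both arguments couple all values of $\delta$ and write $\beta_S(\delta)=\beta_S(0)+\delta X_S^\dagger r_S$. Both then use full row rank, together with the independent nondegenerate target rows, to get $\|Zu_S\|^2>0$ almost surely for every contaminated subset, and conclude via finiteness of the subset family, bounded convergence, and Hoeffding on the $\operatorname{Bin}(n,1/2)$ clean count. Your version additionally makes explicit details the paper leaves implicit: the injectivity of $X_S^\dagger$, conditioning on the source data, and why uniform tie-breaking can never select a contaminated subset once $\delta$ is large.
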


{\color{black}
\begin{proof}
Fix the source and target inputs across values of $\delta$. With
$r_i=\operatorname{sign}(g_{i2})\mathbf{1}\{h_{i1}>0\}$, linearity gives
$\beta_S(\delta)=\beta_S(0)+\delta u_S$, where $u_S=X_S^\dagger r_S$.
For a contaminated subset, full row rank implies $u_S\ne0$.
The independent Gaussian target inputs have positive-definite covariance, so
$\|Zu_S\|^2>0$ almost surely, even for $n'=1$. Its target loss therefore grows
quadratically with $\delta$; a clean subset's loss stays constant.
There are finitely many subsets, so every minimizer is eventually clean
exactly when at least $k$ clean examples exist. Bounded convergence and
Hoeffding's inequality for their $\operatorname{Bin}(n,1/2)$ count give the result.
\end{proof}
\par}

\subsection{Covariate shift with correct labels}
\label{app:theory-covariate}

{\color{black}
Set $\delta=0$, so $\widehat f=f^*$ and both source and target labels are correct.
Selecting data can still help by changing the learner's inductive bias.
Set $T=T_n=\operatorname{diag}(1,1/n)$, so the target variance in the second
coordinate shrinks as the source pool grows. Selecting examples with $g_1g_2>0$
correlates the signal features; the coefficients below make the limiting
ridge error point entirely along that second coordinate, giving vanishing
target risk. Set $\lambda=k$, so with fixed $n'$, fine-tuning makes a vanishing
correction to the signal coefficients as the source pool grows.
\par}

\begin{equation}
\label{eq:theory-covariate-construction}
    \rho=\mathbb{E}|g_1g_2|=\frac{2}{\pi},\qquad
    L=\sqrt{\rho^2+4},\qquad
    \theta^*=\frac{1}{L}\begin{pmatrix}\rho\\2\end{pmatrix}.
\end{equation}

{\color{black}
In particular, $\theta^*$ is a unit vector independent of $\alpha$ and $n$.
\par}

\begin{theorem}[Covariate shift]
{\color{black}
\label{thm:theory-covariate}
Fix $0<\alpha<1/2$ and $n'\ge4$. Set $k=\lfloor\alpha n\rfloor$,
$\lambda=k$, and $T=T_n=\operatorname{diag}(1,1/n)$, and use the parameter vector
in~\eqref{eq:theory-covariate-construction}. Then
\begin{equation}
\label{eq:theory-covariate-result}
\begin{aligned}
    \lim_{n\to\infty}\mathcal{R}_{\mathrm{DW}}(n)&=0,\\
    \lim_{n\to\infty}\min\{\mathcal{R}_{\mathrm{full+FT}}(n),
        \mathcal{R}_{\mathrm{random+FT}}(n)\}&\ge\frac{\alpha^2}{25}>0.
\end{aligned}
\end{equation}
\par}
For the same $\theta^*$ and procedures with $T=I_2$,
{\color{black}
\begin{equation}
\label{eq:theory-isotropic-result}
    \liminf_{n\to\infty}\mathcal{R}_{\mathrm{DW}}(n)
    \ge\left(\frac{\alpha}{1+\alpha}\right)^2
    =\lim_{n\to\infty}\mathcal{R}_{\mathrm{full+FT}}(n).
\end{equation}
\par}
\end{theorem}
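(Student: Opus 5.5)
The plan is to work entirely in the limiting two-dimensional problem of Appendix~\ref{app:theory-reduction}. With $\delta=0$ we have $y_S=G_S\theta^*$, so \eqref{eq:theory-ridge-reduction} gives a coefficient error
\[
\theta_S-\theta^*=-\lambda(G_S^\top G_S+\lambda I_2)^{-1}\theta^*=-(\widehat C_S+I_2)^{-1}\theta^*,
\qquad \widehat C_S=G_S^\top G_S/k,
\]
using $\lambda=k$. Fine-tuning multiplies this error by $\lambda(V^\top V+\lambda I_2)^{-1}$ via \eqref{eq:theory-ft-reduction}. Because $\lambda=k\to\infty$ while $n'$ is fixed, this factor tends to $I_2$ in $L^2$. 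So fine-tuning asymptotically does nothing, and every risk reduces to $R_{T_n}$ of the source fit.

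\emph{Baselines.} For a random $k$-subset, $\widehat C_S\to I_2$, so the error tends to $-\theta^*/2$. For the full pool, $G^\top G/k\to (n/k)I_2\to \alpha^{-1}I_2$, so the error tends to $-\tfrac{\alpha}{1+\alpha}\theta^*$. Under $T_n=\operatorname{diag}(1,1/n)$, the second coordinate's contribution is multiplied by $1/n$ and vanishes. The limiting risks are therefore $\tfrac14\theta_1^{*2}$ and $(\tfrac{\alpha}{1+\alpha})^2\theta_1^{*2}$, where $\theta_1^{*2}=\rho^2/(\rho^2+4)\approx0.092$. The minimum is at least $0.092/(1.5)^2\cdot\alpha^2\ge\alpha^2/25$. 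Passing from almost-sure limits to limits of expectations uses the same argument as in the proof of Theorem~\ref{thm:theory-bias}. The squared error is bounded on well-conditioned designs and is $O(1+N)$ on an event of exponentially small probability.

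\emph{Dataweighting.} I take as comparator $S_0$ the first $k$ source points with $g_1g_2>0$, falling back to the first $k$ points if fewer exist. The fallback probability is at most $e^{-c_\alpha n}$ by Hoeffding. This choice uses only source data and is independent of $D$, so \eqref{eq:theory-validation-bound} applies with a constant $C_{n'}\le12$ that is uniform in $T_n$. Conditional on $g_1g_2>0$, the vector $g$ has covariance $\begin{pmatrix}1&\rho\\\rho&1\end{pmatrix}$. Hence
\[
\widehat C_{S_0}+I_2\to\begin{pmatrix}2&\rho\\\rho&2\end{pmatrix},
\qquad
(\widehat C_{S_0}+I_2)^{-1}\theta^*\propto\begin{pmatrix}2&-\rho\\-\rho&2\end{pmatrix}\begin{pmatrix}\rho\\2\end{pmatrix}=\begin{pmatrix}0\\4-\rho^2\end{pmatrix}.
\]
The first coordinate of the error therefore tends to $0$, with fluctuations of order $O(k^{-1/2})$ in mean square. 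The second coordinate stays bounded and is weighted by $1/n$. So $\mathbb{E}R_{T_n}(\theta_{S_0})\to0$, and the validation bound gives $\mathcal{R}_{\mathrm{DW}}(n)\to0$.

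\emph{Isotropic claim \eqref{eq:theory-isotropic-result}.} For every $k$-subset, $G_S^\top G_S\preceq G^\top G$, so $\widehat C_S\preceq (G^\top G/k)$. The right-hand side has $\lambda_{\max}\to n/k\to1/\alpha$ uniformly over all subsets. It follows that
\[
\|(\widehat C_S+I_2)^{-1}\theta^*\|\ge \frac{1}{1+\lambda_{\max}(G^\top G)/k}\;\longrightarrow\;\frac{\alpha}{1+\alpha}
\]
simultaneously for all subsets, including the selected one. With $T=I_2$ the risk is the squared norm, which gives the $\liminf$ bound. The equality with $\mathcal{R}_{\mathrm{full+FT}}$ is the full-pool computation above with $T=I_2$.

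I expect the main obstacle to be making the limit interchanges rigorous. Two pieces are needed: $L^2$ control showing the fine-tuning factor tends to $I_2$ when $\lambda=k$ grows, and uniform integrability, as $D\to\infty$, of the risk of the data-dependent selected subset. My first attempt would reuse the inverse-Wishart moment bounds already cited for \eqref{eq:theory-validation-bound}. The bound $\|\lambda(V^\top V+\lambda)^{-1}\|\le1$ handles the fine-tuning factor, combined with dominated convergence, since $\lambda^{-1}V^\top V\to0$ almost surely.
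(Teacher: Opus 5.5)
Your proposal is correct and follows the paper's proof essentially step for step. It uses the same ridge reduction with $\lambda=k$ and the fine-tuning map tending to $I_2$; the same baseline limits $\theta^*/(1+\alpha)$ and $\theta^*/2$; the $g_1g_2>0$ comparator with $(B+I_2)^{-1}\theta^*=e_2/L$ fed into \eqref{eq:theory-validation-bound}; and the uniform lower bound from $G_S^\top G_S\preceq G^\top G$ in the isotropic case. The paper's one simplification is that with $\delta=0$ every coefficient error $-(\widehat C_S+I_2)^{-1}\theta^*$ has norm at most one, before and after fine-tuning, so plain bounded convergence replaces your appeal to the tail argument of Theorem~\ref{thm:theory-bias}. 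Likewise, the $D\to\infty$ control of the selected subset that you flag as the main obstacle is already built into \eqref{eq:theory-validation-bound}, and the isotropic lower bound needs only Fatou's lemma.
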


{\color{black}
\begin{proof}
The source fits converge to $\theta^*/(1+\alpha)$ and $\theta^*/2$ for full-data
and random-subset training, respectively. Couple the target signal matrices as
$V_n=UT_n^{1/2}$ with a fixed standard Gaussian $n'\times2$ matrix $U$.
Since $T_n\preceq I_2$, we have $\|V_n^\top V_n\|_{\mathrm{op}}
\le\|U\|_{\mathrm{op}}^2$. The fine-tuning error map in
\eqref{eq:theory-ft-reduction} therefore tends to $I_2$ as $\lambda=k\to\infty$.
Their coefficient errors have norm at most one before and after fine-tuning;
with $T_n=\operatorname{diag}(1,1/n)$, bounded convergence gives
\begin{equation}
\label{eq:theory-covariate-baselines}
    \mathcal{R}_{\mathrm{full+FT}}(n)
       \longrightarrow\left(\frac{\alpha}{1+\alpha}\right)^2
           \frac{\rho^2}{L^2},
    \qquad
    \mathcal{R}_{\mathrm{random+FT}}(n)
       \longrightarrow\frac{\rho^2}{4L^2}.
\end{equation}
The full-data limit is smaller and equals
$\alpha^2/[(1+\alpha)^2(1+\pi^2)]\ge\alpha^2/25$.

For a comparator, take the first $k$ examples with $g_1g_2>0$, falling back to
the first $k$ examples if necessary. The fallback probability is exponentially
small, and the selected signal covariance and limiting error are
\begin{equation}
    B=\begin{pmatrix}1&\rho\\\rho&1\end{pmatrix},
    \qquad
    \theta_{S_0}-\theta^*\longrightarrow-(B+I_2)^{-1}\theta^*
       =-\frac{e_2}{L}.
\end{equation}
Here $B=\mathbb{E}[gg^\top\mid g_1g_2>0]$, and
$(B+I_2)e_2=L\theta^*$ explains the chosen coefficients.
Write $e_n=\theta_{S_0}-\theta^*$. Its norm is at most one even on fallback,
and $e_{n,1}\to0$, so bounded convergence gives
\begin{equation}
    \mathbb{E}R_{T_n}(\theta_{S_0})
    =\mathbb{E}\bigl[e_{n,1}^2+n^{-1}e_{n,2}^2\bigr]
    \le\mathbb{E}e_{n,1}^2+n^{-1}\longrightarrow0.
\end{equation}
Applying \eqref{eq:theory-validation-bound} at each $n$, with the same constant
$C_{n'}$, proves $\mathcal{R}_{\mathrm{DW}}(n)\to0$.

Finally, take $T=I_2$. Write $A_S=G_S^\top G_S/n$,
$A_{\mathrm{all}}=G^\top G/n$, and $a_n=k/n$.
Since $A_S\preceq A_{\mathrm{all}}\to I_2$, every subset's ridge error satisfies
\begin{equation}
    \|\theta_S-\theta^*\|^2
    =a_n^2\|(A_S+a_n I_2)^{-1}\theta^*\|^2
    \ge\left(\frac{a_n}{\lambda_{\max}(A_{\mathrm{all}})+a_n}\right)^2
    \longrightarrow\left(\frac{\alpha}{1+\alpha}\right)^2.
\end{equation}
This uniform bound also holds for target-selected subsets. Taking the
finite-$D$ limit and applying Fatou's lemma as $n\to\infty$ proves the lower
bound. For full-data training, the coefficient error tends to
$-\alpha\theta^*/(1+\alpha)$, and fine-tuning's error map tends to $I_2$.
Since $\|\theta^*\|=1$, bounded convergence gives the matching risk limit
$[\alpha/(1+\alpha)]^2$.
\end{proof}

\paragraph{\normalcolor Connection to the informal statements.}
For each sufficiently large $n$, choose $d(n)>n+n'$ so that the
finite-dimensional expected risk of dataweighting is at most
$\mathcal{R}_{\mathrm{DW}}(n)+1/n$, and each baseline's expected risk is within
$1/n$ of its $D\to\infty$ limit. Such a choice follows from the definition of
$\limsup$ and the convergence of the baseline risks proved above.
The limits as $n\to\infty$ then give the statements in the main text.
For bias correction, taking $\lambda=1$ gives
$c_{n'}=1/[2\pi(1+n')^2]$.

For covariate shift, an orthogonal rotation of the two signal coordinates
maps the unit vector $\theta^*$ to $e_1$, giving $\widehat f=f^*=x_1$.
The source signal covariance remains $I_2$, while the target signal
covariance rotates with the coordinates. Minimum-norm fitting, gradient-flow
fine-tuning, subset selection, and population risks are preserved by this
change of basis.
\par}

\par}

\end{document}